\newtheorem{theorem}{Theorem}
\newtheorem{lemma}{Lemma}
\newtheorem{remark}{Remark}
\newtheorem{proposition}{Proposition}
\newtheorem{corollary}{Corollary}
\newtheorem{definition}{Definition}
\newtheorem{assumption}{Assumption}
\newcommand{\EE}{\mathbb{E}}
\newcommand{\NN}{\mathbb{N}}
\newcommand{\PP}{\mathbb{P}}
\newcommand{\RR}{\mathbb{R}}
\newcommand{\norm}[1]{\left\lVert#1\right\rVert}
\definecolor{myp}{cmyk}{0, 0.7808, 0.4429, 0.1412}
\newcommand{\mycolor}{\textcolor{myp}}
\newcommand{\dtv}{{\sf TV}}
\newcommand{\nrej}{{N_{\text{rej}} }}
\title{A Theoretical Perspective for Speculative Decoding Algorithm}
\author{
	\;\;\;\;
	Ming Yin\thanks{Correspondence to: \texttt{my0049@princeton.edu}, \texttt{mengdiw@princeton.edu}.  }\\
	\;\;\;\; Princeton University \\
	\;\;\;\;  \texttt{my0049@princeton.edu} \\
	\And 
	\;\;\;\;\;\;
	Minshuo Chen \\
	\;\;\;\;\;\; Northwestern University\\
	\;\;\;\;\;\;\texttt{minshuo.chen@northwestern.edu}\\
	\And 
	Kaixuan Huang\\
	Princeton University \\
	\texttt{kaixuanh@princeton.edu} \\
	\And 
	Mengdi Wang \\
	Princeton University \\
	\texttt{mengdiw@princeton.edu}\\
}
\begin{document}

\maketitle

\begin{abstract}
 Transformer-based autoregressive sampling has been the major bottleneck for slowing down large language model inferences. One effective way to accelerate inference is \emph{Speculative Decoding}, which employs a small model to sample a sequence of draft tokens and a large model to validate. Given its empirical effectiveness, the theoretical understanding of Speculative Decoding is falling behind. This paper tackles this gap by conceptualizing the decoding problem via markov chain abstraction and studying the key properties, \emph{output quality and inference acceleration}, from a theoretical perspective. Our analysis covers the theoretical limits of speculative decoding, batch algorithms, and output quality-inference acceleration tradeoffs. Our results reveal the fundamental connections between different components of LLMs via total variation distances
 and show how they jointly affect the efficiency of decoding algorithms.
\end{abstract}

\section{Introduction}
The recent surge of scaling Transformer models has led to the flourishing of AI, where success has been witnessed in wide areas such as natural language \cite{touvron2023llama,achiam2023gpt}, computer vision \cite{dosovitskiy2020image,han2022survey}, video generations \cite{arnab2021vivit,ho2022imagen}, and robotics \cite{brohan2022rt,shridhar2023perceiver}. In the meantime, the decoding process also becomes more and more time-consuming as the model size scales up. This is mainly due to the autoregressive nature of Transformers, where each generated token also serves as the input for future generations. As a result, decoding $T$ tokens would take $T$ forward passes of the full model.

A recent effort to tackle this challenge is \emph{speculative decoding} (SD) \cite{chen2023accelerating,leviathan2023fast}, where the autoregressive sampling is performed on a small draft model and the large language model verifies tokens generated by draft model to decide whether it should be accepted/rejected. Once a token is rejected, the generation process will start from the most recently accepted token, until a full response is completed. Speculative decoding achieves 2-2.5$\times$ LLM inference speedup empirically, while preserving the quality of generation.

Subsequently, numerous studies \cite{miao2023specinfer,liu2023online,kim2023speculative,zhou2023distillspec} have expanded this methodology, enabling further inference acceleration. Intuitively, for speculative decoding, when the generation distribution of small model $p$ and large model $q$ are close to each other, decoding is faster (since less rejection occurs), and when the distribution overlap between $p$ and $q$ is small, the opposite happens. However, a precise understanding of inference accelerating given the small model $p$ and large model $q$ remains elusive. This motivates us to ask the following question:

  \emph{What is the fundamental limit for inference acceleration via speculative decoding? In addition, what is the best trade-off between inference acceleration and output quality for speculative decoding?}



\begin{figure}[H]
 \begin{subfigure}{0.5\textwidth}
   \centering
    \includegraphics[width=1\linewidth]{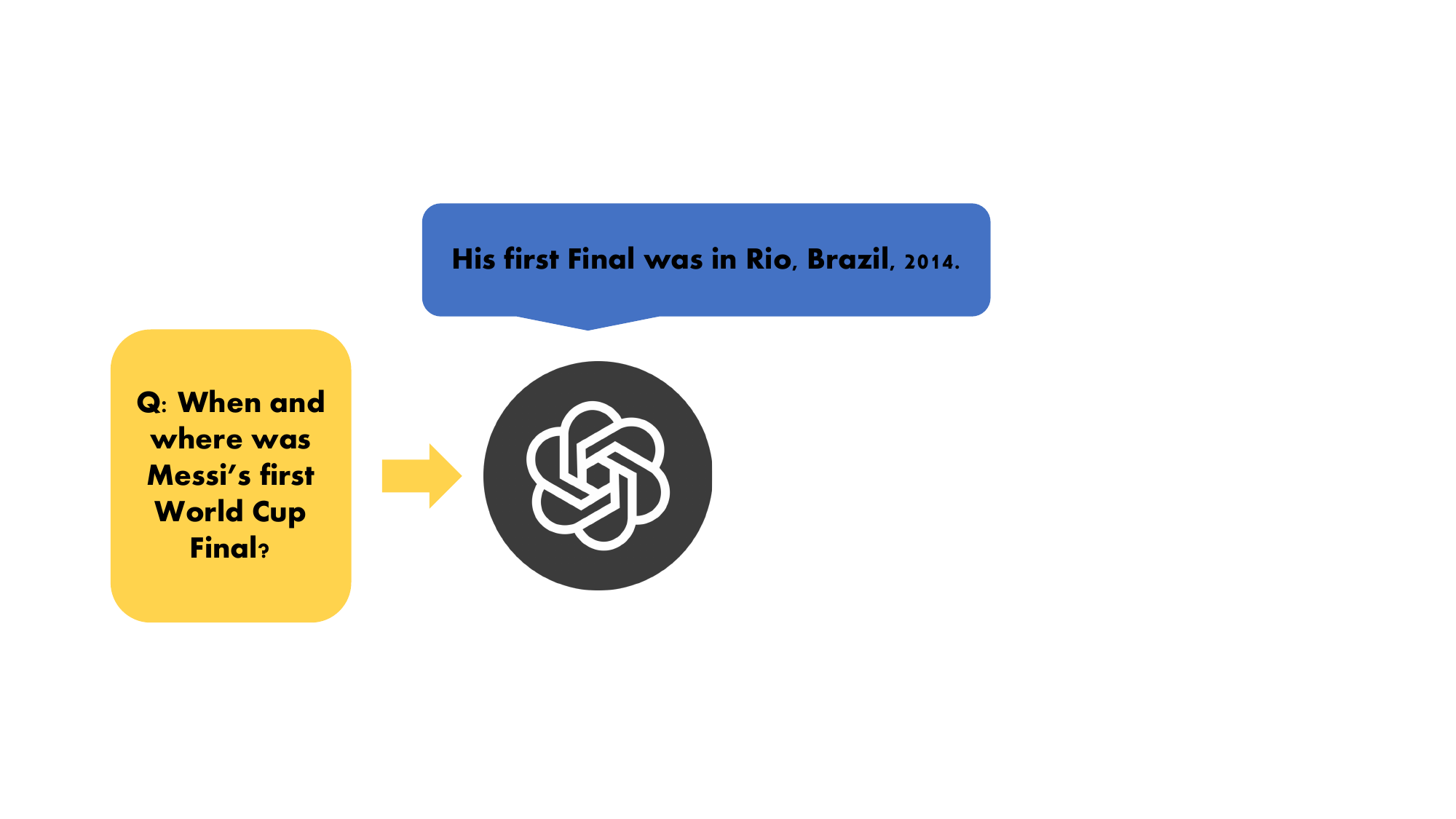}
    \end{subfigure}
  \begin{subfigure}{0.5\textwidth}
   \centering
    \includegraphics[width=1\linewidth]{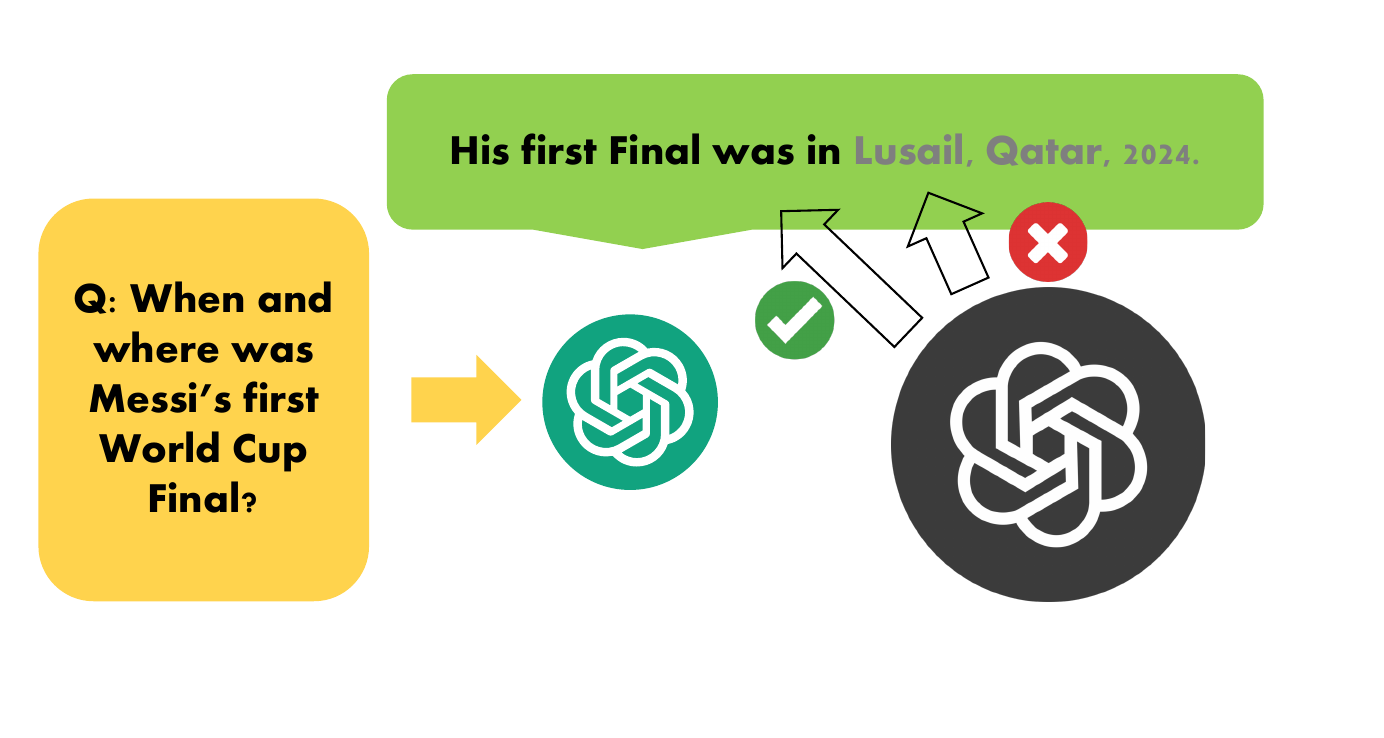}
    \end{subfigure}
  \caption{Left: Standard Auto-Regressive Decoding (Algorithm~\ref{alg:auto_sampling}) v.s. Right: Speculative Decoding (Algorithm~\ref{alg:speculative_decoding}), where a large model is used to validate the responses of the small model.}
  \label{fig:illur}
\end{figure}

In this paper, we answer these questions from the theoretical lens. Our contributions are summarized as follows. 

 We formalize the decoding problem through the Markov Chain abstraction that establishes the theoretical setup. We draw the connection 
    \textbf{$
\text{runtime}  = \text{\# rejections}
$} and use it to measure efficiency. We derive the exact formula, fully characterized by distribution $p$ and $q$, for the expected rejections $\EE[\nrej]$ for Speculative Decoding  (Theorem~\ref{thm:exp_rej}). This renders a theoretical reference for understanding the acceleration rate $T/\EE[\nrej]$. 

Next, to understand whether Speculative Decoding can be further improved, we generalize it to a class of rejection-based algorithms~\ref{alg:general_sampling}  
where probability $b_t$ and distribution $\mathcal{P}_t$ can be customized. We prove in Theorem~\ref{thm:lower_main} that any unbiased algorithm cannot have fewer rejections than Speculative Decoding. This indicates its optimality among the class, and having fewer rejections needs to suffer quality loss or requires extra information.

Furthermore, we consider a batch version of Speculative Decoding (Algorithm~\ref{alg:batched_decoding}) that utilizes multiple draft sequences. We show our batch algorithm is unbiased. We derive the expected rejections that is fully expressed by $p$ and $q$ and exhibit the improvement over non-batch version in Theorem~\ref{thm:exp_rej_batch}. We provide examples and detailed discussion to explain how our theory characterize the improvement.


    In section~\ref{sec:pf}, we shift from unbiased algorithms and study the tradeoff between \emph{inference cost} and \emph{quality degradation}. 
    We formulate this into an optimization model \eqref{eqn:objj}. Theorem~\ref{prop:pareto} established a linear Pareto front that characterizes the tradeoff between inference cost and quality degradation (Figure~\ref{fig:tradeoff}). A simple experiment in Section~\ref{sec:exp} is also consistent with our theoretical finding.


Last but not least, our technical results involve novel analysis, for instance, the design of $\mathcal{V}_+,\mathcal{V}_-$ in the lower bound proof \ref{app:lower} and the iterative computation for $f$ in \ref{app:bsd_er}. They are the first of its kind and consist of our technical contributions. We provide a proof sketch section in Appendix~\ref{app:sketch}.

\subsection{Related works}

\textbf{Speculative Decoding and its applications.} Speculative execution, where performing speculative work can expedite computation on parallel machines by allowing certain tasks to commence before their necessity is confirmed, can date back to \cite{burton1985speculative,gabbay1996speculative}. Recently, \cite{chen2023accelerating,leviathan2023fast} formalize this idea with rejection sampling based design for LLM Decoding and achieve multiple-time inference acceleration compared to vanilla auto-regressive decoding. There are fruitful studies \cite{xia2023speculative,miao2023specinfer,liu2023online,sun2023spectr,kim2023speculative,zhou2023distillspec,spector2023accelerating,mitchell2023emulator,he2023rest,bae2023fast,su2023synergy,ahn2023spectr++,santilli2023accelerating,xu2023llmcad,chen2023cascade,xia2024unlocking,yang2024multi,qian2024bass,sun2024triforce,bergner2024think,yan2024decoding,wang2024minions, huang2024specdec++} since then, and they improve speculative decoding from different angles such as online updating \cite{liu2023online}, multiple candidates \cite{yang2024multi}, retrieval technique \cite{he2023rest}, Multimodality \cite{gagrani2024speculative} or even decoding without draft models \cite{fu2024break,bhendawade2024speculative}.

\textbf{Theoretical endeavor for Speculative Decoding.} There are also works that study the theoretical properties for speculative decoding (SD). In particular, \cite{sun2023spectr} considers speculative decoding from the optimal transport perspective and show it is optimal in the single token regime. It further extends to the $k$ multiple draft token setting and formulates the optimal transport solution via linear programming with exponential in $k$ computation time. An approximate sequential selection algorithm is also proposed with linear time. \cite{ahn2023spectr++} further proposes the improved plan, and \cite{sun2024optimal} extends \cite{sun2023spectr} to the block-level optimal transport for SD. \cite{su2023synergy} investigates the synergy between draft length and batch size for Speculative Decoding and formulate the optimal speculation length as the root of a polynomial equation. \cite{miao2023specinfer} proposes the SpecInfer, a batch algorithm that uses small speculative models to form the token tree, and proves its output matches the distribution of the large model. \cite{yang2024multi} considers batch speculative decoding without replacement to avoid repeatedly sampling rejected tokens and proves it keeps the decoding quality. Nevertheless, the findings for inference acceleration of these works are mostly empirical, lacking theoretical guarantees. 









\begin{minipage}{0.44\textwidth}
\begin{algorithm}[H]
\caption{Speculative Decoding \cite{chen2023accelerating,leviathan2023fast}}
\label{alg:speculative_decoding}
\begin{algorithmic}[1]
\STATE{\bf Input}: Set probability $b_t=\min\{1,\frac{q_t}{p_t}\}$ and the distribution $\mathcal{P}_t=[q_t-p_t]_+$ in Algorithm~\ref{alg:general_sampling}.
\STATE{\bf Require}: $p_t(\cdot):=p_t(\cdot|x_{1:n-1}, \tilde{x}_{n:t-1})$, $q_{t}(\cdot):=q_{t}(\cdot|x_{1:n-1}, \tilde{x}_{n:t-1})$. $\forall t\geq n$.
\FOR{$t = n:T$}
\STATE{Sample $r \sim {\sf Uniform}[0, 1]$.}
\IF{$r \leq \min\left\{1, \frac{q_{t}(\tilde{x}_{t})}{p_{t}(\tilde{x}_{t})} \right\}$}
\STATE{Accept with $x_{n} = \tilde{x}_t$. $n\leftarrow n+1$.}
\ELSE
\STATE{Sample $x_{n} \sim \left[q_{t} - p_{t}\right]_{+}(\cdot )$.}
\STATE{$n\leftarrow n+1$. Break.} \STATE{//\;Recall $\left[q_{t} - p_{t}\right]_{+}$ in Section~\ref{sec:back}} 
\ENDIF
\ENDFOR
\end{algorithmic}
\end{algorithm}
\end{minipage}%
\hfill
\noindent\begin{minipage}{0.55\textwidth}
\begin{algorithm}[H]
\caption{Framework for Rejection-based Decoding}
\label{alg:general_sampling}
\begin{algorithmic}[1]
\STATE{\bf Init}: Horizon $T$, models $q_t$, $p_t$. Lookahead $K = T$. Prompt $x_0$. $n=0$.
\STATE{\bf Require}: Probability $b_t$, distribution $\mathcal{P}_t$. 
\WHILE{$n < T$}
\FOR{$t = n:T$}
\STATE{Sample drafts $\tilde{x}_t \sim p_{t}(\cdot | x_{1:n-1}, \tilde{x}_{n:t-1})$.}
\ENDFOR
\STATE{Obtain the target logits \emph{in parallel} for $\tilde{x}_{n:T}$ as
$
q_{n}(\cdot | x_{1:n-1}), \;\; \dots,\;\;  q_{T}(\cdot | x_{1:n-1}, \tilde{x}_{n:T}).
$
\FOR{$t = n:T$}\label{line:reject_begin}
\STATE{\quad Accept $x_{n} = \tilde{x}_t$ with prob. $b_t$. $n\leftarrow n+1$.}
\STATE{Else REJECTION:}
\STATE{\quad Sample $x_{n}\sim $ distribution $\mathcal{P}_{t}$. $n\leftarrow n+1$. Break.}
\ENDFOR \label{line:reject_end}}
\ENDWHILE
\end{algorithmic}
\end{algorithm}
\end{minipage}%

\section{Preliminaries}
\subsection{Background for decoding problems}\label{sec:back}

In this section, we provide the mathematical formulation for decoding problems using Markov Chains, and we explain auto-regressive models and speculative decoding algorithm based upon that.

\textbf{A Markov Chain Model for Decoding.} 
We denote $x_0$ as the prompt.\footnote{We use the abstraction $x_0:=(y_1,y_2,\ldots,y_{n_{\text{prompt}}})$ with $y_i$'s being the tokens in the prompt.} $x_n$ is the $n$-th token output and $x_{1:T}$ is the trajectory output by the algorithm with $T$ to be the fixed decoding horizon.\footnote{In general, $T$ is a random variable. However, we can append {\sf [EOS]} tokens to keep the output length fixed.}. Then any decoding algorithm can be characterized by a Markov Chain: state at $t$ is described by history $x_{0:t}$, the initial state is $x_0$; the state transition $P_t$ maps state $x_{0:t}$ to state $x_{0:t+1}$. In the context of decoding problem, the transition matrix is defined as $P(x_{0:t+1}|x_{0:t}):= p(x_{t+1}|x_{1:t})$. In particular, we use $p$ to denote the (conditional) distribution for the \emph{draft model} that resembles small speculative model, and $q$ for the \emph{target model} that represents large language model. We use $[q-p]_+$ to denote the normalized distribution for $\max\{0,q(x)-p(x)\},\forall x\in\mathcal{V}$ with $\mathcal{V}$ being the token vocabulary. We also denote $\bar{\EE}_{x\sim f}[g]:=\sum_x f(x)g(x)$.



\textbf{Auto-Regressive Decoding.} Consider sampling a trajectory $x_{1 : T}$ from an auto-regressive model, where for the given $x_{1:n-1}$, the next token $x_{n}$ follows the conditional distribution $q$. 
This decoding mechanism (Algorithm~\ref{alg:auto_sampling}) is the prototype for Transformer-based LLMs (\emph{e.g.} GPT-4). As mentioned in \cite{qian2024bass}, the accuracy performance of Transformer-based LLMs has been shown to scale with model size, with larger models demonstrating improved capabilities \cite{kaplan2020scaling}. However, this improvement comes at the cost of higher latency during inference and increased computational requirements.


\textbf{Speculative Decoding.} Different from large models, small models are usually much faster at the inference stage. Consequently, we can use a small model to perform auto-regressive sampling and assign large model as a verifier, where the goal of the large model is to check whether the token sampled by the small model should be accepted/rejected. Concretely, this procedure can be summarized into the following three steps:
\begin{itemize}
    \item \emph{Draft sampling}: given the verified tokens $x_{1:n-1}$, the draft model obtains $K$ sequential candidate tokens $\tilde{x}_{n:n+K-1}$ via sampling from $p(\cdot|x_{1:n-1},\tilde{x}_{n:n+i})$, $i\in[ K-1]$; 
    \item \emph{Conditional score computation}: given $\tilde{x}_{n:n+K-1}$, computing the logits of the $K$ tokens $q(\cdot|x_{1:n-1},\tilde{x}_{n:n+i})$, $i\in[ K-1]$ \emph{in parallel};
    \item \emph{Token validation}: Accept the candidate token $\tilde{x}_t$ (as $x_n$) with some probability $0\leq b_t\leq 1$. If accepted, continue to validate the next candidate $\tilde{x}_{t+1}$, otherwise reject and sample $x_n$ from some designed distribution $\mathcal{P}_t$. 
\end{itemize}
The above process repeats until $x_{1:T}$ are generated, and the whole algorithm is summarized as the general rejection-based decoding~\ref{alg:general_sampling}. Specifically, \emph{Speculative Decoding} \cite{chen2023accelerating,leviathan2023fast} designs $b_t(\tilde{x}_t):=\min\{1,\frac{q_t(\tilde{x}_t)}{p_t(\tilde{x}_t)}\}$ and distribution $\mathcal{P}_t(\cdot):=[q_t-p_t]_+(\cdot)$ (see Algorithm~\ref{alg:speculative_decoding} and Figure~\ref{fig:illur}).

\subsection{Problem Setup}

Speculative Decoding has two key merits. First, it maintains \textbf{output quality}, meaning the output distribution by the algorithm is identical to the output distribution of the large model $q$, and we term it \emph{distribution unbiasedness}. Second, it has \textbf{fast inference} property since the auto-regressive sampling is performed on the small model and the target model only verifies. With (possibly) multiple draft tokens being accepted very round, Speculative Decoding can be much faster than direct decoding on large models, and its inference is bottlenecked by the parallel score computation $q(\cdot|x_{1:n-1},\tilde{x}_{n:n+i})$, $i\in[ T-n]$. To highlight this, we defined it as the oracle call and make an assumption based on that. 

\begin{definition}\label{def:oracle}
We define one trigger of obtaining logits for $\tilde{x}_{n:T}$ in Step 7 of \ref{alg:general_sampling} as one \textbf{Oracle call}.
\end{definition}

\begin{assumption}\label{assum1}
We assume that: (1) compared to the large model, the computational cost of the draft/small model is negligible. (2) each oracle call has runtime $O(1)$.
\end{assumption}

\begin{remark} We assume the above only for the theoretical cleanliness. With negligible draft model, the lookhead $K=T$ in Algorithm~\ref{alg:general_sampling}. In other words, given $x_{1:n-1}$, instead of sampling the next $K$ draft tokens $\tilde{x}_{n:n+K-1}$, we are allowed to sample until the end, \emph{i.e.} $\tilde{x}_{n:T}$. In practice, Assumption~\ref{assum1}(1) also holds true in many cases. One example of negligible-cost model $c\approx 0$ is \emph{n-gram} models, and the empirical evidence in \cite{leviathan2023fast,ou2024lossless} shows n-gram draft model speeds up inference pretty well. In addition, for summarization tasks where long sequences are likely to repeat, any draft model that reuses tokens from the context with a matching prefix, is also cost negligible. Assumption~\ref{assum1}(2) is also a standard abstraction, since parallelization consumes (roughly) equal computation as for a single logit. It will consume more memory, but such aspect is beyond the scope of our theoretical study.
\end{remark}

 \textbf{Metric for measuring efficiency.} Desired algorithms should preserve the output quality and be efficient for decoding. To formalize, our theory aims to study the following two quantities:
\begin{itemize}
    \item \emph{Inference acceleration}: the ratio between the inference time of decoding large model $q$ and the inference time of decoding the algorithm;
    \item \emph{Output quality}: the distribution bias between the algorithm and the large model distribution $q$. An algorithm maintains the output quality if it is distribution unbiased.
\end{itemize}

\textbf{Rejections, and why consider it?} A third metric for decoding is the number of draft tokens being rejected. With more draft tokens being accepted, the fewer rejections would occur. Therefore, algorithms with large number of rejections are slower than those with fewer rejections. This means \emph{Rejections} serves as an alternative metric for the inference speedups. Throughout the paper, we use \emph{number of rejections} for measuring \emph{inference acceleration}. 

To further motivate why choosing Rejections is appropriate, we draw its connection to inference runtime. For Speculative Decoding, the runtime is dominated by the number of oracle calls (Definition~\ref{def:oracle}). After each rejection (Step 10 of \ref{alg:general_sampling} or Step 7
of \ref{alg:speculative_decoding}), there is one oracle call, thus we obtain 
\[
\text{inference time} = \text{\# oracle calls} = \text{\# rejections}.
\]
Notice the runtime for auto-regressive decoding~\eqref{alg:auto_sampling} is $T$, therefore we have the relation:
\textbf{inference acceleration $=\;T/$\# rejections}. Based on this setup, we present our main results in next sections.

\section{Analysis on Efficiency and Optimality for Speculative Decoding}\label{sec:theory}



We start with the following theorem at the beginning of the section. It covers output quality, which is measured by distribution bias, and expected number of rejections for speculative decoding. Its proof is deferred to Appendix~\ref{app:un}.


\begin{theorem}\label{thm:exp_rej}
We have the following two results for Speculative Decoding.

(1) We define random variables $R_n\in\{0,1\}$ that indicates whether the $n$-th token is rejected (with $1$ being rejected). Here rejection means Line 7 of Algorithm~\ref{alg:speculative_decoding} is executed. Then, the total number of rejections $N_{\text{rej}}=\sum_{n=1}^T R_n$. For Speculative Decoding (here $\dtv$ denote the TV distance):
\[
\EE[N_{\text{rej}}]=\sum_{n=1}^T\EE_{x_{1:n-1}\sim q}[\dtv (p_n(\cdot|x_{1:n-1}),q_n(\cdot|x_{1:n-1}))].
\]
(2) The output distributions of Algorithm~\ref{alg:speculative_decoding} and the target model $q$ are identical, \emph{i.e.} for any output sequence $x_{1:T}\in\mathcal{V}^T$, the joint the distributions over $x_{1:T}$ satisfies:
$
\PP^{\text{SD}}(x_{1:T})=q(x_{1:T}).
$

\end{theorem}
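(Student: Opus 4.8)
The plan is to prove both parts by a one-step analysis of the rejection sampling mechanism, then lift to the full trajectory by induction. I would first establish the key single-step fact: given history $x_{1:n-1}$, if $\tilde{x}_t \sim p_n(\cdot|x_{1:n-1})$ is the draft token and we accept it with probability $\min\{1, q_n(\tilde{x}_t)/p_n(\tilde{x}_t)\}$ and otherwise output a fresh sample from $[q_n - p_n]_+$, then the resulting token $x_n$ is distributed exactly as $q_n(\cdot|x_{1:n-1})$. This is the classical speculative-sampling identity: for any target value $v$, the probability of outputting $v$ equals $p_n(v)\min\{1, q_n(v)/p_n(v)\} + (\text{prob. of rejection}) \cdot [q_n - p_n]_+(v)$. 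The first term is $\min\{p_n(v), q_n(v)\}$. The total rejection probability is $1 - \sum_v \min\{p_n(v), q_n(v)\} = \dtv(p_n, q_n)$, and the normalizer of $[q_n - p_n]_+$ before normalization is $\sum_v [q_n(v) - p_n(v)]_+ = \dtv(p_n, q_n)$ as well, so the second term contributes exactly $[q_n(v) - p_n(v)]_+$. Summing gives $\min\{p_n(v), q_n(v)\} + [q_n(v) - p_n(v)]_+ = q_n(v)$. This simultaneously yields that the conditional rejection probability is exactly $\dtv(p_n(\cdot|x_{1:n-1}), q_n(\cdot|x_{1:n-1}))$.

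For part (2), I would argue by induction on $n$ that the marginal law of $x_{1:n}$ produced by Algorithm~\ref{alg:speculative_decoding} equals $q(x_{1:n})$. The subtlety is that the algorithm's control flow is not a clean "draw $x_n$ conditionally on $x_{1:n-1}$" loop: within one outer round, several tokens may be accepted in a row from the already-sampled draft block $\tilde{x}_{n:T}$, and only upon the first rejection does the round end. So I need to observe that, conditioned on the history $x_{1:n-1}$ and on the event that the decoder has "reached" position $n$ (whether via a fresh draft block or via continuing to validate within an ongoing block), the draft token $\tilde{x}_n$ being validated at that moment is always a fresh sample from $p_n(\cdot|x_{1:n-1})$ — because the draft block was generated autoregressively from $p$, so conditioning on the accepted prefix $x_{n':n-1}$ matching $\tilde{x}_{n':n-1}$ still leaves $\tilde{x}_n \sim p_n(\cdot|x_{1:n-1})$. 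Given that, the single-step identity applies verbatim and the conditional law of $x_n$ is $q_n(\cdot|x_{1:n-1})$, independent of all the control-flow details. Chaining these conditionals gives $\PP^{\text{SD}}(x_{1:T}) = \prod_{n=1}^T q_n(x_n|x_{1:n-1}) = q(x_{1:T})$.

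For part (1), once unbiasedness is established, I would write $\EE[N_{\text{rej}}] = \sum_{n=1}^T \EE[R_n] = \sum_{n=1}^T \PP(R_n = 1)$, and decompose $\PP(R_n=1) = \EE[\PP(R_n=1 \mid x_{1:n-1})]$. By the single-step computation above, $\PP(R_n = 1 \mid x_{1:n-1}) = \dtv(p_n(\cdot|x_{1:n-1}), q_n(\cdot|x_{1:n-1}))$, and by part (2) the outer expectation over $x_{1:n-1}$ is taken under the law $q$, giving exactly $\sum_{n=1}^T \EE_{x_{1:n-1}\sim q}[\dtv(p_n(\cdot|x_{1:n-1}), q_n(\cdot|x_{1:n-1}))]$.

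The main obstacle I anticipate is \emph{not} the algebra of the one-step identity — that is routine — but carefully formalizing the control flow so that "$R_n$" and "reaching position $n$" are well-defined events and so that the conditional-independence claim (the draft token at position $n$ is a fresh $p_n$-sample given $x_{1:n-1}$, regardless of how many tokens were accepted before it in the current round) is justified rather than merely asserted. Concretely, one must check that the decision to accept or reject at position $n$ depends only on $(\tilde{x}_n, r_n)$ and the history, so that $R_n$ is conditionally Bernoulli$(\dtv(p_n, q_n))$ given $x_{1:n-1}$; the fact that round boundaries are themselves random and history-dependent is the thing that needs a clean inductive bookkeeping argument. I would handle this by defining the process position-by-position (a single global loop over $n$ that, at each step, either consumes the next already-drawn draft token or triggers a new draft block) and noting this reformulation is distributionally equivalent to Algorithm~\ref{alg:speculative_decoding}, after which everything reduces to the per-step identity.
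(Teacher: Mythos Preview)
Your proposal is correct and follows essentially the same route as the paper: the single-step speculative-sampling identity (output $\sim q_n$ and rejection probability $=\dtv(p_n,q_n)$), induction on $n$ for unbiasedness, and the tower property with the outer expectation taken under $q$ via Part~(2). If anything, you are more explicit than the paper about the control-flow bookkeeping (why $\tilde{x}_n$ is a fresh $p_n$-sample given $x_{1:n-1}$ regardless of round boundaries), which the paper's proof leaves implicit.
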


The first part of Theorem~\ref{thm:exp_rej}, to our knowledge, is the first result that characterizes the expected rejection for speculative decoding a sequence of length $T$ using $p$ and $q$. The second part of Theorem~\ref{thm:exp_rej} shows the distribution unbiasedness for SD, which has
been presented in \cite{chen2023accelerating,leviathan2023fast}. There are three interesting indications:
\begin{itemize}
    \item If $\EE[\dtv (p_n,q_n)(\cdot|x_{1:n-1})]=0$ for all $n$, all tokens are accepted and the accelerate rate $=T$; 
    \item If $\EE[\dtv (p_n,q_n)(\cdot|x_{1:n-1})]=1$ for all $n$, all tokens are rejected and the accelerate rate $=1$, \emph{i.e.} all $T$ tokens are sampled from the large model $q$;
    \item In general, the accelerate rate for SD is $T/\sum_{n=1}^T\EE[\dtv (p_n,q_n)(\cdot|x_{1:n-1})]$.
\end{itemize}

\begin{remark}
\cite{leviathan2023fast} derive the expected number of token generated per run of Speculative Decoding as $1/\EE[\dtv (p,q)]$ for $K=\infty$.\footnote{This is from their equation (1) that has $1/(1-\alpha)$ with $\alpha=\EE(\min(p,q))$ and $1-\EE(\min(p,q))=\EE[\dtv(p,q)]$.} Their result equals {\small$T/\sum_{n}^T\EE[\dtv (p_n,q_n)(\cdot|x_{1:n-1})]$} when $\EE[\dtv (p_n,q_n)(\cdot|x_{1:n-1})]$ is identical for all $n$, and this is due to their assumption that the acceptance rates $\beta$ are i.i.d.. In contrast, our guarantee holds for the case that allows the sequential dependence between different decoding steps. 
\end{remark}



\textbf{Simulation.} We also provide a simulation of Speculative Decoding and compare it with our Theorem~\ref{thm:exp_rej}
in the left panel of Figure~\ref{fig:empirical_batch}(a) with horizon $T=50$, $p_n,q_n, n=1,\ldots,50$ are nonstationary Markov Chains. The green line is the empirical average rejections among $100\times N$ runs of Algorithm~\ref{alg:speculative_decoding} and the orange line the theoretical value computed via Theorem~\ref{thm:exp_rej}. From the simulation, after $5000$ runs the empirical average rejections converge to our theoretical value $16.41$. In this example, the acceleration rate is $50/16.41=3.05$. The specifications of the simulation is included in Appendix~\ref{app:exp}.


\begin{figure}[t]
  \begin{subfigure}{0.33\textwidth}
   \centering
    \includegraphics[width=1\linewidth]{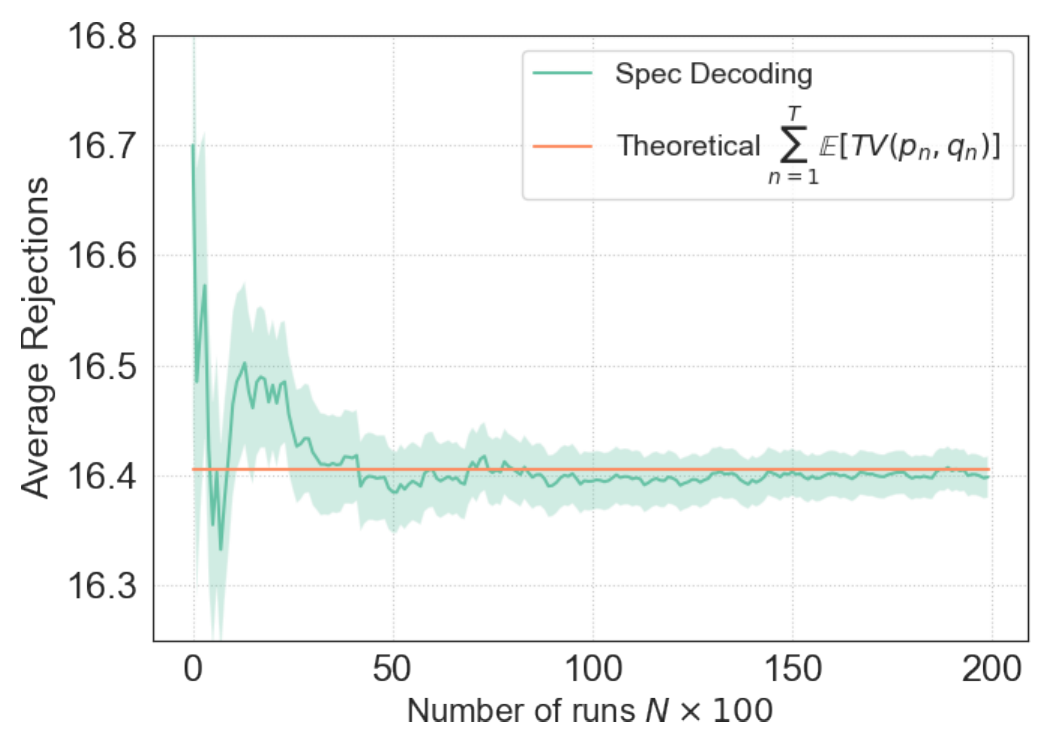}
    \end{subfigure}
  \begin{subfigure}{0.33\textwidth}
   \centering
\includegraphics[width=1\linewidth]{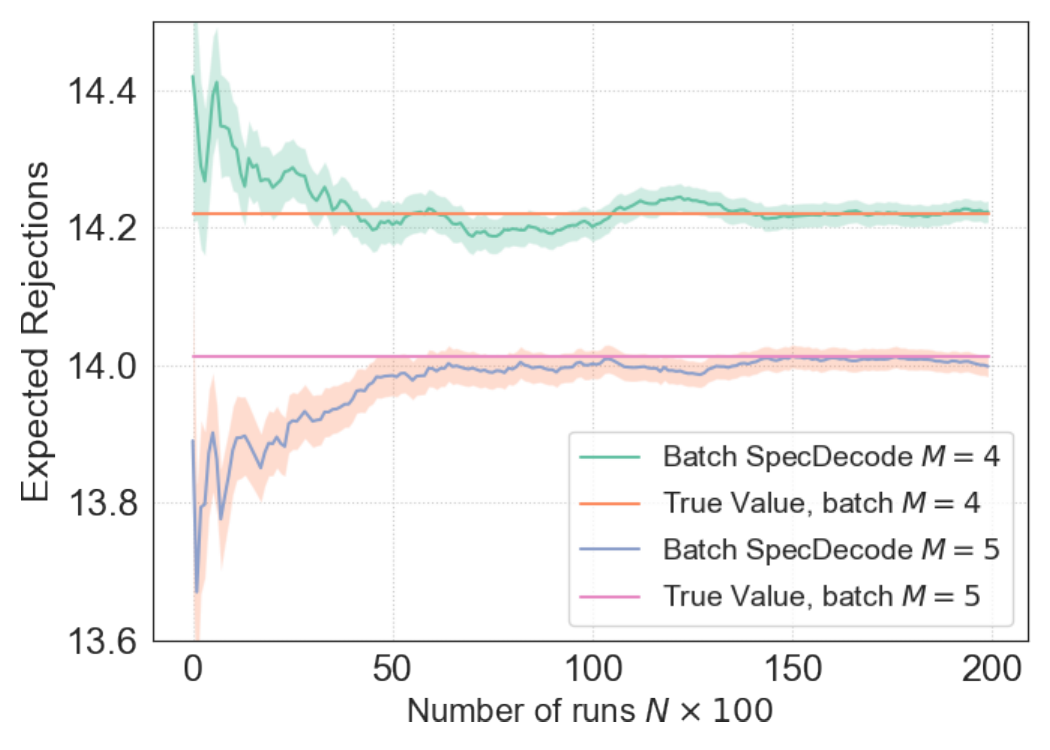}
    \end{subfigure}
  \begin{subfigure}{0.33\textwidth}
   \centering
    \includegraphics[width=1\linewidth]{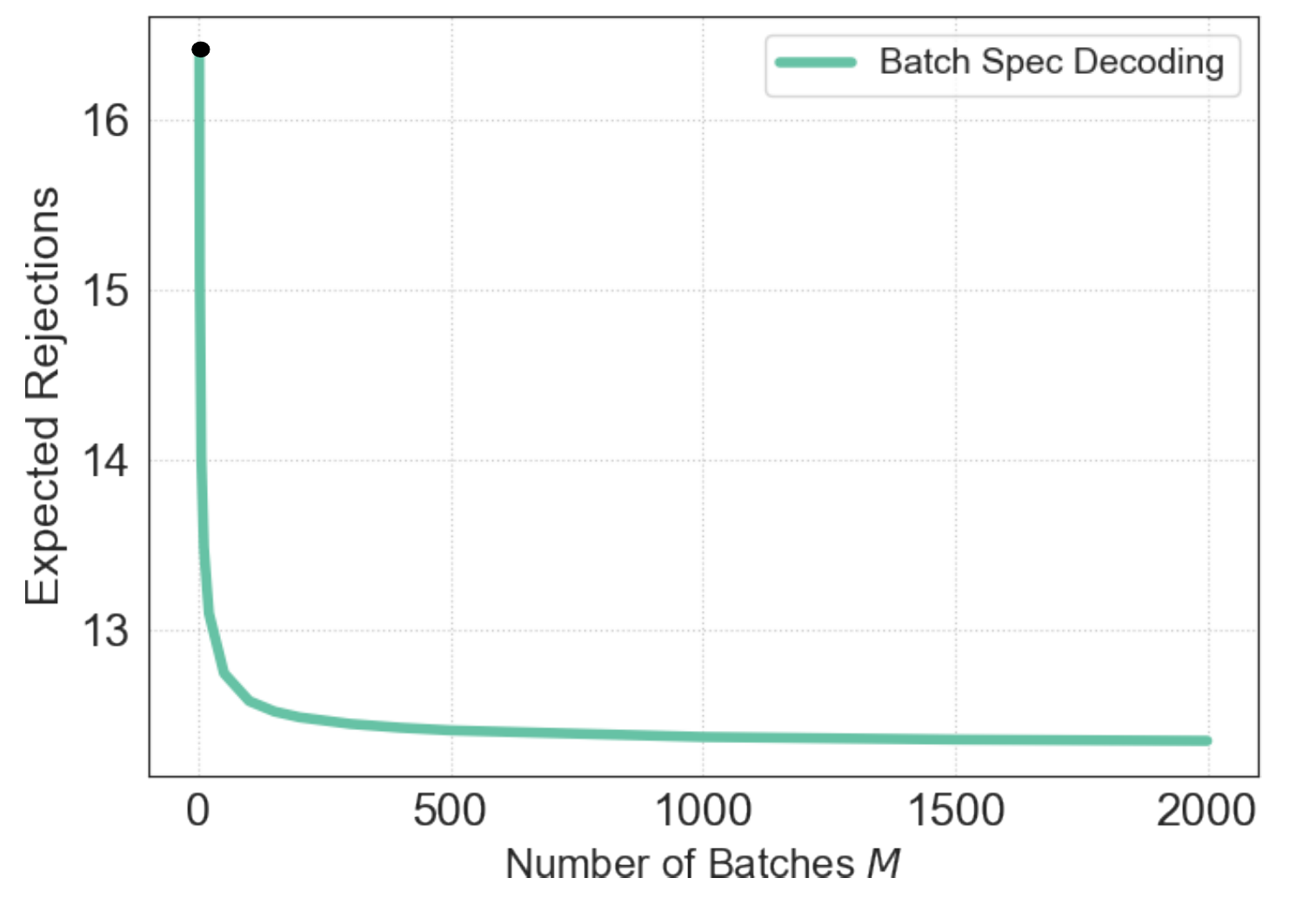}
    \end{subfigure}
  \caption{The numeric instance in this figure chooses $p,q$ to be nonstationary Markov Chains with horizon $T=50$. Left $(a)$: A simulation of Speculative Decoding. The green line is the empirical average rejections among $100 N$ runs and the orange line the theoretical value computed via Theorem~\ref{thm:exp_rej}. Middle $(b)$: Batch Speculative Decoding simulations with batch $M=4,5$. The green/purple lines are the empirical average rejections among $100 N$ runs and the orange/pink lines are the theoretical values computed via Theorem~\ref{thm:exp_rej_batch}. Right $(c)$: The scaling law of expected rejections for Batch SD as a function of $M$. It converges to a limit as $M\rightarrow \infty$.}
  \label{fig:empirical_batch}
\end{figure}

\subsection{Optimality of Speculative Decoding}

Now we have analyzed the efficiency of speculated decoding and provided mathematical characterization of the number of expected rejections, which is depicted by the $\dtv$ distance and scales linearly with the inference time. A natural next-step question to ask is: \emph{Is there any other rejection-based algorithm~\ref{alg:general_sampling} that can do better?} We answer this question in the next theorem.

\begin{theorem}[\textbf{Instance-dependent Rejection Lower Bound}]\label{thm:lower_main}
For an instance $\mathcal{P}:= (p,q)$, where $p,q$ stand for the distributions of the draft model and the target model respectively, defining the family of algorithms as 
{
$
\mathcal{F}:=\{\mathcal{A}: \mathcal{A} \;\text{is a specification of Algorithm~\ref{alg:general_sampling} that satisfies} \; \PP^\mathcal{A}_t=q_t\;\forall t \; (\text{i.e., unbiased})\}.
$
}For an algorithm $\mathcal{A}$, denote $\nrej$ as the number of rejections. Then we have the lower bound
\[
\inf _{\mathcal{A}\in\mathcal{F}}  \mathbb{E}^{\mathcal{A}}_\mathcal{P}\left[\nrej\right]\geq \sum_{n=1}^T \EE_{x_{1:n-1}\sim q} \left[\dtv(p_n, q_n)(\cdot | x_{1:n-1})\right].
\]
\end{theorem}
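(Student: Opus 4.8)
The plan is to prove the lower bound by a step-by-step coupling/induction argument, reducing the sequence-level claim to a per-token statement and then invoking the optimality of speculative decoding at the single-token level. First I would observe that, since every $\mathcal{A}\in\mathcal{F}$ is unbiased (i.e. $\PP^\mathcal{A}_t = q_t$ for all $t$), the marginal distribution of the accepted history $x_{1:n-1}$ under $\mathcal{A}$ is exactly $q(x_{1:n-1})$ — this is what lets the right-hand side $\sum_{n=1}^T \EE_{x_{1:n-1}\sim q}[\dtv(p_n,q_n)(\cdot|x_{1:n-1})]$ be the correct target regardless of which algorithm in $\mathcal{F}$ we picked. So it suffices to show that, conditioned on any fixed accepted prefix $x_{1:n-1}$, the probability that the token played at position $n$ is a ``rejection'' (i.e. sampled from $\mathcal{P}_t$ rather than accepted as some draft $\tilde{x}_t$) is at least $\dtv(p_n,q_n)(\cdot|x_{1:n-1})$, and then sum over $n$ using linearity of expectation and the tower property.

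The heart of the argument is therefore the single-position lower bound, and this is where I expect the main obstacle to lie. The subtlety is that Algorithm~\ref{alg:general_sampling} does not resample draft tokens after an acceptance: when the draft $\tilde{x}_t$ at lookahead index $t$ is accepted, it becomes $x_n$, and the draft at the next index $t+1$ was already sampled from $p_{t+1}(\cdot\mid x_{1:n-1},\tilde{x}_{n:t})$ — so at the moment position $n$ is filled, the candidate token being offered is an honest sample from $p_n(\cdot\mid x_{1:n-1})$, but the algorithm's acceptance probability $b_t$ and fallback distribution $\mathcal{P}_t$ may depend on the entire realized draft block. I would argue that for the purpose of the token at position $n$, the relevant object is still a joint distribution over $(\text{candidate token }\xi, \text{output token }x_n, \text{rejected?}) $ where $\xi\sim p_n(\cdot\mid x_{1:n-1})$, $x_n\sim q_n(\cdot\mid x_{1:n-1})$ (by unbiasedness), and ``rejected'' is the indicator that $x_n$ was not produced by accepting $\xi$. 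The key inequality is the classical one: for any coupling of a $p$-distributed variable $\xi$ and a $q$-distributed variable $\eta$ together with any rule that sometimes ``accepts'' (sets $\eta=\xi$) and sometimes resamples, the probability of resampling is at least $\PP(\xi\neq\eta)\geq \dtv(p,q)$, with the best possible acceptance being the maximal coupling — which is exactly what speculative decoding implements. I would make this precise by noting $\PP(\text{accept})=\PP(x_n = \xi \text{ via acceptance})\le \sum_v \min(p_n(v),q_n(v)) = 1-\dtv(p_n,q_n)$, because an acceptance event at value $v$ consumes probability at most $p_n(v)$ (the draft must land on $v$) and at most $q_n(v)$ (the output must be $v$, by unbiasedness of the output), hence at most $\min(p_n(v),q_n(v))$ for each $v$.

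To handle the fact that $b_t$ and $\mathcal{P}_t$ are measurable functions of the previously drawn draft tokens and that the ``position $n$ is filled'' event is itself determined by earlier accept/reject outcomes in the same outer loop, I would condition on the full history up to the point just before position $n$ is decided — call it $\mathcal{H}_{n}$, which includes $x_{1:n-1}$ and whatever randomness the algorithm has used so far — and run the same $\min(p_n,q_n)$ argument conditionally. The only thing needed is that, conditioned on $\mathcal{H}_n$ and on position $n$ being about to be filled, the candidate token has conditional law $p_n(\cdot\mid x_{1:n-1})$ (true by construction of the draft sampling in Algorithm~\ref{alg:general_sampling}) and the eventual $x_n$ has conditional law $q_n(\cdot\mid x_{1:n-1})$ (true by the unbiasedness constraint defining $\mathcal{F}$, after averaging out any further internal randomness). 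Taking expectation over $\mathcal{H}_n$ and then summing $n=1$ to $T$ gives $\mathbb{E}^\mathcal{A}[\nrej] = \sum_{n=1}^T \PP^\mathcal{A}(R_n = 1) \ge \sum_{n=1}^T \EE_{x_{1:n-1}\sim q}[\dtv(p_n,q_n)(\cdot\mid x_{1:n-1})]$. I would then remark that the bound is tight because speculative decoding (Theorem~\ref{thm:exp_rej}) attains it, so the infimum over $\mathcal{F}$ is in fact a minimum. The delicate bookkeeping — carefully defining the filtration so that ``candidate has law $p_n$'' and ``output has law $q_n$'' both hold conditionally at the right moment, especially across the boundary where one outer \texttt{while} iteration ends and the next begins — is the step I expect to require the most care, and is presumably where the paper's $\mathcal{V}_+, \mathcal{V}_-$ construction enters.
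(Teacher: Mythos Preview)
Your core argument---the per-token maximal-coupling bound $\PP(\text{accept}\mid x_{1:n-1})\le \sum_v \min(p_n(v),q_n(v))=1-\dtv(p_n,q_n)$, justified by ``accept at $v$'' being a subevent of both $\{\tilde{x}_n=v\}$ and $\{x_n=v\}$---is correct and gives the theorem directly. This is a genuinely different route from the paper's. The paper does \emph{not} argue via couplings: instead it writes $b_n=\min\{1,(q_n+\epsilon_n)/p_n\}$ with $\epsilon_n:=b_np_n-q_n$, plugs this into the unbiasedness identity $q_n(x_n\mid x_{1:n-1})=b_n p_n + (\text{reject prob.})\cdot\mathcal{P}_n$, and then does a two-case analysis (depending on which branch of the $\min$ is active) to force $\epsilon_n(x,x_{1:n-1})\le 0$ for every $x$; the sets $\mathcal{V}_+,\mathcal{V}_-$ are the token sets where $\epsilon_n>0$ or $\le 0$, and the point of that construction is to show $\mathcal{V}_+=\emptyset$, i.e.\ every unbiased algorithm must satisfy $b_n\le\min\{1,q_n/p_n\}$. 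Only after establishing this structural constraint does the paper invoke Lemma~\ref{lem:positive_rej_prob} to get the rejection lower bound. So your guess that $\mathcal{V}_+,\mathcal{V}_-$ are there to handle filtration bookkeeping is off---they are purely for the $\epsilon_n\le 0$ case analysis, and the paper never introduces a richer filtration than $x_{1:n-1}$.

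What each approach buys: yours is shorter and conceptually cleaner for the lower bound itself, and it makes transparent that this is just the maximal-coupling lower bound applied one token at a time. The paper's detour through $\epsilon_n$ yields a strictly stronger intermediate statement---that unbiasedness \emph{forces} $b_n\le\min\{1,q_n/p_n\}$ pointwise---which it then uses in Corollary~\ref{cor1} to pin down the form of $\mathcal{P}_n$ for every algorithm in $\mathcal{F}$. One small cleanup on your side: your third paragraph's plan to condition on the full internal history $\mathcal{H}_n$ is unnecessary and slightly self-defeating, since conditionally on $\mathcal{H}_n$ the output $x_n$ need not have law $q_n$ (unbiasedness only gives that after averaging $\mathcal{H}_n$ out). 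Conditioning on $x_{1:n-1}$ alone is enough---that already gives $\tilde{x}_n\sim p_n(\cdot\mid x_{1:n-1})$ (this uses, as the paper also implicitly does, that $b_t$ is a function of $(\tilde{x}_t,x_{1:t-1})$ and does not peek at later draft tokens) and $x_n\sim q_n(\cdot\mid x_{1:n-1})$ by the unbiasedness hypothesis, which is exactly what your paragraph-two inequality needs.
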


\textbf{Takeaways.} 
Via Theorem~\ref{thm:lower_main}, the answer to the key question is: \emph{No rejection improvement can be made in Algorithm~\ref{alg:general_sampling} by changing the acceptance probability $b_t$ and the distribution $\mathcal{P}_t$
 if we want to keep the distribution unbiasedness.} We point out that lower bound of Theorem~\ref{thm:lower_main} matches the complexity upper bound of Speculative Decoding given by Theorem~\ref{thm:exp_rej}. This result confirms that speculative decoding is optimal in the class of all rejection-based methods. 

 The practical implication is that there is no need to tweak acceptance probability, as it will not make performance better. In the next Section~\ref{sec:bsd}, we will see that Speculative Decoding can be provably improved, as long as one can decode and verify multiple sequence of tokens in parallel.

\textbf{Connection to optimal transport.}
We mention \cite{sun2023spectr} nicely consider maximizing the acceptance probability from the \emph{optimal transport} perspective. For a single token, the optimal transport cost is $\sum_x\min(p(x),q(x))$, which corresponds to the optimal expected rejection $\dtv(p,q)$. However, for the sequential $T$ tokens, their Section 5,6 does not provide a explicit formulation for optimal acceptance/rejections. In this sense, their optimality result can be cast as a special case of ours. However, we do emphasis there are differences in the settings, where \cite{sun2023spectr} consider the optimal transport and we study the class $\mathcal{F}$.

\subsection{Analysis for Batch Speculative Decoding}\label{sec:bsd}




To further improve the provable efficiency, we consider batch algorithms that extend the speculative decoding with multiple candidate draft sequences. Most of the existing works \cite{spector2023accelerating,su2023synergy,miao2023specinfer,yang2024multi,jeon2024recursive} formulates batch sequences via a speculation tree structure. In particular, the representative work \cite{miao2023specinfer} propose the merged token tree structure that combines sequences with the same prefix. A \emph{depth-first search} is designed for speculation to ensure the unbiasedness of the algorithm. In addition, \cite{stern2018blockwise} devise the parallel decoding structure that speculate the token of each responses given its previous tokens are accepted. Motivated by these works, we consider a batch version of speculative decoding algorithm using a simpler parallel structure (Left of Figure~\ref{fig:example_batch}). Our  Algorithm~\ref{alg:batched_decoding} can be viewed as a simplified approximation to those batch algorithms. There are several differences that distinguish batch algorithm from the non-batch version. We highlighting them as follows.

 \textbf{Difference1: Oracle call.} At the beginning of batch speculation, $M$ draft sequences are generated in parallel as well as the corresponding logits $q$. This corresponds to Step 4-9 of Alg~\ref{alg:batched_decoding} and is defined as one \emph{oracle call} which we assume to have unit computation $O(1)$;\footnote{We mention batch drafting in parallel will cause more memory and arithmetic operations, but not computation time.}
 
 \textbf{Difference2: Speculation procedure.} It follows the \emph{DFS} principle: If the first token of a response is accepted, only that response will be speculated until rejection. For instance in the Left panel of Figure~\ref{fig:example_batch}, if the token `deep' is accepted, the algorithm will keep verifying token `learn', `ing' until rejection and the rest of sequences won't be examined; if `deep' is not verified, then the algorithm will keep examing `rein'. In this case, rejection happens only if `deep', `rein' and `atten' are all rejected. Once rejection happens, the process will restart. By this design, it still holds true that: 
\emph{$\text{inference time} = \text{\# oracle calls} = \text{\# rejections}.$}

Again, we measure the output quality and rejections for the batch algorithm. The following main result (Theorem~\ref{thm:exp_rej_batch}) provides the explicit characterization
for rejections and batch improvement. Detailed proofs and discussions can be found in \ref{app:batch_SD}.

\begin{figure}[H]
  \hspace{2cm}
    \includegraphics[width=0.6\linewidth]{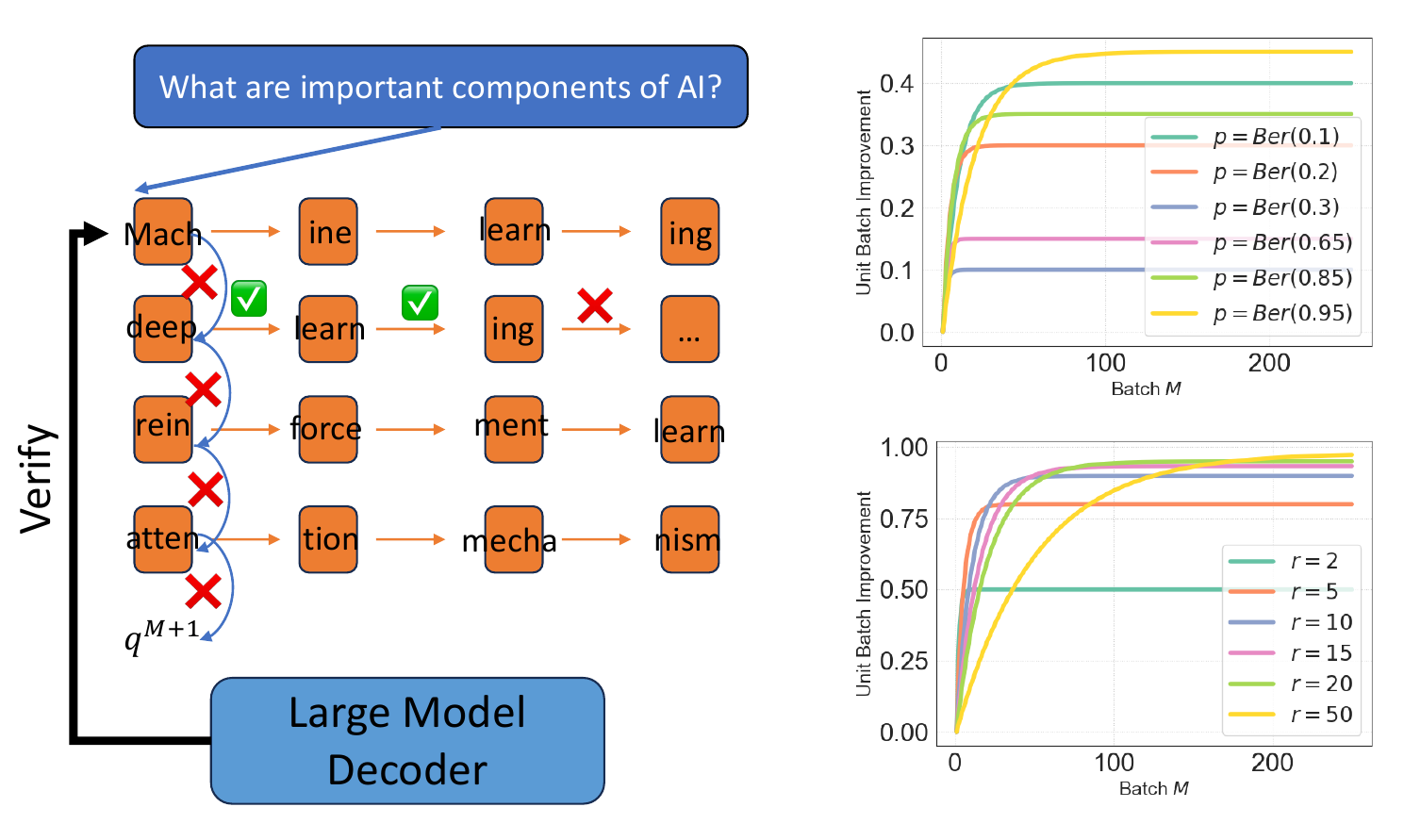}
  \caption{Left: Batch Speculative Decoding. Right: Batch Improvement vs. Batch size $M$.  Upper: Bernoulli distributions with $q=Ber(0.5)$. {Lower: $p\sim \text{Unif}(V)$, $q\sim \text{Unif}(V')$ with $r=V/V'$.}}
  \label{fig:example_batch}
\end{figure}

\begin{theorem}[\textbf{Unbiasedness and efficiency of batch SD}]\label{thm:exp_rej_batch}
Recall the definition of $R_n$ and $N_{\text{rej}}$ in Thm~\ref{thm:exp_rej}, and iteratively define: $q^{m+1}=[q^m-p]_+$, $\forall m\in[M]$ with $q^1=q$ being the target distribution. Then, for Batch Speculative Decoding~\ref{alg:batched_decoding}, $
\PP^{\text{Batch}}(x_{1:T})=q(x_{1:T}).
$ $\forall x_{1:T}\in\mathcal{V}^T$. Moreover,{\small
\begin{align*}
\hspace{-0.25cm}
&\EE[N_{\text{rej}}]
=\sum_{n=1}^T\EE_{x_{1:n-1}\sim q}[\dtv[q,p](\cdot|x_{1:n-1})]
-\underbrace{\sum_{n=1}^T\bar{\EE}_{x_{1:n-1}\sim f}[\dtv(q,p)(x_{1:n-1})-[\prod_{m=1}^M \dtv(q^m,p)(x_{1:n-1})]]}_{\text{{\normalsize	Batch Improvement}}}
\end{align*}}where $f(x_{1:n}):=\PP(x_{1:n}\cap\{n\text{-th draft token rejected}\})$. $f$ can be iteratively computed via $p,q$. 
\end{theorem}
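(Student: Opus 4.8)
The plan is to prove the two assertions of the theorem separately: first unbiasedness of Algorithm~\ref{alg:batched_decoding}, then the explicit formula for $\EE[N_{\text{rej}}]$ together with the identification of the Batch Improvement term.

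\textbf{Step 1: Unbiasedness.} The key structural observation is that the DFS speculation procedure, when examining the $M$ candidate first-tokens $\tilde x_n^{(1)},\dots,\tilde x_n^{(M)}$ drawn i.i.d. from $p_n(\cdot\mid x_{1:n-1})$, is exactly the $M$-fold iterated version of the single-step speculative sampling primitive. Concretely, I would show by induction on $m$ that conditioned on having rejected the first $m-1$ candidates, the ``residual'' target distribution that the algorithm is trying to match is precisely $q^m_n := [q^{m-1}_n - p_n]_+$ (normalized), with $q^1_n = q_n$; this is the standard speculative-sampling recursion where each rejection peels off a copy of $p_n$ from the target mass. If all $M$ candidates are rejected, the algorithm samples the final token from the leftover distribution $[q^M_n - p_n]_+$ (or the appropriate residual), and a direct telescoping of the acceptance/rejection probabilities shows the marginal of $x_n$ given $x_{1:n-1}$ equals $q_n(\cdot\mid x_{1:n-1})$. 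Once the one-step marginal is correct, unbiasedness of the full joint $\PP^{\text{Batch}}(x_{1:T}) = q(x_{1:T})$ follows by the chain rule / induction over $n$, exactly as in the proof of Theorem~\ref{thm:exp_rej}(2). This part is essentially bookkeeping on top of the known single-token argument.

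\textbf{Step 2: Expected rejections.} Write $\EE[N_{\text{rej}}] = \sum_{n=1}^T \PP(\text{$n$-th token rejected})$, where ``rejected'' now means \emph{all $M$ draft branches at step $n$ are rejected}. I would condition on $x_{1:n-1}$ and compute the probability that all $M$ i.i.d. draws from $p_n$ fail the successive acceptance tests against the residual targets $q^1_n, q^2_n, \dots$. By the single-step analysis, the probability that a fresh draw from $p$ is rejected against residual target $q^m$ equals $\dtv(q^m_n, p_n)$ (the usual $1 - \bar\EE_{x\sim p}[\min(1, q^m(x)/p(x))] = 1-\sum_x\min(p,q^m) = \dtv(q^m,p)$ identity — here one must be slightly careful that the $q^m$ are \emph{sub}-probability measures before normalization, but the TV-type quantity is the right object). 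Since the branches are processed independently with fresh draws, the all-reject probability given $x_{1:n-1}$ is $\prod_{m=1}^M \dtv(q^m_n,p_n)(\cdot\mid x_{1:n-1})$. The subtlety is that this conditional rejection probability must be averaged not over $x_{1:n-1}\sim q$ but over the distribution $f$ of histories that \emph{end in a rejection at step $n$} — i.e. the prefix reached via the ``break'' branch — because only along such prefixes does a step-$n$ oracle call / rejection event get a fresh start. This is exactly why the definition $f(x_{1:n}) := \PP(x_{1:n}\cap\{n\text{-th draft token rejected}\})$ appears, and why $f$ is iteratively computable: $f(x_{1:n})$ is obtained from the marginal over $x_{1:n-1}$ (which is $q$, by unbiasedness) times the step-$n$ rejection kernel, and this recursion threads the needle between ``$q$-distributed prefixes'' and ``rejection-conditioned prefixes.'' Assembling: $\EE[N_{\text{rej}}] = \sum_n \bar\EE_{x_{1:n-1}\sim(\text{appropriate measure})}[\prod_{m=1}^M\dtv(q^m_n,p_n)]$, and then I would algebraically rewrite this by adding and subtracting $\sum_n \EE_{x_{1:n-1}\sim q}[\dtv(q_n,p_n)]$ to match the stated decomposition, using that $q^1 = q$ so the $m=1$ factor is $\dtv(q,p)$ and isolating the difference $\dtv(q,p) - \prod_{m=1}^M\dtv(q^m,p) \ge 0$ as the Batch Improvement.

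\textbf{Main obstacle.} The delicate point — and the place where the novel ``iterative computation for $f$'' cited in the introduction lives — is the correct accounting of \emph{which measure} each step's rejection probability is integrated against. In the non-batch proof one can fold everything into $\EE_{x_{1:n-1}\sim q}$ because rejection and acceptance at a step relate cleanly; in the batch case the event ``reach step $n$ in a fresh oracle call and then reject'' has probability involving $f$, a genuinely different object from $q$-marginals, and one must verify (i) that $f$ satisfies a closed recursion in terms of $p,q$ and the residuals $q^m$, and (ii) that the final sum really collapses to the clean $\dtv$-difference form stated. I also expect to need care with normalization of the sub-probability measures $q^m$ (the bracket $[q^{m-1}-p]_+$ before vs. after normalizing) so that the products of TV-like quantities are the genuinely correct probabilities; getting the normalization constants to cancel correctly across the telescoping is the kind of step that looks routine but is where sign/constant errors would hide. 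The unbiasedness part (Step 1) I regard as comparatively safe, reducing to the established single-token speculative sampling correctness applied $M$ times in sequence.
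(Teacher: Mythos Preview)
Your Step~1 (unbiasedness) is essentially the paper's argument: the $M$-fold iterated speculative primitive at a ``root'' token plus the standard single-branch argument for deeper tokens, assembled by induction/chain rule. That part is fine.

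Step~2 has a concrete gap. You write that ``\,`rejected' now means \emph{all $M$ draft branches at step $n$ are rejected}\,'' and proceed to compute the conditional rejection probability as $\prod_{m=1}^M \dtv(q^m_n,p_n)$. But this is only the correct picture when step $n$ sits at a \emph{root} of the parallel structure, i.e.\ when the previous step $n{-}1$ was itself a rejection (so a fresh batch of $M$ drafts was just generated). If instead $\tilde x_{n-1}$ was \emph{accepted}, the algorithm is already committed to a single branch and at step $n$ examines exactly \emph{one} candidate; the rejection probability there is just $\dtv(q_n,p_n)$, not the $M$-fold product. The paper's derivation hinges precisely on this dichotomy: it conditions on $\{\tilde x_{n-1}\text{ acc}\}$ versus $\{\tilde x_{n-1}\text{ rej}\}$ to split $\PP^{\mathcal{A}}(\tilde x_n\text{ acc},x_n\mid x_{1:n-1})$ into two pieces ($p_a$ for the in-branch case, $p_b$ for the root case), and it is exactly this split that produces the two terms in the stated formula---one averaged over $q$, the other over $f$. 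Your ``assemble as $\bar\EE[\prod_m\dtv(q^m,p)]$ then add and subtract $\EE_q[\dtv(q,p)]$'' plan cannot recover this, because your starting expression is missing the entire in-branch contribution.

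Relatedly, your description of the recursion for $f$ (``$f(x_{1:n})$ is the $q$-marginal over $x_{1:n-1}$ times the step-$n$ rejection kernel'') is the \emph{definition} of $f$, not its recursion. The point is that the step-$n$ rejection kernel itself depends on whether step $n{-}1$ was rejected, so unfolding it brings $f(x_{1:n-1})$ back onto the right-hand side; the paper's recursion has both $q(x_{1:n-1})$ and $f(x_{1:n-1})$ appearing, weighted by the in-branch and root rejection factors respectively. You correctly flag in your ``Main obstacle'' that something nontrivial lives here, but your concrete plan does not yet contain the two-case decomposition that makes it go through.
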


\textbf{Takeaways.} The expected rejection of Batch Decoding composes of two parts: \emph{(i)} {$\sum_{n}^T \EE_{q}[\dtv[q,p]]$} which is the rejection that matches the non-batch speculative decoding; \emph{(ii)} the batch improvement (BI) which comes from our design and is always non-negative. To better understand how the batch improvement scale with $M$, we instantiate the single token $\text{BI}(q,p):=\dtv[q,p]-\prod_{m=1}^M \dtv[q^m,p]$ with two simple examples. 

\textbf{Uniform Distribution.} For the whole space $\mathcal{V}$, let $p$ be a uniform distribution with support $\mathcal{V}$ and $q$ be a uniform distribution over a subset of $\mathcal{V}$ with size $V'<V$. Let $V/V'=r$, in this case
$
\text{BI}(\text{Unif}(V'),\text{Unif}(V)) =(1-\frac{1}{r})-(1-\frac{1}{r})^M,\; r=V/V'.
$
Its pattern is visualized in the lower right panel of Figure~\ref{fig:example_batch}. The improvement converges to $1-1/r$ and is always positive as long as batch size is at least $2$. Also, when the vocabulary size $V$ scales up, \emph{i.e.} $r$ goes up, the improvement is going to zero. This is not surprising, since the probability of draft sample to fall within in the support of target distribution is very small.

\textbf{Bernoulli Distribution.} Suppose $u\geq v$ and let $p\sim \text{Ber}(u)$, $q\sim \text{Ber}(v)$. Then 
$
\text{BI}(\text{Ber}(v),\text{Ber}(u))=|u-v|\cdot(1-u^{M-1}).
$
Its pattern is exhibited in the upper right panel of Figure~\ref{fig:example_batch}. Notice for both cases, the limiting batch improvement is $\dtv(p,q)$, which is only significant when $p$ deviates from $q$. This provides the practical design heuristic: \emph{batch design can significantly reduce the number of rejections when the draft distribution $p$ and target distribution $q$ are different from each other and won't make too much improvement when $p$ and $q$ are close.}

\textbf{Batch Simulation.} The Middle panel of Figure~\ref{fig:empirical_batch}(b) shows Batch Speculative Decoding simulations with batch $M=4,5$. The green/purple lines are the empirical average rejections simulated via Algorithm~\ref{alg:batched_decoding} and the orange/pink lines are the theoretical values computed via Theorem~\ref{thm:exp_rej_batch}. The right panel of Figure~\ref{fig:empirical_batch}(c) plots the rejection vs. batch size. In particular, the the black dot with coordinate $(1,16.41)$ represents the Speculative Decoding~\ref{alg:speculative_decoding}. The numeric example shows when $M\rightarrow\infty$, $\EE[\nrej] \nrightarrow 0$. Intuitively, this is partial due to, if the distribution mismatch between $p$ and $q$ is large, rejection is unavoidable no matter how many draft responses are sampled from $p$. We also formally prove this in Proposition~\ref{prop:doesnt}. This theoretical discovery indicates that having a very large draft batches does not necessarily result in significant inference speedups compared to small batches.


\section{Analysis on the Optimal Rejection-Distribution Bias Tradeoff }\label{sec:pf}

\begin{figure}[t]
  \centering
  \begin{subfigure}{0.33\textwidth}
   \centering
    \includegraphics[width=1\linewidth]{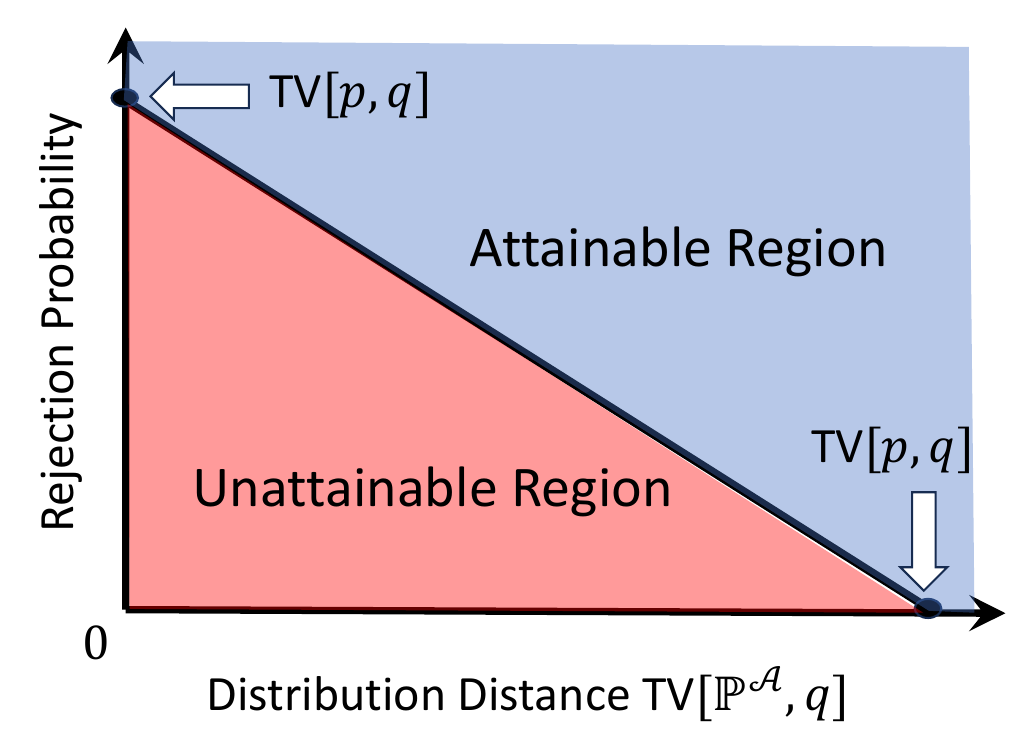}
   \end{subfigure}
  \begin{subfigure}{0.33\textwidth}
   \centering
    \includegraphics[width=1\linewidth]{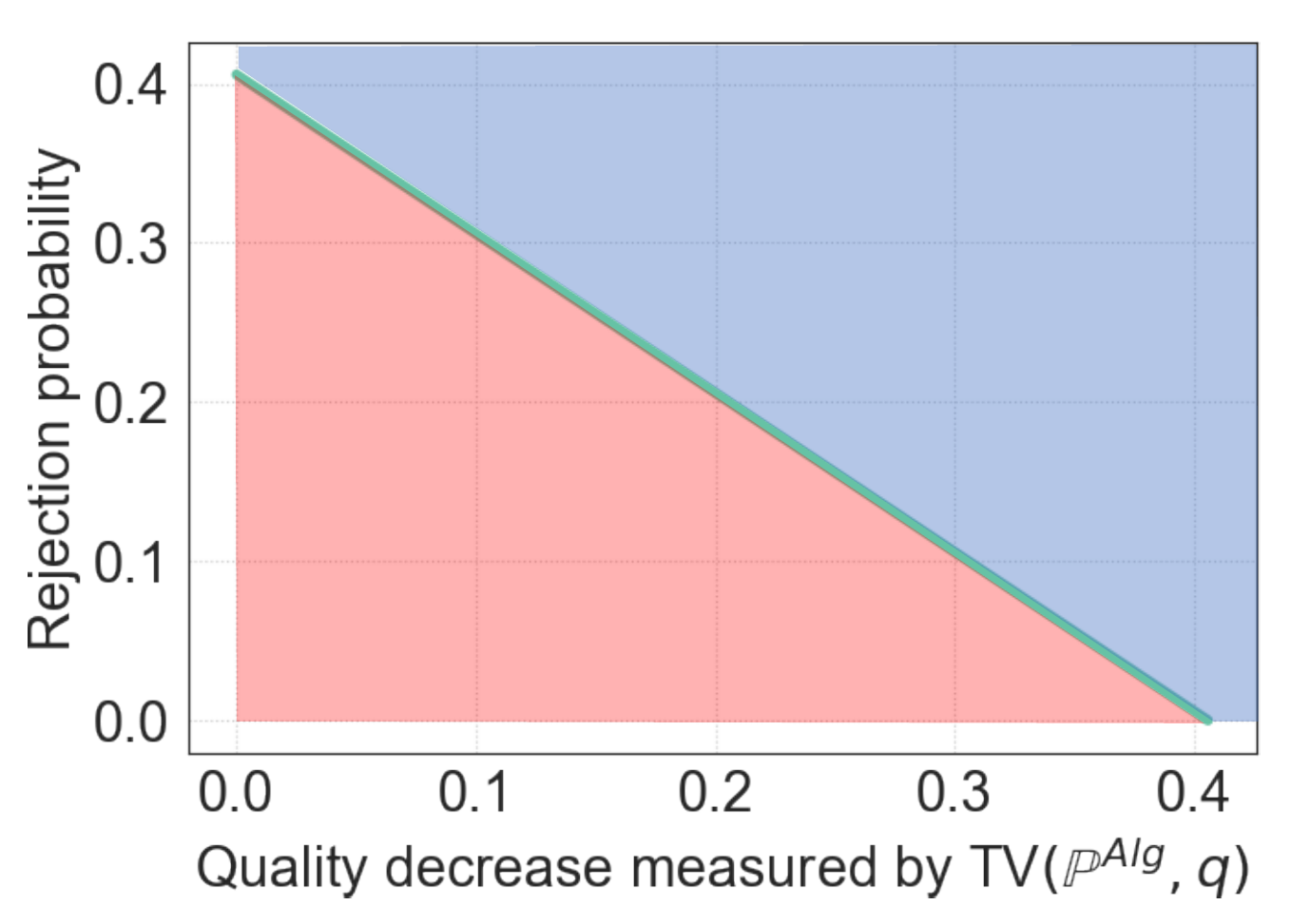}
   \end{subfigure}
  \begin{subfigure}{0.3\textwidth}
   \centering
    \includegraphics[width=1\linewidth]{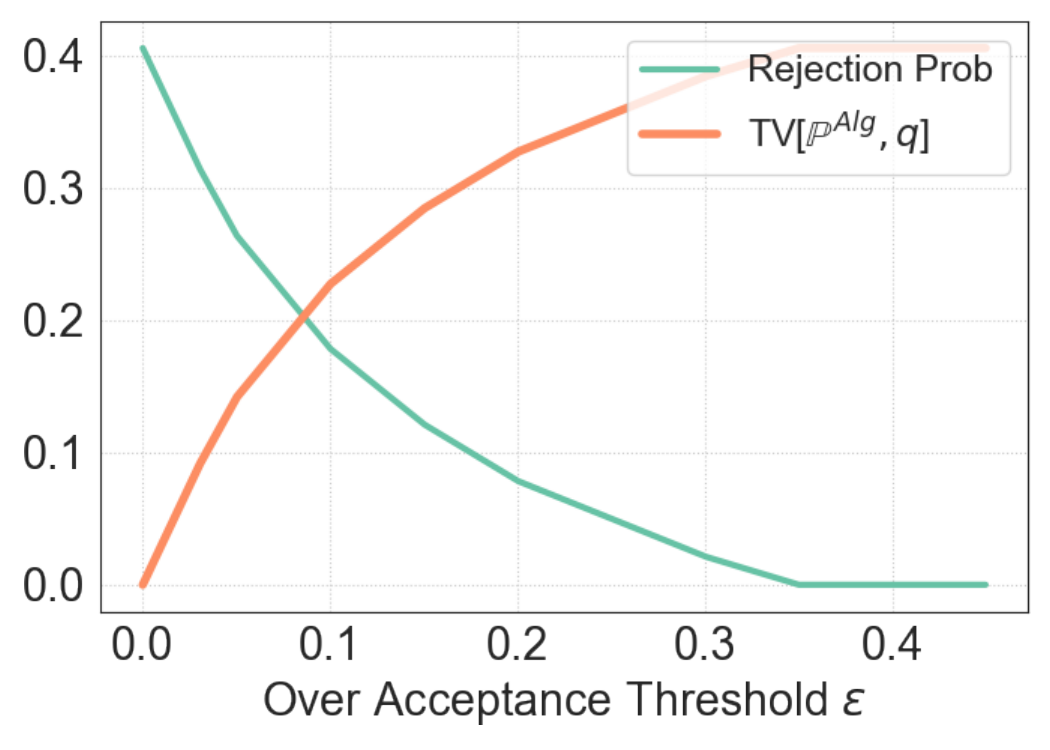}
    \end{subfigure}
  \caption{Left $(a)$: The Pareto Front between \emph{Rejection Probability} $\PP^{\mathcal{A}}(\text{reject})$ vs. \emph{Distribution bias} $\dtv[\PP^{\mathcal{A}},q]$. For a given rejection probability, the black line denotes the optimal deviation $\text{Loss}_\dtv^*$. Middle $(b)$ and Right $(c)$: A numeric example. In the plot, the over acceptance $\epsilon$'s are set as positive constants that define $b(x)=\min\{1,\frac{q(x)+\epsilon}{p(x)}\}$.}
  \label{fig:tradeoff}
\end{figure}


In earlier sections, we focus on analyzing SD or Batch SD which are distribution unbiased. To further reduce rejections, the algorithm may have to compromise on output quality. Nevertheless, it is usually acceptable for many real-world applications. For instance, for the family of Gemini models \cite{team2023gemini}, Google may deploy the small model \emph{Gemini Nano} as the draft model $p$ and \emph{Gemini Ultra} as the target model $q$ and tune the acceptance probability $b$ (in Algorithm~\ref{alg:general_sampling}) higher such that \emph{Gemini Nano} is applied more often for the purpose of less expenditure. Therefore, an intriguing question to ask is: \emph{For biased algorithms, what is the optimal tradeoff between rejection and distribution bias?} 

\subsection{An Optimization Formulation and Pareto-optimal Characterization}

We measure the distribution bias between $\PP^{\mathcal{A}}$ and $q$ via $\dtv$ distance.\footnote{
We mention $\dtv$ distance is only one metric for measuring distribution bias. In general, any distribution distance can be considered. We leave a thorough study for all other distances as future work.
} Concretely, for any algorithm $\mathcal{A}:=(b,\mathcal{P})$ in \ref{alg:general_sampling} with acceptance probability $b$ and rejection distribution $\mathcal{P}$, we fix $b$ and aim to optimize the following objective{
\begin{equation}\label{eqn:objj}
\text{Loss}_\dtv^*(b):=\min_{\mathcal{P}}\dtv[\PP^{\mathcal{A}},q],\quad where \; \mathcal{A}:=(b,\mathcal{P}).
\end{equation}
}We wish to solve the above since it characterizes the minimal distribution bias for any design $b$. We present our solution in the next.


\begin{theorem}[Optimal solution to optimization \eqref{eqn:objj}]\label{thm:sol_var_obj_main}
We can show the objective \eqref{eqn:objj} is equivalent to 
{\small
\begin{equation*}
\begin{aligned}
\text{Loss}_\dtv^*(b):=\frac{1}{2}\min_{\mathcal{P}}\;\sum_x\left|q(x)-b(x)p(x)-\mathcal{P}(x)\sum_{\tilde{x}}[1-b(\tilde{x})]p(\tilde{x})\right| \;s.t. \; \sum_x \mathcal{P}(x)=1,\; \mathcal{P}(x)\geq 0,\;\forall x.
\end{aligned}
\end{equation*}}Suppose $\sum_{{x}}[1-b({x})]p({x})>0$. Define
{\small
$
A(x):=\frac{q(x)-b(x)p(x)}{\sum_{\tilde{x}}[1-b(\tilde{x})]p(\tilde{x})},
$
} and the sets $A_+=\{x\in\mathcal{V}:A(x)\geq 0\}$, $A_-=\{x\in\mathcal{V}:A(x)< 0\}$. Then the set of optimal distributions of objective~\eqref{eqn:objj} is characterized as 
$
\{\mathcal{P}^*: \mathcal{P}^*|_{A_-}(\cdot)=0;0\leq \mathcal{P}^*|_{A_+}(\cdot)\leq A(\cdot)\},
$
and the optimal value is 
$
\text{Loss}_\dtv^*(b)=\frac{1}{2}\sum_x |q(x)-b(x)p(x)|-\frac{1}{2}\sum_x(1-b(x))p(x)\geq 0.
$
\end{theorem}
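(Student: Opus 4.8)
The plan is to compute the output distribution $\PP^{\mathcal{A}}$ of one round of Algorithm~\ref{alg:general_sampling} explicitly as a function of $(b,\mathcal{P})$, reduce the minimization over $\mathcal{P}$ to a one-dimensional-per-coordinate optimization, and then identify the minimizing set and value by a careful $\ell_1$ argument. First I would write down the single-step output law: conditioning on the draft token $\tilde{x}\sim p$, the algorithm accepts $\tilde{x}$ (so $x=\tilde{x}$) with probability $b(\tilde{x})$, and otherwise rejects and resamples $x\sim\mathcal{P}$. Hence for each $x$, $\PP^{\mathcal{A}}(x)=b(x)p(x)+\mathcal{P}(x)\sum_{\tilde{x}}[1-b(\tilde{x})]p(\tilde{x})$. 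Plugging into $\dtv[\PP^{\mathcal{A}},q]=\tfrac12\sum_x|q(x)-\PP^{\mathcal{A}}(x)|$ gives exactly the stated equivalent form. (I should check that it suffices to analyze a single round: since the algorithm restarts after each rejection and the per-step distribution is the object of interest — the theorem is phrased in terms of $\PP^{\mathcal{A}}_t=q_t$ unbiasedness per step — the single-step reduction is the right granularity; I would state this reduction carefully, perhaps invoking the per-coordinate/per-step setup already fixed in Section~\ref{sec:pf}.)

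Next, let $c:=\sum_{\tilde{x}}[1-b(\tilde{x})]p(\tilde{x})>0$ be the total rejection mass, so that $\mathcal{P}$ must satisfy $\sum_x\mathcal{P}(x)=1$, $\mathcal{P}(x)\ge0$, and the objective is $\tfrac12\sum_x|q(x)-b(x)p(x)-c\,\mathcal{P}(x)| = \tfrac{c}{2}\sum_x|A(x)-\mathcal{P}(x)|$ with $A(x)=(q(x)-b(x)p(x))/c$. So the problem is: minimize $\sum_x|A(x)-\mathcal{P}(x)|$ over the probability simplex. The key structural observation is that $\sum_x A(x) = (1-\sum_x b(x)p(x))/c = c/c = 1$ (using $\sum_x q(x)=1$), so $A$ itself is "mass-one" but may have negative coordinates on $A_-$. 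For coordinates in $A_-$ where $A(x)<0$, any $\mathcal{P}(x)\ge0$ contributes $|A(x)-\mathcal{P}(x)| = \mathcal{P}(x) - A(x) \ge -A(x) = |A(x)|$, with equality iff $\mathcal{P}(x)=0$; and on $A_+$, $|A(x)-\mathcal{P}(x)|\ge A(x)-\mathcal{P}(x)$ pointwise, so $\sum_{A_+}|A(x)-\mathcal{P}(x)| \ge \sum_{A_+}A(x) - 1$ (using $\sum_x\mathcal{P}(x)=1$ and $\mathcal{P}\ge0$), with equality iff $0\le\mathcal{P}(x)\le A(x)$ on $A_+$ and $\mathcal{P}$ vanishes off $A_+$. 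Since the $A_-$ lower bound is achieved precisely by $\mathcal{P}|_{A_-}=0$ and this is consistent with the $A_+$ condition (as $\sum_{A_+}A(x)\ge\sum_x A(x)=1$, so there is enough room to put mass $1$ under the cap $A(\cdot)$ on $A_+$), both bounds are simultaneously attainable. This yields the optimal set $\{\mathcal{P}^*: \mathcal{P}^*|_{A_-}=0,\ 0\le\mathcal{P}^*|_{A_+}\le A\}$ and optimal value $\tfrac{c}{2}\big(\sum_{A_+}A(x) - 1 + \sum_{A_-}|A(x)|\big)$.

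Finally I would simplify the optimal value: $\sum_{A_+}A(x)+\sum_{A_-}|A(x)| = \sum_x|A(x)|$, so the optimum is $\tfrac{c}{2}(\sum_x|A(x)| - 1) = \tfrac12\sum_x|q(x)-b(x)p(x)| - \tfrac{c}{2} = \tfrac12\sum_x|q(x)-b(x)p(x)| - \tfrac12\sum_x(1-b(x))p(x)$, which is the claimed formula; nonnegativity follows since it is the minimum of a nonnegative quantity (or directly from $\sum_x|A(x)|\ge|\sum_x A(x)|=1$). The main obstacle — really the only delicate point — is verifying the joint attainability of the two separate lower bounds, i.e. that requiring $\mathcal{P}^*|_{A_-}=0$ does not conflict with being able to place total mass $1$ on $A_+$ under the pointwise ceiling $A(\cdot)$; this is exactly where the identity $\sum_x A(x)=1$ (equivalently $\sum_{A_+}A(x)\ge1$) is used, and I would make sure to highlight that feasibility check. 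A secondary bit of bookkeeping is the degenerate case $\sum_x(1-b(x))p(x)=0$ (all tokens accepted w.p.\ 1), which the theorem explicitly excludes by hypothesis but which I might remark on: there $\PP^{\mathcal{A}}=bp=p$ is forced and $\text{Loss}_\dtv^*=\dtv[p,q]$.
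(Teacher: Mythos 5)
Your proof is correct and reaches the same key structural fact the paper uses: after writing $\PP^{\mathcal{A}}(x)=b(x)p(x)+c\,\mathcal{P}(x)$ with $c:=\sum_{\tilde{x}}(1-b(\tilde{x}))p(\tilde{x})$, the problem becomes minimizing $\sum_x|A(x)-\mathcal{P}(x)|$ over the simplex with the crucial normalization $\sum_x A(x)=1$, and the optimum equals $\lVert A\rVert_1-1$. The difference is in how the vanishing of $\mathcal{P}^*$ on $A_-$ is established. The paper's Step~1 is a perturbation-by-contradiction (exchange) argument: it assumes some $\bar{x}\in A_-$ carries mass, exhibits a $\check{x}\in A_+$ with $A(\check{x})>\mathcal{P}^*(\check{x})$, transfers the mass, and shows the objective strictly decreases; only then, in Step~2, does it derive the $\lVert A\rVert_1-1$ lower bound for the now-restricted optimizers. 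You instead derive a single direct lower bound valid for all feasible $\mathcal{P}$: on $A_-$ you bound $|A(x)-\mathcal{P}(x)|\ge -A(x)$ pointwise (equality iff $\mathcal{P}(x)=0$), and on $A_+$ you use $|A-\mathcal{P}|\ge A-\mathcal{P}$ together with $\sum_{A_+}\mathcal{P}\le 1$. This is cleaner: it avoids the exchange construction entirely, yields the value and the optimizer characterization in one pass, and makes the joint-attainability check (the inequality $\sum_{A_+}A\ge 1$, which you correctly flag as the only delicate point) transparent. Both routes hinge on the same identity $\sum_x A(x)=1$, so conceptually they are the same; yours is simply the tighter, more economical packaging of the $\ell_1$-projection-onto-the-simplex argument.

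One small note: the set of optimizers should be read with the implicit feasibility constraint $\sum_x\mathcal{P}^*(x)=1$, which both you and the theorem's statement leave implicit; your feasibility check $\sum_{A_+}A(x)\ge 1$ is exactly what guarantees this set is nonempty.
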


Theorem~\ref{thm:sol_var_obj_main} is a universal characterization that contains both biased and unbiased situations. 1. If $b$ is less than Speculative Decoding threshold, \emph{i.e.} $b\leq \min\{1,q/p\}$ then $A$ becomes a probability distribution and the optimal distribution $\mathcal{P}^*$ equals $A$ which is also unbiased ($\text{Loss}_\dtv^*(b)=0$); 2. If $b$ exceeds the Speculative Decoding threshold, then $A$ is no longer a distribution and there are multiple optimal distributions $\mathcal{P}^*$. In this case, the optimal distribution bias $\text{Loss}_\dtv^*(b)>0$ for Algorithm~\ref{alg:general_sampling}.


\vspace{0.5em}

\textbf{Main Takeaways.} Via Theorem~\ref{thm:sol_var_obj_main}, we derive the pareto front (the optimal tradeoff) between rejection probability\footnote{Here the rejection probability is computed as \small$\PP(\text{reject})=\sum_{\tilde{x}}\PP(\text{reject},\tilde{x})=\sum_{\tilde{x}}\PP(\text{reject}|\tilde{x})\PP(\tilde{x})=\sum_{\tilde{x}}(1-b(\tilde{x}))p(\tilde{x})$.}
 $\PP(\text{reject})$ vs. distribution distance $\dtv[\PP^{\mathcal{A}},q]$ (Left panel of Figure~\ref{fig:tradeoff}). The blue region can be realized by some algorithm~\ref{alg:general_sampling}, and the red region cannot. $(0,0)$ is the ``perfect algorithm''  (no rejection, no bias) which does not exists, and, in particular, $(0,\dtv[p,q])$ stands for Speculative Decoding. Surprisingly, the pareto front is a straight line connecting $(0,\dtv[p,q])$ and $(\dtv[p,q],0)$, which represents a linear relationship between the rejection probability and the optimal $\dtv$ deviation. This is guaranteed by the following theorem. 

 \begin{theorem}[\textbf{Pareto front}]\label{prop:pareto}
     For any Algorithm $\mathcal{A}$ in the class~\ref{alg:general_sampling} that satisfies $\min\{1,\frac{q(x)}{p(x)}\}\leq b(x)\leq 1,\;\forall x\in\mathcal{V}$. Then
     \[
     \PP^{\mathcal{A}}(\text{reject})+\text{Loss}_\dtv^*(b)=\dtv[p,q].
     \]
Here {\small $\PP^{\mathcal{A}}(\text{reject})=1-\sum_{x}b(x)p(x)$} and {$\text{Loss}_\dtv^*(b):=\min_{\mathcal{P}}\dtv[\PP^{\mathcal{A}},q]$}.
 \end{theorem}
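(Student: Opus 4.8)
\textbf{Proof proposal for Theorem~\ref{prop:pareto}.}

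The plan is to combine the closed-form value of $\text{Loss}_\dtv^*(b)$ from Theorem~\ref{thm:sol_var_obj_main} with the elementary identity $\PP^{\mathcal{A}}(\text{reject}) = 1 - \sum_x b(x)p(x) = \sum_x (1-b(x))p(x)$, and show that their sum collapses to $\dtv[p,q]$ precisely when $b$ lies in the admissible range $\min\{1,q/p\} \le b \le 1$. From Theorem~\ref{thm:sol_var_obj_main} we already have $\text{Loss}_\dtv^*(b) = \frac12 \sum_x |q(x) - b(x)p(x)| - \frac12 \sum_x (1-b(x))p(x)$, so the target identity is equivalent to showing
\[
\tfrac12 \sum_x |q(x) - b(x)p(x)| + \tfrac12 \sum_x (1-b(x))p(x) = \dtv[p,q].
\]
The first step is therefore to simplify $\sum_x |q(x) - b(x)p(x)|$ under the hypothesis $b(x) \ge \min\{1, q(x)/p(x)\}$.

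The key observation is that the hypothesis $b(x) \ge \min\{1, q(x)/p(x)\}$ forces $b(x)p(x) \ge \min\{p(x), q(x)\}$ for every $x$, and in particular $b(x)p(x) \ge q(x)$ whenever $q(x) \le p(x)$. Split the vocabulary into $S_- = \{x : q(x) \le p(x)\}$ and $S_+ = \{x : q(x) > p(x)\}$. On $S_-$ we have $q(x) - b(x)p(x) \le 0$, so $|q(x) - b(x)p(x)| = b(x)p(x) - q(x)$. On $S_+$ we have $b(x) \ge 1$ (since $\min\{1,q/p\} = 1$ there), hence $b(x) = 1$ forced by $b(x) \le 1$, giving $b(x)p(x) = p(x) < q(x)$ and $|q(x) - b(x)p(x)| = q(x) - p(x)$. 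The second step is to add $\sum_x (1-b(x))p(x) = \sum_x p(x) - \sum_x b(x)p(x) = 1 - \sum_x b(x)p(x)$ to $\sum_x|q(x)-b(x)p(x)|$ and carry out the bookkeeping on each of $S_-$ and $S_+$ separately; on $S_+$ the contribution is $\sum_{S_+}(q(x)-p(x)) + \sum_{S_+}(1-1)p(x) = \sum_{S_+}(q(x)-p(x))$, and on $S_-$ the $b(x)p(x)$ terms cancel, leaving $\sum_{S_-}(p(x) - q(x))$. Summing, $\sum_x|q(x)-b(x)p(x)| + \sum_x(1-b(x))p(x) = \sum_{S_+}(q-p) + \sum_{S_-}(p-q) = \sum_x |p(x)-q(x)| = 2\,\dtv[p,q]$, and dividing by $2$ finishes the computation.

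I expect the main (and essentially only) obstacle to be the careful case analysis verifying that on $S_+$ the admissibility constraint pins $b(x)$ to exactly $1$, and tracking signs when combining the absolute-value sum with the rejection probability; everything else is routine algebra once Theorem~\ref{thm:sol_var_obj_main} is invoked. One should also note the edge case $\sum_x(1-b(x))p(x) = 0$, i.e. $b \equiv 1$: then $\PP^{\mathcal{A}}(\text{reject}) = 0$ and the formula for $\text{Loss}_\dtv^*(b)$ should be checked to give $\dtv[p,q]$ directly (this is the endpoint $(0,\dtv[p,q])$ corresponding to Speculative Decoding), which follows from the same split since $S_+$ contributes $\sum_{S_+}(q-p)$ and $S_-$ contributes $\sum_{S_-}(p-q)$, again totalling $2\dtv[p,q]$ before halving. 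A brief remark connecting the two endpoints $(0,\dtv[p,q])$ and $(\dtv[p,q],0)$ — the latter attained by $b \equiv \min\{1,q/p\}$, which is plain Speculative Decoding with zero bias and rejection probability exactly $\dtv[p,q]$ by Theorem~\ref{thm:exp_rej} — will make the linearity of the Pareto front transparent.
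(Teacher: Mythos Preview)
Your proposal is correct and essentially identical to the paper's proof: both invoke Theorem~\ref{thm:sol_var_obj_main} for the closed form of $\text{Loss}_\dtv^*(b)$, reduce the claim to the identity $\sum_x|q(x)-b(x)p(x)| + \sum_x(1-b(x))p(x) = 2\,\dtv[p,q]$, and verify it via the same two-case split on the sign of $q(x)-p(x)$ (the paper establishes the pointwise version $|q(x)-b(x)p(x)|+(1-b(x))p(x)=|p(x)-q(x)|$, whereas you sum over $S_\pm$ --- a cosmetic difference only). One slip in your concluding remarks: the endpoint with $b\equiv 1$ (accept everything, $\PP(\text{reject})=0$, bias $=\dtv[p,q]$) is \emph{not} Speculative Decoding; SD is the other endpoint $b=\min\{1,q/p\}$, as you yourself correctly write one sentence later.
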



We plot the numerical example using two Markov Chains in the middle and right panel of Figure~\ref{fig:tradeoff} that coincides with our theoretical finding. In the figure, the acceptance probability is set to be $b(x)=\min\{1,\frac{q(x)+\epsilon}{p(x)}\}$. The orange line in $(c)$ and the green boundary in $(b)$ are computed via $\text{Loss}_\dtv^*(b)$ from Theorem~\ref{thm:sol_var_obj_main}. The complete proofs for Theorem~\ref{prop:pareto} and Theorem~\ref{thm:sol_var_obj_main} are deferred to Appendix~\ref{app:tradeoff}. For the clearness of illustration, we focus on the pareto front between rejection vs. the minimal $\dtv$ deviation for a single token. 

\subsection{Experiment}\label{sec:exp}
We provide an additional simple experiment to show the effectiveness of our Pareto-optimal solution in Theorem~\ref{thm:sol_var_obj_main}. Consider objective \eqref{eqn:objj}. For the given acceptance probability $b> \min\{1,q/p\}$, we have two options for $\mathcal{P}$: 1. Baseline: select $\mathcal{P}$ to be suboptimal to \eqref{eqn:objj}, \emph{i.e.} simply set $\mathcal{P}:=q$, which is the target distribution; 2. Set $\mathcal{P}:=\mathcal{P}^*$ be the optimal distribution of Theorem~\ref{thm:sol_var_obj_main}. We call the first method \texttt{Decoding-UNO} and the latter one \texttt{Decoding-OPT}. Instead of TV distance, we measure the quality via WinRate in our experiment. Concretely, for each prompt, we let \texttt{Decoding-UNO} and \texttt{Decoding-OPT} to generate responses independently, and use score models to compare whose generation has higher quality. We specify draft model $p$ as \texttt{pythia-70m} and target model $q$ as \texttt{pythia-2.8b} from EleutherAI \cite{biderman2023pythia}. We apply the score model to be 
\texttt{RM-Mistral-7B} or \texttt{GPT-4}. We test 200 prompts from \texttt{Alpaca-Farm-Eval} Dataset \cite{dubois2023alpacafarm} with 500 responses/comparisons per prompt. Table~\ref{tab.experiment.result} shows that \texttt{Decoding-OPT} achieves better performance than \texttt{Decoding-UNO} across different choice of $\epsilon$'s. Due to space constraint, the missing details are deffered to Appendix~\ref{app:exp_add}.

\begin{table}[!h]
    \centering
    \resizebox{0.85\columnwidth}{!}{%
    \begin{tabular}{lccccccccc}
    \toprule
     & \multirow{2}{*}{
        {Method}} & \multicolumn{4}{c}{\texttt{RM-Mistral-7B}} & \multicolumn{4}{c}{\texttt{GPT-4}}\\
    \cmidrule{3-10}
    &  & $\epsilon=0.1$ & $\epsilon=0.2$  & $\epsilon=0.4$ & $\epsilon=0.8$ &$\epsilon=0.1$ & $\epsilon=0.2$  & $\epsilon=0.4$ & $\epsilon=0.8$\\
    \midrule
 & \texttt{Decoding-OPT} & 53\% & 53.5\%   & 57.5\% & 52.5\% & 54.5\% & 53\%  & 53.5\% & 55\% \\
     & \texttt{Decoding-UNO} & 47\% & 46.5\% &  42.5\%  & 47.5\%   & 45.5\% & 47\% & 46.5\% & 45\%  \\
   
   \bottomrule
   \vspace{1em}
    \end{tabular}%
    }
    \caption{WinRate for \texttt{Decoding-OPT} vs \texttt{Decoding-UNO} with different over-acceptance threshold $\epsilon$. The acceptance probability $b(x)= \min\{1,\frac{q(x)+\epsilon}{p(x)}\}$.}
    \label{tab.experiment.result}
\end{table}

\section{Discussions}

\textbf{On the optimality for batch algorithms.} 
Unlike their non-batch counterparts, our batch algorithm studies do not come with optimality guarantees. This is largely due to the diverse and arbitrary nature of batch algorithm designs, making it challenging to define a comprehensive class that encompasses a wide range of batch algorithms. While \cite{sun2023spectr} investigate optimal batch algorithms through an optimal transport lens, their work does not extend to calculating optimal rejection rates or developing an efficient algorithm to achieve this (they only propose an approximate solution). Consequently, the pursuit of batch optimality remains an open field. Identifying the optimal batch algorithm could yield valuable insights for enhancing practical applications in real-world scenarios.

\textbf{Extending Speculative Decoding to other studies.} Speculative Decoding is a generic sampling approach that extends beyond mere decoding tasks. It holds potential for wider applications such as search engines and recommendation systems, where it can be employed to quickly generate and refine search outcomes or content suggestions, enhancing the overall efficiency and user experience of these systems. We leave these as future works.

\begin{ack}
The authors would like to thank anonymous reviewers for their valuable feedback. Mengdi Wang acknowledges the support by NSF IIS-2107304, NSF CPS-2312093, and ONR 1006977.

\end{ack}

\bibliographystyle{plainnat}
\bibliography{paper/example_paper}

\begin{thebibliography}{46}
\providecommand{\natexlab}[1]{#1}
\providecommand{\url}[1]{\texttt{#1}}
\expandafter\ifx\csname urlstyle\endcsname\relax
  \providecommand{\doi}[1]{doi: #1}\else
  \providecommand{\doi}{doi: \begingroup \urlstyle{rm}\Url}\fi

\bibitem[Achiam et~al.(2023)Achiam, Adler, Agarwal, Ahmad, Akkaya, Aleman,
  Almeida, Altenschmidt, Altman, Anadkat, et~al.]{achiam2023gpt}
Josh Achiam, Steven Adler, Sandhini Agarwal, Lama Ahmad, Ilge Akkaya,
  Florencia~Leoni Aleman, Diogo Almeida, Janko Altenschmidt, Sam Altman,
  Shyamal Anadkat, et~al.
\newblock Gpt-4 technical report.
\newblock \emph{arXiv preprint arXiv:2303.08774}, 2023.

\bibitem[Ahn et~al.(2023)Ahn, Beirami, Sun, and Suresh]{ahn2023spectr++}
Kwangjun Ahn, Ahmad Beirami, Ziteng Sun, and Ananda~Theertha Suresh.
\newblock Spectr++: Improved transport plans for speculative decoding of large
  language models.
\newblock In \emph{NeurIPS 2023 Workshop Optimal Transport and Machine
  Learning}, 2023.

\bibitem[Arnab et~al.(2021)Arnab, Dehghani, Heigold, Sun, Lu{\v{c}}i{\'c}, and
  Schmid]{arnab2021vivit}
Anurag Arnab, Mostafa Dehghani, Georg Heigold, Chen Sun, Mario Lu{\v{c}}i{\'c},
  and Cordelia Schmid.
\newblock Vivit: A video vision transformer.
\newblock In \emph{Proceedings of the IEEE/CVF international conference on
  computer vision}, pages 6836--6846, 2021.

\bibitem[Bae et~al.(2023)Bae, Ko, Song, and Yun]{bae2023fast}
Sangmin Bae, Jongwoo Ko, Hwanjun Song, and Se-Young Yun.
\newblock Fast and robust early-exiting framework for autoregressive language
  models with synchronized parallel decoding.
\newblock \emph{arXiv preprint arXiv:2310.05424}, 2023.

\bibitem[Bergner et~al.(2024)Bergner, Skliar, Royer, Blankevoort, Asano, and
  Bejnordi]{bergner2024think}
Benjamin Bergner, Andrii Skliar, Amelie Royer, Tijmen Blankevoort, Yuki Asano,
  and Babak~Ehteshami Bejnordi.
\newblock Think big, generate quick: Llm-to-slm for fast autoregressive
  decoding.
\newblock \emph{arXiv preprint arXiv:2402.16844}, 2024.

\bibitem[Bhendawade et~al.(2024)Bhendawade, Belousova, Fu, Mason, Rastegari,
  and Najibi]{bhendawade2024speculative}
Nikhil Bhendawade, Irina Belousova, Qichen Fu, Henry Mason, Mohammad Rastegari,
  and Mahyar Najibi.
\newblock Speculative streaming: Fast llm inference without auxiliary models.
\newblock \emph{arXiv preprint arXiv:2402.11131}, 2024.

\bibitem[Biderman et~al.(2023)Biderman, Schoelkopf, Anthony, Bradley,
  O’Brien, Hallahan, Khan, Purohit, Prashanth, Raff,
  et~al.]{biderman2023pythia}
Stella Biderman, Hailey Schoelkopf, Quentin~Gregory Anthony, Herbie Bradley,
  Kyle O’Brien, Eric Hallahan, Mohammad~Aflah Khan, Shivanshu Purohit,
  USVSN~Sai Prashanth, Edward Raff, et~al.
\newblock Pythia: A suite for analyzing large language models across training
  and scaling.
\newblock In \emph{International Conference on Machine Learning}, pages
  2397--2430. PMLR, 2023.

\bibitem[Brohan et~al.(2022)Brohan, Brown, Carbajal, Chebotar, Dabis, Finn,
  Gopalakrishnan, Hausman, Herzog, Hsu, et~al.]{brohan2022rt}
Anthony Brohan, Noah Brown, Justice Carbajal, Yevgen Chebotar, Joseph Dabis,
  Chelsea Finn, Keerthana Gopalakrishnan, Karol Hausman, Alex Herzog, Jasmine
  Hsu, et~al.
\newblock Rt-1: Robotics transformer for real-world control at scale.
\newblock \emph{arXiv preprint arXiv:2212.06817}, 2022.

\bibitem[Burton(1985)]{burton1985speculative}
F~Warren Burton.
\newblock Speculative computation, parallelism, and functional programming.
\newblock \emph{IEEE Transactions on Computers}, 100\penalty0 (12):\penalty0
  1190--1193, 1985.

\bibitem[Chen et~al.(2023{\natexlab{a}})Chen, Borgeaud, Irving, Lespiau, Sifre,
  and Jumper]{chen2023accelerating}
Charlie Chen, Sebastian Borgeaud, Geoffrey Irving, Jean-Baptiste Lespiau,
  Laurent Sifre, and John Jumper.
\newblock Accelerating large language model decoding with speculative sampling.
\newblock \emph{arXiv preprint arXiv:2302.01318}, 2023{\natexlab{a}}.

\bibitem[Chen et~al.(2023{\natexlab{b}})Chen, Yang, Lin, Sun, Huang, and
  Chang]{chen2023cascade}
Ziyi Chen, Xiaocong Yang, Jiacheng Lin, Chenkai Sun, Jie Huang, and Kevin
  Chen-Chuan Chang.
\newblock Cascade speculative drafting for even faster llm inference.
\newblock \emph{arXiv preprint arXiv:2312.11462}, 2023{\natexlab{b}}.

\bibitem[Dosovitskiy et~al.(2020)Dosovitskiy, Beyer, Kolesnikov, Weissenborn,
  Zhai, Unterthiner, Dehghani, Minderer, Heigold, Gelly,
  et~al.]{dosovitskiy2020image}
Alexey Dosovitskiy, Lucas Beyer, Alexander Kolesnikov, Dirk Weissenborn,
  Xiaohua Zhai, Thomas Unterthiner, Mostafa Dehghani, Matthias Minderer, Georg
  Heigold, Sylvain Gelly, et~al.
\newblock An image is worth 16x16 words: Transformers for image recognition at
  scale.
\newblock \emph{arXiv preprint arXiv:2010.11929}, 2020.

\bibitem[Dubois et~al.(2023)Dubois, Li, Taori, Zhang, Gulrajani, Ba, Guestrin,
  Liang, and Hashimoto]{dubois2023alpacafarm}
Yann Dubois, Chen~Xuechen Li, Rohan Taori, Tianyi Zhang, Ishaan Gulrajani,
  Jimmy Ba, Carlos Guestrin, Percy~S Liang, and Tatsunori~B Hashimoto.
\newblock Alpacafarm: A simulation framework for methods that learn from human
  feedback.
\newblock \emph{Advances in Neural Information Processing Systems}, 36, 2023.

\bibitem[Fu et~al.(2024)Fu, Bailis, Stoica, and Zhang]{fu2024break}
Yichao Fu, Peter Bailis, Ion Stoica, and Hao Zhang.
\newblock Break the sequential dependency of llm inference using lookahead
  decoding.
\newblock \emph{arXiv preprint arXiv:2402.02057}, 2024.

\bibitem[Gabbay and Mendelson(1996)]{gabbay1996speculative}
Freddy Gabbay and Avi Mendelson.
\newblock \emph{Speculative execution based on value prediction}.
\newblock Technion-IIT, Department of Electrical Engineering, 1996.

\bibitem[Gagrani et~al.(2024)Gagrani, Goel, Jeon, Park, Lee, and
  Lott]{gagrani2024speculative}
Mukul Gagrani, Raghavv Goel, Wonseok Jeon, Junyoung Park, Mingu Lee, and
  Christopher Lott.
\newblock On speculative decoding for multimodal large language models.
\newblock \emph{arXiv preprint arXiv:2404.08856}, 2024.

\bibitem[Han et~al.(2022)Han, Wang, Chen, Chen, Guo, Liu, Tang, Xiao, Xu, Xu,
  et~al.]{han2022survey}
Kai Han, Yunhe Wang, Hanting Chen, Xinghao Chen, Jianyuan Guo, Zhenhua Liu,
  Yehui Tang, An~Xiao, Chunjing Xu, Yixing Xu, et~al.
\newblock A survey on vision transformer.
\newblock \emph{IEEE transactions on pattern analysis and machine
  intelligence}, 45\penalty0 (1):\penalty0 87--110, 2022.

\bibitem[He et~al.(2023)He, Zhong, Cai, Lee, and He]{he2023rest}
Zhenyu He, Zexuan Zhong, Tianle Cai, Jason~D Lee, and Di~He.
\newblock Rest: Retrieval-based speculative decoding.
\newblock \emph{arXiv preprint arXiv:2311.08252}, 2023.

\bibitem[Ho et~al.(2022)Ho, Chan, Saharia, Whang, Gao, Gritsenko, Kingma,
  Poole, Norouzi, Fleet, et~al.]{ho2022imagen}
Jonathan Ho, William Chan, Chitwan Saharia, Jay Whang, Ruiqi Gao, Alexey
  Gritsenko, Diederik~P Kingma, Ben Poole, Mohammad Norouzi, David~J Fleet,
  et~al.
\newblock Imagen video: High definition video generation with diffusion models.
\newblock \emph{arXiv preprint arXiv:2210.02303}, 2022.

\bibitem[Huang et~al.(2024)Huang, Guo, and Wang]{huang2024specdec++}
Kaixuan Huang, Xudong Guo, and Mengdi Wang.
\newblock Specdec++: Boosting speculative decoding via adaptive candidate
  lengths.
\newblock \emph{arXiv preprint arXiv:2405.19715}, 2024.

\bibitem[Jeon et~al.(2024)Jeon, Gagrani, Goel, Park, Lee, and
  Lott]{jeon2024recursive}
Wonseok Jeon, Mukul Gagrani, Raghavv Goel, Junyoung Park, Mingu Lee, and
  Christopher Lott.
\newblock Recursive speculative decoding: Accelerating llm inference via
  sampling without replacement.
\newblock \emph{arXiv preprint arXiv:2402.14160}, 2024.

\bibitem[Kaplan et~al.(2020)Kaplan, McCandlish, Henighan, Brown, Chess, Child,
  Gray, Radford, Wu, and Amodei]{kaplan2020scaling}
Jared Kaplan, Sam McCandlish, Tom Henighan, Tom~B Brown, Benjamin Chess, Rewon
  Child, Scott Gray, Alec Radford, Jeffrey Wu, and Dario Amodei.
\newblock Scaling laws for neural language models.
\newblock \emph{arXiv preprint arXiv:2001.08361}, 2020.

\bibitem[Kim et~al.(2023)Kim, Mangalam, Moon, Malik, Mahoney, Gholami, and
  Keutzer]{kim2023speculative}
Sehoon Kim, Karttikeya Mangalam, Suhong Moon, Jitendra Malik, Michael~W
  Mahoney, Amir Gholami, and Kurt Keutzer.
\newblock Speculative decoding with big little decoder.
\newblock In \emph{Thirty-seventh Conference on Neural Information Processing
  Systems}, 2023.

\bibitem[Leviathan et~al.(2023)Leviathan, Kalman, and
  Matias]{leviathan2023fast}
Yaniv Leviathan, Matan Kalman, and Yossi Matias.
\newblock Fast inference from transformers via speculative decoding.
\newblock In \emph{International Conference on Machine Learning}, pages
  19274--19286. PMLR, 2023.

\bibitem[Liu et~al.(2023)Liu, Hu, Bailis, Stoica, Deng, Cheung, and
  Zhang]{liu2023online}
Xiaoxuan Liu, Lanxiang Hu, Peter Bailis, Ion Stoica, Zhijie Deng, Alvin Cheung,
  and Hao Zhang.
\newblock Online speculative decoding.
\newblock \emph{arXiv preprint arXiv:2310.07177}, 2023.

\bibitem[Miao et~al.(2023)Miao, Oliaro, Zhang, Cheng, Wang, Wong, Chen, Arfeen,
  Abhyankar, and Jia]{miao2023specinfer}
Xupeng Miao, Gabriele Oliaro, Zhihao Zhang, Xinhao Cheng, Zeyu Wang, Rae
  Ying~Yee Wong, Zhuoming Chen, Daiyaan Arfeen, Reyna Abhyankar, and Zhihao
  Jia.
\newblock Specinfer: Accelerating generative llm serving with speculative
  inference and token tree verification.
\newblock \emph{arXiv preprint arXiv:2305.09781}, 2023.

\bibitem[Mitchell et~al.(2023)Mitchell, Rafailov, Sharma, Finn, and
  Manning]{mitchell2023emulator}
Eric Mitchell, Rafael Rafailov, Archit Sharma, Chelsea Finn, and Christopher~D
  Manning.
\newblock An emulator for fine-tuning large language models using small
  language models.
\newblock \emph{arXiv preprint arXiv:2310.12962}, 2023.

\bibitem[Ou et~al.(2024)Ou, Chen, and Tian]{ou2024lossless}
Jie Ou, Yueming Chen, and Wenhong Tian.
\newblock Lossless acceleration of large language model via adaptive n-gram
  parallel decoding.
\newblock \emph{arXiv preprint arXiv:2404.08698}, 2024.

\bibitem[Qian et~al.(2024)Qian, Gonugondla, Ha, Shang, Gouda, Nallapati,
  Sengupta, Ma, and Deoras]{qian2024bass}
Haifeng Qian, Sujan~Kumar Gonugondla, Sungsoo Ha, Mingyue Shang, Sanjay~Krishna
  Gouda, Ramesh Nallapati, Sudipta Sengupta, Xiaofei Ma, and Anoop Deoras.
\newblock Bass: Batched attention-optimized speculative sampling.
\newblock \emph{arXiv preprint arXiv:2404.15778}, 2024.

\bibitem[Santilli et~al.(2023)Santilli, Severino, Postolache, Maiorca, Mancusi,
  Marin, and Rodol{\`a}]{santilli2023accelerating}
Andrea Santilli, Silvio Severino, Emilian Postolache, Valentino Maiorca,
  Michele Mancusi, Riccardo Marin, and Emanuele Rodol{\`a}.
\newblock Accelerating transformer inference for translation via parallel
  decoding.
\newblock \emph{arXiv preprint arXiv:2305.10427}, 2023.

\bibitem[Shridhar et~al.(2023)Shridhar, Manuelli, and
  Fox]{shridhar2023perceiver}
Mohit Shridhar, Lucas Manuelli, and Dieter Fox.
\newblock Perceiver-actor: A multi-task transformer for robotic manipulation.
\newblock In \emph{Conference on Robot Learning}, pages 785--799. PMLR, 2023.

\bibitem[Spector and Re(2023)]{spector2023accelerating}
Benjamin Spector and Chris Re.
\newblock Accelerating llm inference with staged speculative decoding.
\newblock \emph{arXiv preprint arXiv:2308.04623}, 2023.

\bibitem[Stern et~al.(2018)Stern, Shazeer, and Uszkoreit]{stern2018blockwise}
Mitchell Stern, Noam Shazeer, and Jakob Uszkoreit.
\newblock Blockwise parallel decoding for deep autoregressive models.
\newblock \emph{Advances in Neural Information Processing Systems}, 31, 2018.

\bibitem[Su et~al.(2023)Su, Giannoula, and Pekhimenko]{su2023synergy}
Qidong Su, Christina Giannoula, and Gennady Pekhimenko.
\newblock The synergy of speculative decoding and batching in serving large
  language models.
\newblock \emph{arXiv preprint arXiv:2310.18813}, 2023.

\bibitem[Sun et~al.(2024{\natexlab{a}})Sun, Chen, Yang, Tian, and
  Chen]{sun2024triforce}
Hanshi Sun, Zhuoming Chen, Xinyu Yang, Yuandong Tian, and Beidi Chen.
\newblock Triforce: Lossless acceleration of long sequence generation with
  hierarchical speculative decoding.
\newblock \emph{arXiv preprint arXiv:2404.11912}, 2024{\natexlab{a}}.

\bibitem[Sun et~al.(2023)Sun, Suresh, Ro, Beirami, Jain, and Yu]{sun2023spectr}
Ziteng Sun, Ananda~Theertha Suresh, Jae~Hun Ro, Ahmad Beirami, Himanshu Jain,
  and Felix Yu.
\newblock Spectr: Fast speculative decoding via optimal transport.
\newblock \emph{arXiv preprint arXiv:2310.15141}, 2023.

\bibitem[Sun et~al.(2024{\natexlab{b}})Sun, Ro, Beirami, and
  Suresh]{sun2024optimal}
Ziteng Sun, Jae~Hun Ro, Ahmad Beirami, and Ananda~Theertha Suresh.
\newblock Optimal block-level draft verification for accelerating speculative
  decoding.
\newblock \emph{arXiv preprint arXiv:2403.10444}, 2024{\natexlab{b}}.

\bibitem[Team et~al.(2023)Team, Anil, Borgeaud, Wu, Alayrac, Yu, Soricut,
  Schalkwyk, Dai, Hauth, et~al.]{team2023gemini}
Gemini Team, Rohan Anil, Sebastian Borgeaud, Yonghui Wu, Jean-Baptiste Alayrac,
  Jiahui Yu, Radu Soricut, Johan Schalkwyk, Andrew~M Dai, Anja Hauth, et~al.
\newblock Gemini: a family of highly capable multimodal models.
\newblock \emph{arXiv preprint arXiv:2312.11805}, 2023.

\bibitem[Touvron et~al.(2023)Touvron, Lavril, Izacard, Martinet, Lachaux,
  Lacroix, Rozi{\`e}re, Goyal, Hambro, Azhar, et~al.]{touvron2023llama}
Hugo Touvron, Thibaut Lavril, Gautier Izacard, Xavier Martinet, Marie-Anne
  Lachaux, Timoth{\'e}e Lacroix, Baptiste Rozi{\`e}re, Naman Goyal, Eric
  Hambro, Faisal Azhar, et~al.
\newblock Llama: Open and efficient foundation language models.
\newblock \emph{arXiv preprint arXiv:2302.13971}, 2023.

\bibitem[Wang et~al.(2024)Wang, Yang, Wang, Liu, Wang, Liang, Ma, Feng, You,
  Bao, et~al.]{wang2024minions}
Siqi Wang, Hailong Yang, Xuezhu Wang, Tongxuan Liu, Pengbo Wang, Xuning Liang,
  Kejie Ma, Tianyu Feng, Xin You, Yongjun Bao, et~al.
\newblock Minions: Accelerating large language model inference with adaptive
  and collective speculative decoding.
\newblock \emph{arXiv preprint arXiv:2402.15678}, 2024.

\bibitem[Xia et~al.(2023)Xia, Ge, Wang, Chen, Wei, and Sui]{xia2023speculative}
Heming Xia, Tao Ge, Peiyi Wang, Si-Qing Chen, Furu Wei, and Zhifang Sui.
\newblock Speculative decoding: Exploiting speculative execution for
  accelerating seq2seq generation.
\newblock In \emph{Findings of the Association for Computational Linguistics:
  EMNLP 2023}, pages 3909--3925, 2023.

\bibitem[Xia et~al.(2024)Xia, Yang, Dong, Wang, Li, Ge, Liu, Li, and
  Sui]{xia2024unlocking}
Heming Xia, Zhe Yang, Qingxiu Dong, Peiyi Wang, Yongqi Li, Tao Ge, Tianyu Liu,
  Wenjie Li, and Zhifang Sui.
\newblock Unlocking efficiency in large language model inference: A
  comprehensive survey of speculative decoding.
\newblock \emph{arXiv preprint arXiv:2401.07851}, 2024.

\bibitem[Xu et~al.(2023)Xu, Yin, Jin, Zhang, Wei, Xu, and Liu]{xu2023llmcad}
Daliang Xu, Wangsong Yin, Xin Jin, Ying Zhang, Shiyun Wei, Mengwei Xu, and
  Xuanzhe Liu.
\newblock Llmcad: Fast and scalable on-device large language model inference.
\newblock \emph{arXiv preprint arXiv:2309.04255}, 2023.

\bibitem[Yan et~al.(2024)Yan, Agarwal, and Venkataraman]{yan2024decoding}
Minghao Yan, Saurabh Agarwal, and Shivaram Venkataraman.
\newblock Decoding speculative decoding.
\newblock \emph{arXiv preprint arXiv:2402.01528}, 2024.

\bibitem[Yang et~al.(2024)Yang, Huang, Dai, and Chen]{yang2024multi}
Sen Yang, Shujian Huang, Xinyu Dai, and Jiajun Chen.
\newblock Multi-candidate speculative decoding.
\newblock \emph{arXiv preprint arXiv:2401.06706}, 2024.

\bibitem[Zhou et~al.(2023)Zhou, Lyu, Rawat, Menon, Rostamizadeh, Kumar, Kagy,
  and Agarwal]{zhou2023distillspec}
Yongchao Zhou, Kaifeng Lyu, Ankit~Singh Rawat, Aditya~Krishna Menon, Afshin
  Rostamizadeh, Sanjiv Kumar, Jean-Fran{\c{c}}ois Kagy, and Rishabh Agarwal.
\newblock Distillspec: Improving speculative decoding via knowledge
  distillation.
\newblock \emph{arXiv preprint arXiv:2310.08461}, 2023.

\end{thebibliography}

\clearpage
\onecolumn
\appendix
\begin{center}
	{\LARGE Appendix}
\end{center}

\section{Proof Sketch}\label{app:sketch}

\subsection{Proof sketch of Theorem~\ref{thm:lower_main}.}
For any algorithm $\mathcal{A}\in\mathcal{F}$, its design $b_n$ can be written as a function of any sequence $x_{1:n-1},\tilde{x}_n$ with $0\leq b_n(\tilde{x}_n,x_{1:n-1})\leq 1$.\footnote{It needs to follow $b_n\in[0,1]$ since $b_n$ is a probability.} Based on this, we can further define the new function $\epsilon_n: \mathcal{V}\times\mathcal{V}^{n-1} \mapsto \RR$ according to the following equation:
{\small
\begin{equation}\label{eqn:b_epi_main}
 b_n(\tilde{x}_n,x_{1:n-1})=\min\left\{1, \frac{q_{n}(\tilde{x}_{n}|x_{1:n-1})+\epsilon_n(\tilde{x}_n,x_{1:n-1})}{p_{n}(\tilde{x}_{n}|x_{1:n-1})} \right\}.
\end{equation}
}Indeed, we can choose $\epsilon_n:= b_n\cdot p_n-q_n$, and the validity of the definition is guaranteed by the Lemma~\ref{lem:def_epsilon}. Then, the acceptance probability $b_n>\min\{1,\frac{q_n}{p_n}\}$ implies $\epsilon_n>0$.

Next, we show for any $\mathcal{A}\in\mathcal{F}$, it must satisfy $b_n\leq \min\{1,\frac{q_n}{p_n}\}$ for all $n$. To this end, we design the two token sets $\mathcal{V}_+=\{x:\exists n\;s.t. \;\epsilon_n(x)>0\}$ and $\mathcal{V}_-=\{x:\exists n\;s.t. \;\epsilon_n(x)\leq 0\}$and prove $\mathcal{V}_+=\emptyset$, $\mathcal{V}_-=\mathcal{V}$.

Finally, by Lemma~\ref{lem:positive_rej_prob} in Appendix, any algorithm satisfies $b_n\leq\min\{1,\frac{q_n}{p_n}\},\forall n\in[T]$ must have $\mathbb{E}^{\mathcal{A}}_\mathcal{P}\left[\nrej\right]\geq \sum_{n=1}^T \EE_{q} [\dtv(p_n, q_n)(\cdot | x_{1:n-1})]$. Since $\mathcal{A}\in\mathcal{F}$ is arbitrary, this concludes the proof. The full proof is included in \ref{app:lower}.

\subsection{High Level Proof Sketch for the second part of Theorem~\ref{thm:exp_rej_batch}}

The derivation for the number of expected rejections using the batch speculative decoding is more involved than the Algorithm~\ref{alg:batched_decoding} due to the parallel response structure. The key step is to compute the intermediate quantity $\PP^{\mathcal{A}}(\tilde{x}_n\text{ acc},\tilde{x}_n=x_n|x_{1:n-1})$. Let $\tilde{x}_{n-1}\sim p(\cdot|x_{1:n-2})$, then there are two cases: $\tilde{x}_{n-1}$ accepted or rejected. We have
\begin{equation}\label{eqn:ptotal}
\begin{aligned}
&\PP^{\mathcal{A}}(\tilde{x}_n\text{ acc},\tilde{x}_n=x_n|x_{1:n-1})=\PP^{\mathcal{A}}(\tilde{x}_n\text{ acc},\tilde{x}_n=x_n,\tilde{x}_{n-1} \text{acc}|x_{1:n-1})+\PP^{\mathcal{A}}(\tilde{x}_n\text{ acc},\tilde{x}_n=x_n,\tilde{x}_{n-1} \text{rej}|x_{1:n-1})\\
&=\underbrace{\PP^{\mathcal{A}}(\tilde{x}_n\text{ acc},\tilde{x}_n=x_n|\tilde{x}_{n-1} \text{acc},x_{1:n-1})}_{p_a}\PP^{\mathcal{A}}(\tilde{x}_{n-1} \text{acc}|x_{1:n-1})\\
&+\underbrace{\PP^{\mathcal{A}}(\tilde{x}_n\text{ acc},\tilde{x}_n=x_n|\tilde{x}_{n-1} \text{rej},x_{1:n-1})}_{p_b}\PP^{\mathcal{A}}(\tilde{x}_{n-1} \text{rej}|x_{1:n-1})
\end{aligned}
\end{equation}

In the process of finding $p_b$, we need to compute the the quantity $f(x_{1:n}):=\PP(x_{1:n}\cap\{n\text{-th draft token rejected}\})$ and it can be recursively computed via \eqref{eqn:ff} using $p,q$.

\subsection{Proof sketch for Theorem~\ref{thm:sol_var_obj_main}}

Due to space constraint, we only summarize the high-level proof ideas for Theorem~\ref{thm:sol_var_obj_main}. Since $\sum_x A(x)=1$, the original objective \eqref{eqn:objj} can be equivalently rewritten as 
{\small
\begin{equation}\label{eqn:var_obj_equiv_main}
\begin{aligned}
\min_{\mathcal{P}}& \;\; \frac{1}{2}\sum_{x} \left| A(x)-\mathcal{P}(x) \right|,\quad
s.t.\;\; \sum_{x}\mathcal{P}(x)=1,\quad \mathcal{P}(x)\geq 0,\;\forall x\in\mathcal{V}. 
\end{aligned}
\end{equation}
}We now find the solution of objective \eqref{eqn:var_obj_equiv_main} in two steps.

\textbf{Step1:} Recall $A_+=\{x\in\mathcal{V}:A(x)\geq 0\}$ and $A_-=\{x\in\mathcal{V}:A(x)< 0\}$, then any optimal $\mathcal{P}^*$ must satisfy $\mathcal{P}^*(x)=0$ for all $x\in A_-$. This can be shown by contradiction via an alternative construction $\mathcal{P}'$ to reason $\mathcal{P}^*$ is suboptimal to $\mathcal{P}'$.

\textbf{Step2:} We characterize the optimal solutions of the objective. By Step1, we can show any optimal solution $\mathcal{P}^*$ satisfies
{
$
\sum_x |A({x})-\mathcal{P}^*({x})|
\geq \norm{A}_1-1
$
}and the equal sign can be achieved. Then we can convert $\norm{A}_1-1$ back to $\text{Loss}_\dtv^*$. This also helps identify the optimal set of $\mathcal{P}^*$.

\newpage

\begin{algorithm}[H]
\caption{Auto-Regressive Model Decoding}
\label{alg:auto_sampling}
\begin{algorithmic}[1]
\STATE{\bf Init}: Horizon $T$. Distribution $q$. Prompt $x_0$. $n=0$.
\WHILE{$n < T$}
\STATE{Sample ${x}_n \sim q(\cdot | x_{1:n-1})$. $n\leftarrow n+1$.}
\ENDWHILE
\end{algorithmic}
\end{algorithm}

\section{Proof of Theorem~\ref{thm:exp_rej}}\label{app:un}

\begin{theorem}[Restatement of the first part of Theorem~\ref{thm:exp_rej}]\label{thm:exp_rej_app}
We define random variables $R_n\in\{0,1\}$ indicating whether the $n$-th token is rejected with $1$ being rejected (here rejection means Line 6 of Algorithm~\ref{alg:speculative_decoding} is executed). Then, the total number of rejections $N_{\text{rej}}=\sum_{n=1}^T R_n$. For Speculative Decoding,
\[
\EE[N_{\text{rej}}]=\sum_{n=1}^T\EE_{x_{1:n-1}\sim q}[\dtv (p_n(\cdot|x_{1:n-1}),q_n(\cdot|x_{1:n-1}))].
\]
Here $\dtv$ denote the TV distance between two distributions.
\end{theorem}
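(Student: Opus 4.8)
The plan is to condition on the verified prefix $x_{1:n-1}$ and analyze a single validation step of Algorithm~\ref{alg:speculative_decoding}. Fix $n$ and write $p_n = p_n(\cdot \mid x_{1:n-1})$, $q_n = q_n(\cdot \mid x_{1:n-1})$ for brevity. In one round, the draft token is $\tilde x_n \sim p_n$, and conditional on $\tilde x_n = x$ the step executes a rejection (i.e. Line 7, so $R_n = 1$) with probability $1 - \min\{1, q_n(x)/p_n(x)\} = [1 - q_n(x)/p_n(x)]_+$. Hence the conditional rejection probability given the prefix is
\begin{equation*}
\PP(R_n = 1 \mid x_{1:n-1}) = \sum_x p_n(x)\Bigl[1 - \tfrac{q_n(x)}{p_n(x)}\Bigr]_+ = \sum_x \bigl[p_n(x) - q_n(x)\bigr]_+ = \dtv(p_n, q_n).
\end{equation*}
Here the last equality is the standard identity $\dtv(p,q) = \sum_x [p(x)-q(x)]_+ = \sum_x[q(x)-p(x)]_+$. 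Taking expectation over the prefix then gives $\PP(R_n = 1) = \EE_{x_{1:n-1}}[\dtv(p_n,q_n)(\cdot\mid x_{1:n-1})]$, and summing over $n = 1,\dots,T$ together with linearity of expectation yields $\EE[N_{\text{rej}}] = \sum_{n=1}^T \EE[\dtv(p_n,q_n)]$ — \emph{provided} the prefix is distributed according to $q$.

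The one genuine subtlety — and the main obstacle — is justifying that the marginal law of $x_{1:n-1}$ (the accepted tokens fed into round $n$) is exactly $q(x_{1:n-1})$, which is what licenses the outer expectation $\EE_{x_{1:n-1}\sim q}$. This is precisely the distribution-unbiasedness statement, the second part of Theorem~\ref{thm:exp_rej}, which I would invoke: by that result the joint law of the output $x_{1:T}$ equals $q(x_{1:T})$, so in particular every prefix $x_{1:n-1}$ has marginal $q$. One must also be careful about the event structure: each token position $n$ in the final output corresponds to exactly one validation comparison (either a draft token that was accepted, or — after a rejection — a resampled token from $[q_n - p_n]_+$), so $R_n$ is well-defined for every $n \in [T]$ and the indicator decomposition $N_{\text{rej}} = \sum_{n=1}^T R_n$ is literally an equality of random variables, not merely in expectation. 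The per-step computation above applies verbatim at each such comparison, conditioned on the history up to that point.

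So the proof structure is: (i) recall/cite unbiasedness to fix the law of the prefix as $q$; (ii) at round $n$, condition on $x_{1:n-1}$, compute $\PP(R_n = 1 \mid x_{1:n-1}) = \dtv(p_n,q_n)$ using the acceptance rule and the TV identity $\sum_x[p-q]_+ = \dtv(p,q)$; (iii) take expectations, sum over $n$, and apply linearity. The only nontrivial input is step (i); steps (ii)–(iii) are a short direct calculation. I expect the author's proof to first establish the unbiasedness (part 2) and then derive part 1 as a corollary, or to prove both in tandem by induction on $n$, carrying the invariant ``the prefix after round $n$ is $q$-distributed'' alongside the rejection-probability computation.
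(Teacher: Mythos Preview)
Your proposal is correct and matches the paper's proof essentially step for step: the paper also conditions on the verified prefix, computes $\PP(R_n=1\mid x_{1:n-1})=\sum_{\tilde x}(1-\min\{1,q_n/p_n\})p_n=\dtv(p_n,q_n)$, and then invokes the unbiasedness result (part~2) to justify taking the outer expectation under $q$ before summing over $n$. Your anticipation that unbiasedness is the one nontrivial input, and that it is established separately and cited here, is exactly what the paper does.
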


\begin{proof}
Given the verified the tokens $x_{1:n-1}$, we first compute $\PP(\text{Reject at }n|x_{1:n-1})$. Denote the candidate draft token $\tilde{x}\sim p_n(\cdot|x_{1:n-1})$, then by law of total probability 
{\small
\begin{align*}
&\PP(\text{Reject at }n|x_{1:n-1})=\sum_{\tilde{x}}\PP(\text{Reject at }n,\tilde{x}|x_{1:n-1})\\
=&\sum_{\tilde{x}}\PP(\text{Reject }\tilde{x}|\tilde{x},x_{1:n-1})\PP(\tilde{x}|x_{1:n-1})\\
=&\sum_{\tilde{x}}\PP(\text{Reject }\tilde{x}|\tilde{x},x_{1:n-1})p_n(\tilde{x}|x_{1:n-1})\\
=&\sum_{\tilde{x}}(1-\min\{1,\frac{q_n(\tilde{x}|x_{1:n-1})}{p_n(\tilde{x}|x_{1:n-1})}\})p_n(\tilde{x}|x_{1:n-1})\\
=&\sum_{\tilde{x}}\max\{0,p_n-q_n\}(\tilde{x}|x_{1:n-1})=\dtv(p_n,q_n)(\cdot|x_{1:n-1}),
\end{align*}
}where the third equal sign uses draft token is sampled from $p_n$ and the fourth equality is by design of Speculative Decoding (Algorithm~\ref{alg:speculative_decoding}, Line 4).

Lastly, by law of total expectation and above
\begin{align*}
&\EE[N_{\text{rej}}]  = \sum_{t=1}^T \EE[R_t] = \sum_{t=1}^T \EE\left[\EE[R_t | x_{1:{t-1}}]\right]\\
=& \sum_{t=1}^T \sum_{x_{1:n-1}}\EE[R_t | x_{1:{t-1}}]\PP(x_{1:n-1})\\
=&\sum_{t=1}^T \sum_{x_{1:n-1}}\EE[R_t | x_{1:{t-1}}]q(x_{1:n-1})\\
=&\sum_{t=1}^T \sum_{x_{1:n-1}}\PP(\text{Reject at }n|x_{1:n-1})q(x_{1:n-1})\\
=&\sum_{n=1}^T\EE_{x_{1:n-1}\sim q}[\dtv (p_n(\cdot|x_{1:n-1}),q_n(\cdot|x_{1:n-1}))].
\end{align*}
Here the fourth equal sign comes from Speculative Decoding keep the distribution $q$ (Proposition~\ref{prop:spec_decoding_quality}), the fifth equal sign comes from the event $\{R_n=1\}=\{\text{Reject at }n\}$.
\end{proof}

\begin{theorem}[Restatement of the second part of Theorem~\ref{thm:exp_rej}]\label{prop:spec_decoding_quality_app}
The output distributions of Speculative Decoding Algorithm~\ref{alg:speculative_decoding} and the large model $q$ are identical, \emph{i.e.} for any output sequence $x_{1:T}\in\mathcal{V}^T$, the joint the distributions over $x_{1:T}$ satisfies:
$
\PP^{\text{SpecDecoding}}(x_{1:T})=q(x_{1:T}).
$
\end{theorem}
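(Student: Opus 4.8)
The plan is to establish the identity $\PP^{\text{SpecDecoding}}(x_{1:T}) = q(x_{1:T})$ by induction on the token index, using the single-token unbiasedness property of the speculative decoding accept/reject step as the base building block. The key observation is that Algorithm~\ref{alg:speculative_decoding} never produces an output token whose conditional law differs from $q$: at every position $n$, the token $x_n$ is either an accepted draft token $\tilde{x}_t$ or a fresh sample from $[q_t - p_t]_+$, and in both cases the \emph{marginal} law of $x_n$ given $x_{1:n-1}$ equals $q_n(\cdot \mid x_{1:n-1})$. I would first isolate this as the core lemma.

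\emph{Core single-step claim.} Fix a history $x_{1:n-1}$ and write $p := p_n(\cdot \mid x_{1:n-1})$, $q := q_n(\cdot \mid x_{1:n-1})$. Draw $\tilde{x} \sim p$, accept with probability $\min\{1, q(\tilde{x})/p(\tilde{x})\}$, and on rejection resample $x_n \sim [q-p]_+$. Then for any token $v$,
\begin{align*}
\PP(x_n = v \mid x_{1:n-1}) &= \underbrace{p(v)\min\Bigl\{1,\tfrac{q(v)}{p(v)}\Bigr\}}_{\text{accept } \tilde x = v} + \underbrace{\Bigl(\sum_{\tilde x}(1-\min\{1,\tfrac{q(\tilde x)}{p(\tilde x)}\})p(\tilde x)\Bigr)}_{\PP(\text{reject})}\cdot [q-p]_+(v) \\
&= \min\{p(v),q(v)\} + \dtv(p,q)\cdot \frac{[q(v)-p(v)]_+}{\dtv(p,q)} = \min\{p(v),q(v)\} + [q(v)-p(v)]_+ = q(v),
\end{align*}
where I used that the (unnormalized) mass of $\max\{0,q-p\}$ is exactly $\dtv(p,q)$, which cancels the normalization constant of $[q-p]_+$. (The degenerate case $\dtv(p,q)=0$ is trivial since then $p=q$ and rejection never happens.) This is precisely the classical correctness argument of \cite{chen2023accelerating,leviathan2023fast}.

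\emph{From single step to the full trajectory.} The only subtlety is that Algorithm~\ref{alg:speculative_decoding} couples consecutive tokens: when $\tilde{x}_t$ is accepted at position $n$, the \emph{next} draft token $\tilde{x}_{t+1}$ was already sampled from $p_{t+1}(\cdot \mid x_{1:n-1}, \tilde{x}_{n:t})$ conditioned on the accepted prefix, and after a rejection the inner loop breaks and a new draft block starts. I would argue that, regardless of how the draft blocks are segmented, the distribution of $x_n$ given $x_{1:n-1}$ is always governed by exactly the single-step mechanism above — because conditioned on $x_{1:n-1}$ the relevant draft token $\tilde{x}$ entering the decision at position $n$ is sampled from $p_n(\cdot \mid x_{1:n-1})$ (either because it is the first token of a fresh block, or because the drafting is done autoregressively from $p$ using the already-accepted tokens as context, which are exactly $x_{1:n-1}$), and the accept/reject/resample rule depends only on $(\tilde{x}, x_{1:n-1})$. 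Hence $\PP^{\text{SD}}(x_n = v \mid x_{1:n-1}) = q_n(v \mid x_{1:n-1})$ for every $n$ and every history. Then
\[
\PP^{\text{SD}}(x_{1:T}) = \prod_{n=1}^T \PP^{\text{SD}}(x_n \mid x_{1:n-1}) = \prod_{n=1}^T q_n(x_n \mid x_{1:n-1}) = q(x_{1:T}),
\]
by the chain rule, completing the proof.

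\emph{Main obstacle.} The routine part is the single-step computation; the part requiring care is the bookkeeping that the draft block structure and the inner \texttt{for} loop do not break the "given $x_{1:n-1}$, the candidate is $\sim p_n(\cdot\mid x_{1:n-1})$" invariant — i.e., that accepting a run of draft tokens and then resampling after a rejection genuinely realizes, marginally, the product of per-step mechanisms. I would handle this by a clean induction on $n$: assuming the law of $x_{1:n-1}$ is $q(x_{1:n-1})$, condition on the event describing which draft block position $n$ falls in and on the accepted tokens so far; in every branch the conditional law of the candidate draft token at position $n$ is $p_n(\cdot\mid x_{1:n-1})$ and the decision rule is the single-step rule, so the core claim applies and gives $\PP^{\text{SD}}(x_n \mid x_{1:n-1}) = q_n(\cdot\mid x_{1:n-1})$, closing the induction.
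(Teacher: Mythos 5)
Your proposal is correct and takes essentially the same route as the paper's proof: both establish the single-step identity $\min\{p,q\} + [q-p]_+ = q$ by decomposing $\PP(x_n\mid x_{1:n-1})$ into the accept and reject branches, and both then lift this to the full trajectory by induction via the chain rule $\PP^{\text{SD}}(x_{1:n}) = \PP^{\text{SD}}(x_n\mid x_{1:n-1})\,\PP^{\text{SD}}(x_{1:n-1})$. The only difference is cosmetic: you explicitly flag and justify the invariant that, conditioned on $x_{1:n-1}$, the candidate draft token at position $n$ is $\sim p_n(\cdot\mid x_{1:n-1})$ regardless of how the drafting blocks are segmented, whereas the paper uses this fact silently inside its law-of-total-probability decomposition.
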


\begin{remark}
 The proof of Theorem~\ref{prop:spec_decoding_quality_app} is very similar to \cite{chen2023accelerating} except we have $K=\infty$. In addition, \cite{chen2023accelerating} proves the distribution match for a single token, we complement the proof to show the distribution match holds for a sequence of tokens $x_{1:T}$.
\end{remark}

\begin{proof}
 In particular, we use induction to show the stronger result that $\forall\;t\in[T],\;\forall x_{1},x_{2},\ldots,x_{t}\in\mathcal{V}$, it holds $\PP^{\mathcal{A}}_t(x_1,x_2,\ldots,x_t)=q_t(x_{1},x_{2},\ldots,x_{t})$.

\textbf{Step1:} Since $x_0$ is the prompt, its distribution is independent to $p,q$. Then for $t=1$, applying Theorem~1 of \cite{chen2023accelerating} (with $K=\infty$) directly with $p$ and $q$ to be conditional distributions $p_1(\cdot|x_0)$ and $q_1(\cdot|x_0)$ gives $\PP^{\mathcal{A}}_1(x_1|x_0)=q_1(x_{1}|x_0)$, which further implies $\PP^{\mathcal{A}}_1(x_1)=q_1(x_{1})$ (since the distribution of $x_0$ is independent of $p,q$).

\textbf{Step2:} assume $\PP^{\mathcal{A}}_t(x_1,x_2,\ldots,x_t)=q_t(x_{1},x_{2},\ldots,x_{t})$, we first show $\PP^{\mathcal{A}}(x_{t+1}|x_{1:t})=q(x_{t+1}|x_{1:t})$. Indeed, let $\tilde{x}_{t+1}\sim p(\cdot|x_{1:t})$, then by law of total probability
\begin{equation}\label{eqn:main}
\begin{aligned}
\PP^{\mathcal{A}}(x_{t+1}|x_{1:t})
=&\PP^{\mathcal{A}}(\tilde{x}_{t+1}=x_{t+1}|x_{1:t})\PP(\tilde{x}_{t+1}\;\text{acc}|\tilde{x}_{t+1}=x_{t+1},x_{1:t})\\
+&\PP^{\mathcal{A}}(\tilde{x}_{t+1}\;\text{rej}|x_{1:t})\PP^{\mathcal{A}}(x_{t+1}|\tilde{x}_{t+1}\;\text{rej},x_{1:t})
\end{aligned}
\end{equation}
By Algorithm~\ref{alg:speculative_decoding},
\begin{equation}\label{eqn:first_conditional}
\begin{aligned}
&\PP^{\mathcal{A}}(\tilde{x}_{t+1}=x_{t+1}|x_{1:t})\PP^{\mathcal{A}}(\tilde{x}_{t+1}\;\text{acc}|\tilde{x}_{t+1}=x_{t+1},x_{1:t})\\
=&p(x_{t+1}|x_{1:t})\min\left(1,\frac{q(x_{t+1}|x_{1:t})}{p(x_{t+1}|x_{1:t})}\right)
=\min\{p(x_{t+1}|x_{1:t}),q(x_{t+1}|x_{1:t})\}.
\end{aligned}
\end{equation}
Next, the probability of rejection is: 
\begin{equation}\label{eqn:second_conditional}
\begin{aligned}
&\PP(\tilde{x}_{t+1}\;\text{rej}|x_{1:t})=1-\PP(\tilde{x}_{t+1}\;\text{acc}|x_{1:t})
=1-\sum_{x'}\PP(\tilde{x}_{t+1}=x',\; \tilde{x}_{t+1}\;\text{acc}|x_{1:t})\\
=&1-\sum_{x'}\min\{p(x'|x_{1:t}),q(x'|x_{1:t})\}
=\sum_{x'}\max\{0,q(x'|x_{1:t})-p(x'|x_{1:t})\}
\end{aligned}
\end{equation}
where the second equal sign comes from \eqref{eqn:first_conditional}. Lastly, by the construction of the algorithm,
\begin{equation}\label{eqn:third_conditional}
\PP^{\mathcal{A}}(x_{t+1}|\tilde{x}_{t+1}\;\text{rej},x_{1:t})=\frac{\max\{0,q(x_{t+1}|x_{1:t})-p(x_{t+1}|x_{1:t})\}}{\sum_{x'}\max\{0,q(x'|x_{1:t})-p(x'|x_{1:t})\}}.
\end{equation}
Combining \eqref{eqn:third_conditional} and \eqref{eqn:second_conditional} yields
\begin{align*}
&\PP^{\mathcal{A}}(\tilde{x}_{t+1}\;\text{rej}|x_{1:t})\PP^{\mathcal{A}}(x_{t+1}|\tilde{x}_{t+1}\;\text{rej},x_{1:t})
=\max\{0,q(x_{t+1}|x_{1:t})-p(x_{t+1}|x_{1:t})\}.
\end{align*}
Plugging \eqref{eqn:first_conditional} and the above equation into \eqref{eqn:main} to obtain
\begin{align*}
&\PP^{\mathcal{A}}(x_{t+1}|x_{1:t})
=\min\{p(x_{t+1}|x_{1:t}),q(x_{t+1}|x_{1:t})\}
+\max\{0,q(x_{t+1}|x_{1:t})-p(x_{t+1}|x_{1:t})\}
=q(x_{t+1}|x_{1:t}).
\end{align*}
Finally, applying the above we obtain
\[
\PP^{\mathcal{A}}_{t+1}(x_{1:t+1})=\PP^{\mathcal{A}}_t(x_{1:t})\cdot \PP^{\mathcal{A}}(x_{t+1}|x_{1:t})=\PP^{\mathcal{A}}_t(x_{1:t})\cdot q(x_{t+1}|x_{1:t})=q_{t+1}(x_{1:t+1}),
\]
where the last equal sign uses the induction hypothesis.
\end{proof}

\newpage
\section{Lower Bound}\label{app:lower}

\begin{theorem}[Restatement of Theorem~\ref{thm:lower_main}]\label{thm:lower}
Define the arbitrary instance $\mathcal{P}:= (p,q)$, and define the family of algorithms as 
\[
\mathcal{F}:=\{\mathcal{A}: \mathcal{A} \;\text{is a specification of Algorithm~\ref{alg:general_sampling} that satisfies} \; \PP^\mathcal{A}_t=q_t\;\forall t \; (\text{i.e., distribution unbiased})\}.
\]
For an algorithm $\mathcal{A}$, denote $\nrej$ as the number of rejections. Then we have 
\[
\inf _{\mathcal{A}\in\mathcal{F}}  \mathbb{E}^{\mathcal{A}}_\mathcal{P}\left[\nrej\right]\geq \sum_{n=1}^T \EE_{x_{1:n-1}\sim q} \left[\dtv(p_n, q_n)(\cdot | x_{1:n-1})\right]:=\mathfrak{C}(\mathcal{P}). 
\]
\end{theorem}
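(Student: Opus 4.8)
The goal is to show that any unbiased rejection-based algorithm $\mathcal{A}\in\mathcal{F}$ must reject at least $\mathfrak{C}(\mathcal{P})=\sum_{n=1}^T \EE_{x_{1:n-1}\sim q}[\dtv(p_n,q_n)(\cdot|x_{1:n-1})]$ tokens in expectation, which by Theorem~\ref{thm:exp_rej} coincides exactly with Speculative Decoding's rejection count. The natural strategy is a two-part argument: first, reduce the whole class $\mathcal{F}$ to the sub-class of algorithms whose acceptance probability never exceeds the Speculative Decoding threshold $\min\{1,q_n/p_n\}$; second, show that \emph{within} that sub-class the rejection count is lower bounded by $\mathfrak{C}(\mathcal{P})$ pointwise at each step $n$. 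For the first part, I would parametrize any admissible $b_n$ via the surplus function $\epsilon_n(\tilde{x}_n,x_{1:n-1}):=b_n p_n-q_n$, so that $b_n(\tilde{x}_n,x_{1:n-1})=\min\{1,(q_n+\epsilon_n)/p_n\}$ as in \eqref{eqn:b_epi_main}; the point is that $b_n$ exceeds the SD threshold somewhere exactly when $\epsilon_n>0$ somewhere, and one must argue that $\epsilon_n>0$ on a positive-probability set forces the algorithm to be \emph{biased}, contradicting $\mathcal{A}\in\mathcal{F}$.

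To make that contradiction precise I would introduce the two token sets $\mathcal{V}_+=\{x:\exists n,\ \epsilon_n(x,\cdot)>0\}$ and $\mathcal{V}_-=\{x:\exists n,\ \epsilon_n(x,\cdot)\le 0\}$, and then show $\mathcal{V}_+=\emptyset$. The intuition is a conservation-of-mass argument: the total probability that step $n$ produces token $x$ is the "acceptance mass" $b_n(x)p_n(x)=\min\{p_n(x),q_n(x)+\epsilon_n(x)\}$ plus the "resample mass" $\mathcal P_n(x)\cdot(\text{total rejection probability})$. If $\epsilon_n(x)>0$ for some $x$, then the acceptance mass on $x$ already exceeds $q_n(x)$ when $p_n(x)\ge q_n(x)+\epsilon_n(x)$, and since the resample distribution $\mathcal P_n$ is nonnegative and cannot remove mass, there is no way for the output to realize exactly $q_n(x)$ while simultaneously realizing $q_n$ on the tokens where $\epsilon_n\le 0$ — the surplus on $\mathcal V_+$ has to be compensated by a deficit on $\mathcal V_-$, but the resample step can only \emph{add} mass, and over a positive-probability history this must break unbiasedness. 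Formalizing this likely needs an induction over $n$ (using the induction hypothesis $\PP^{\mathcal A}_{n-1}=q_{n-1}$) together with the supporting lemmas already cited (Lemma~\ref{lem:def_epsilon} for the validity of the $\epsilon_n$ parametrization). This is the step I expect to be the main obstacle: one must carefully rule out exotic designs where $\epsilon_n$ is positive on some tokens and the rejection distribution $\mathcal P_n$ is cleverly tuned, and show no such tuning can preserve $\PP^{\mathcal A}_t=q_t$ for all $t$ while the draft is sampled from $p$.

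Once $\mathcal V_+=\emptyset$, i.e. $b_n\le\min\{1,q_n/p_n\}$ for every $n$ and every history, the second part is comparatively routine. For such an algorithm, the per-step rejection probability conditioned on $x_{1:n-1}$ is
\[
\PP^{\mathcal A}(\text{reject at }n\mid x_{1:n-1})=\sum_{\tilde x}\bigl(1-b_n(\tilde x)\bigr)p_n(\tilde x)\ \ge\ \sum_{\tilde x}\Bigl(1-\min\{1,\tfrac{q_n(\tilde x)}{p_n(\tilde x)}\}\Bigr)p_n(\tilde x)=\dtv(p_n,q_n)(\cdot\mid x_{1:n-1}),
\]
which is exactly the content of Lemma~\ref{lem:positive_rej_prob}. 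Then I would take expectations, using that $\mathcal A$ is unbiased so the prefix law under $\mathcal A$ is $q$, i.e. $x_{1:n-1}\sim q$, and sum over $n$ via linearity of expectation and the law of total expectation — this reproduces $\mathfrak C(\mathcal P)$. Since $\mathcal A\in\mathcal F$ was arbitrary, taking the infimum gives the claimed bound; that it is tight follows by comparison with Theorem~\ref{thm:exp_rej}. I would present the proof in the order: (i) set up $\epsilon_n$ and the sets $\mathcal V_\pm$; (ii) prove $\mathcal V_+=\emptyset$ by the induction/conservation argument; (iii) conclude with the per-step $\dtv$ lower bound and sum.
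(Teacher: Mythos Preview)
Your proposal is correct and follows essentially the same route as the paper: parametrize $b_n$ via $\epsilon_n$, show $\epsilon_n\le 0$ everywhere (i.e.\ $\mathcal{V}_+=\emptyset$) from the unbiasedness identity, then apply the per-step $\dtv$ lower bound and sum. The one place you overcomplicate is the $\mathcal{V}_+=\emptyset$ step: no induction over $n$ or global balancing across tokens is needed --- the paper's argument is entirely pointwise, a two-case check (split on whether $p_n(x_n)$ lies above or below $q_n(x_n)+\epsilon_n(x_n)$) on the identity $q_n(x_n)=\min\{p_n,q_n+\epsilon_n\}(x_n)+\mathcal{P}_n(x_n)\cdot\sum_x(p_n-q_n-\epsilon_n)_+(x)$, using only that the second summand is nonnegative to conclude $\epsilon_n(x_n,x_{1:n-1})\le 0$ in both cases.
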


\begin{remark}
Theorem~\ref{thm:lower} shows the rejections of Algorithm~\ref{alg:speculative_decoding} is tight at the instance level (over the family of algorithms $\mathcal{F}$). Therefore, it attains the instance-optimality over sequential decoding algorithms family $\mathcal{F}$.   
\end{remark}

\begin{proof} For any algorithm $\mathcal{A}\in\mathcal{F}$, its $b_n$ can be written as a function of the sequence $x_{1:n-1},\tilde{x}_n$ with $0\leq b_n(\tilde{x}_n,x_{1:n-1})\leq 1$.\footnote{For Speculative Decoding, its $b_n(\tilde{x}_n,x_{1:n-1})=\min\left\{1, \frac{q_{n}(\tilde{x}_{n}|x_{1:n-1})}{p_{n}(\tilde{x}_{n}|x_{1:n-1})} \right\}.$} Based on this, we define the new function $\epsilon_n: \mathcal{V}\times\mathcal{V}^{n-1} \mapsto \RR$ according to the following equation:
\begin{equation}\label{eqn:b_epi}
 b_n(\tilde{x}_n,x_{1:n-1})=\min\left\{1, \frac{q_{n}(\tilde{x}_{n}|x_{1:n-1})+\epsilon_n(\tilde{x}_n,x_{1:n-1})}{p_{n}(\tilde{x}_{n}|x_{1:n-1})} \right\}.
\end{equation}
Indeed, we can choose $\epsilon_n:= b_n\cdot p_n-q_n$, and the validity of the definition is guaranteed by the Lemma~\ref{lem:def_epsilon}. Recall $\mathcal{A}\in\mathcal{F}$ is a distribution unbiased algorithm w.r.t. $q$. Let $x_1,\ldots,x_n$ be the validated sequence of $\mathcal{A}$, then we have 
\begin{equation}\label{eqn:use_unbiased}
\PP^\mathcal{A}_n(x_n|x_{1:n-1})=\frac{\PP^\mathcal{A}_n(x_{1:n})}{\PP^\mathcal{A}_n(x_{1:n-1})}=\frac{q_n(x_{1:n})}{q_n(x_{1:n-1})}=q_n(x_n|x_{1:n-1}).
\end{equation}

On the other hand, let $\tilde{x}_n\sim p_n(\cdot|x_{1:n-1})$, the decomposition holds
\begin{equation}\label{eqn:ley_lower}
\begin{aligned}
&\PP^\mathcal{A}_n(x_n|x_{1:n-1})=\PP^\mathcal{A}_n(x_n,\tilde{x}_n\;\text{accept}|x_{1:n-1})+\PP^\mathcal{A}_n(x_n,\tilde{x}_n\;\text{reject}|x_{1:n-1})\\
=&\PP^\mathcal{A}_n(\tilde{x}_n\;\text{accept}|\tilde{x}_n=x_n,x_{1:n-1})\PP^\mathcal{A}_n(\tilde{x}_n=x_n|x_{1:n-1})\\
+&\PP^\mathcal{A}_n(x_n|\tilde{x}_n\;\text{reject},x_{1:n-1})\PP^\mathcal{A}_n(\tilde{x}_n\;\text{reject}|x_{1:n-1})\\
=&b_n(x_n,x_{1:n-1}) \cdot p_n(x_n|x_{1:n-1})
+\PP^\mathcal{A}_n(x_n|\tilde{x}_n\;\text{reject},x_{1:n-1})\cdot\PP^\mathcal{A}_n(\tilde{x}_n\;\text{reject}|x_{1:n-1})\\
=&b_n(x_n,x_{1:n-1}) \cdot p_n(x_n|x_{1:n-1})
+\PP^\mathcal{A}_n(x_n|\tilde{x}_n\;\text{reject},x_{1:n-1})\cdot(1-\sum_{x}b_n(x,x_{1:n-1})p_n(x|x_{1:n-1})),
\end{aligned}
\end{equation}
where the third equal sign uses $\PP^\mathcal{A}_n(\tilde{x}_n=x_n|x_{1:n-1})=p_n(x_n|x_{1:n-1})$ and the last equal sign is due to
\begin{align*}
&\PP^\mathcal{A}_n(\tilde{x}_n\;\text{reject}|x_{1:n-1})=1-\PP^\mathcal{A}_n(\tilde{x}_n\;\text{accept}|x_{1:n-1})
=1-\sum_{x}\PP^\mathcal{A}_n(\tilde{x}_n\;\text{accept},\tilde{x}_n=x|x_{1:n-1})\\
=&1-\sum_{x}\PP^\mathcal{A}_n(\tilde{x}_n\;\text{accept}|\tilde{x}_n=x,x_{1:n-1})\PP^\mathcal{A}_n(\tilde{x}_n=x|x_{1:n-1})
=1-\sum_{x}b_n(x,x_{1:n-1})p_n(x|x_{1:n-1}).
\end{align*}
Let's do some further simplifications. First off, 
\begin{equation}\label{eqn:term1}
\begin{aligned}
b_n(x_n,x_{1:n-1}) \cdot p_n(x_n|x_{1:n-1})=&p_n(x_n|x_{1:n-1})\min\left\{1, \frac{q_{n}({x}_n|x_{1:n-1})+\epsilon_{n}({x}_{n},x_{1:n-1})}{p_{n}({x}_{n}|x_{1:n-1})} \right\}\\
=&\min\left\{p_{n}({x}_{n}|x_{1:n-1}), {q_{n}({x}_n|x_{1:n-1})+\epsilon_{n}({x}_{n},x_{1:n-1})} \right\},
\end{aligned}
\end{equation}
and
\begin{equation}\label{eqn:term2}
\begin{aligned}
1-\sum_{x}b_n(x,x_{1:n-1})p_n(x|x_{1:n-1})=&1-\sum_{x}\min\left\{p_{n}({x}|x_{1:n-1}), {q_{n}({x}_n|x_{1:n-1})+\epsilon_{n}({x},x_{1:n-1})} \right\}\\
=&\sum_x\max\left\{0,p_{n}({x}|x_{1:n-1})-{q_{n}({x}|x_{1:n-1})-\epsilon_{n}({x},x_{1:n-1})} \right\}
\end{aligned}
\end{equation}
Plug \eqref{eqn:term1} and \eqref{eqn:term2} into \eqref{eqn:ley_lower}, and then plug \eqref{eqn:ley_lower} into \eqref{eqn:use_unbiased} to obtain

\begin{equation}\label{eqn:key}
\begin{aligned}
&q_n(x_n|x_{1:n-1})=\min\left\{p_{n}({x}_{n}|x_{1:n-1}), {q_{n}({x}_n|x_{1:n-1})
+\epsilon_{n}({x}_{n},x_{1:n-1})} \right\}\\+&\PP^\mathcal{A}_n(x_n|\tilde{x}_n\;\text{reject},x_{1:n-1})\cdot\sum_x (p_{n}({x}|x_{1:n-1})-{q_{n}({x}|x_{1:n-1})-\epsilon_{n}({x},x_{1:n-1})})_+. 
\end{aligned}
\end{equation}
Now, we define the token set $\mathcal{V}_+=\{x:\exists \;n,x_{1:n-1}\;s.t. \;\epsilon_n(x,x_{1:n-1})>0\}$ and similarly the token set $\mathcal{V}_-=\{x:\exists \;n,x_{1:n-1}\;s.t. \;\epsilon_n(x,x_{1:n-1})\leq 0\}$. Next, we show $\mathcal{V}_+=\emptyset$, and $\mathcal{V}_-=\mathcal{V}$. 

\noindent\textbf{Case1.} If $p_{n}({x}_{n}|x_{1:n-1})\geq {q_{n}({x}_n|x_{1:n-1})
+\epsilon_{n}({x}_{n},x_{1:n-1})}$, then by \eqref{eqn:key} we have
\begin{align*}
&q_n(x_n|x_{1:n-1})= {q_{n}({x}_n|x_{1:n-1})
+\epsilon_{n}({x}_{n},x_{1:n-1})} \\+&\PP^\mathcal{A}_n(x_n|\tilde{x}_n\;\text{reject},\tilde{x}_n=x_n,x_{1:n-1})\cdot\sum_x (p_{n}({x}|x_{1:n-1})-{q_{n}({x}|x_{1:n-1})-\epsilon_{n}({x},x_{1:n-1})})_+\\
\geq& {q_{n}({x}_n|x_{1:n-1})
+\epsilon_{n}({x}_{n},x_{1:n-1})} ,
\end{align*}
and this implies $0\geq \epsilon_n(x_n,x_{1:n-1})$, which means $x_n\in\mathcal{V}_-$;

\noindent\textbf{Case2.} If $p_{n}({x}_{n}|x_{1:n-1})< {q_{n}({x}_n|x_{1:n-1})
+\epsilon_{n}({x}_{n},x_{1:n-1})}$, then by \eqref{eqn:key} we have
\begin{align*}
&q_n(x_n|x_{1:n-1})= {p_{n}({x}_n|x_{1:n-1})} \\+&\PP^\mathcal{A}_n(x_n|\tilde{x}_n\;\text{reject},\tilde{x}_n=x_n,x_{1:n-1})\cdot\sum_x (p_{n}({x}|x_{1:n-1})-{q_{n}({x}|x_{1:n-1})-\epsilon_{n}({x},x_{1:n-1})})_+\\
\geq& {p_{n}({x}_n|x_{1:n-1})},
\end{align*}
and this implies 
\begin{align*}
\epsilon_n(x_n,x_{1:n-1})=&b_n(x_n,x_{1:n-1})p_n(x_n|x_{1:n-1})-q_n(x_n|x_{1:n-1})\\
\leq& p_n(x_n|x_{1:n-1})-q_n(x_n|x_{1:n-1})\leq 0,
\end{align*} which means $x_n\in\mathcal{V}_-$.

Therefore, combining the two cases we always have $x\in\mathcal{V}_-$, which indicates $\mathcal{V}_+=\emptyset$. By \eqref{eqn:b_epi}, this implies for all $n$, $b_n\leq\min\{1,\frac{q_n}{p_n}\}$. Finally, by Lemma~\ref{lem:positive_rej_prob}, this implies $\mathbb{E}^{\mathcal{A}}_\mathcal{P}\left[\nrej\right]\geq\mathfrak{C}(\mathcal{P})$. Since $\mathcal{A}\in\mathcal{F}$ is arbitrary, this concludes the proof.
\end{proof}

\begin{corollary}\label{cor1}
For any algorithm $\mathcal{A}\in\mathcal{F}$, it follows $\forall\;n\in[T],x\in\mathcal{V}$ and $x_{1:n-1}$, $b_n(x,x_{1:n-1})\leq \min\left\{1, \frac{q_{n}({x} | x_{1:n-1})}{p_{n}({x} | x_{1:n-1})} \right\}$. In this case, the distribution $\mathcal{P}_n$ is defined as:
\[
\mathcal{P}_n(x|x_{1:n-1})=\frac{q_n(x|x_{1:n-1})-\min\left\{p_{n}({x}|x_{1:n-1}), {q_n({x}|x_{1:n-1})
+\epsilon_{n}({x},x_{1:n-1})} \right\}}{\sum_x (p_n(x|x_{1:n-1})-q_n(x|x_{1:n-1})-\epsilon_n(x|x_{1:n-1}))_+}.
\]
Applying $\epsilon_n = b_n p_n-q_n$, this is equivalent to 
\[
\mathcal{P}_n(x|x_{1:n-1})=\frac{q_n(x|x_{1:n-1})-b_n(x,x_{1:n-1})p_n(x|x_{1:n-1})}{\sum_x (1-b_n(x,x_{1:n-1}))p_n(x|x_{1:n-1})}.
\]
\end{corollary}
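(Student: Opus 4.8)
The plan is to read off both assertions from the machinery already assembled in the proof of Theorem~\ref{thm:lower}, so almost no new work is required. For the first assertion, recall that in that proof we set $\epsilon_n := b_n p_n - q_n$ (valid by Lemma~\ref{lem:def_epsilon}) and introduced the sets $\mathcal{V}_+,\mathcal{V}_-$, then showed through the two-case analysis culminating in \eqref{eqn:key} that $\mathcal{V}_+=\emptyset$; equivalently, $\epsilon_n(x,x_{1:n-1})\le 0$ for every $x$, $n$, and $x_{1:n-1}$. By the defining relation \eqref{eqn:b_epi}, $\epsilon_n\le 0$ is precisely the statement that $b_n(x,x_{1:n-1}) = \min\{1,\frac{q_n(x|x_{1:n-1})+\epsilon_n(x,x_{1:n-1})}{p_n(x|x_{1:n-1})}\}\le \min\{1,\frac{q_n(x|x_{1:n-1})}{p_n(x|x_{1:n-1})}\}$, which is the first claim.

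For the form of $\mathcal{P}_n$, the key observation is that the factor $\PP^{\mathcal{A}}_n(x\mid \tilde{x}_n\text{ reject},x_{1:n-1})$ appearing in \eqref{eqn:key} is, by definition of Algorithm~\ref{alg:general_sampling}, exactly the rejection distribution $\mathcal{P}_n(x|x_{1:n-1})$. So I would simply solve \eqref{eqn:key} for this factor. Assuming the rejection probability at step $n$ is strictly positive — equivalently, by \eqref{eqn:term2}, that the denominator $\sum_x (p_n(x|x_{1:n-1})-q_n(x|x_{1:n-1})-\epsilon_n(x,x_{1:n-1}))_+$ is nonzero (otherwise $\mathcal{P}_n$ plays no role and may be taken arbitrary) — rearranging \eqref{eqn:key} gives
\[
\mathcal{P}_n(x|x_{1:n-1})=\frac{q_n(x|x_{1:n-1})-\min\{p_{n}({x}|x_{1:n-1}),\,q_n({x}|x_{1:n-1})+\epsilon_{n}({x},x_{1:n-1})\}}{\sum_{x'} (p_n(x'|x_{1:n-1})-q_n(x'|x_{1:n-1})-\epsilon_n(x',x_{1:n-1}))_+}.
\]

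Finally, I would substitute back $\epsilon_n = b_n p_n - q_n$ and simplify using $0\le b_n\le 1$ from the first part: then $q_n+\epsilon_n = b_n p_n \le p_n$, so $\min\{p_n,\,q_n+\epsilon_n\}=b_n p_n$, making the numerator $q_n - b_n p_n$; and $p_n - q_n - \epsilon_n = (1-b_n)p_n\ge 0$, so the $(\cdot)_+$ in the denominator is vacuous and the sum reduces to $\sum_{x'}(1-b_n(x',x_{1:n-1}))p_n(x'|x_{1:n-1})$. This yields the claimed closed form. The only point needing any care is the degenerate case of zero rejection probability, which is dispatched by the remark above; everything else is direct substitution into identities already established in the proof of Theorem~\ref{thm:lower}.
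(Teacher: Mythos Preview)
Your proposal is correct and follows essentially the same route as the paper: invoke the $\epsilon_n\le 0$ conclusion from the two-case analysis in the proof of Theorem~\ref{thm:lower} to obtain the bound on $b_n$, then solve the identity \eqref{eqn:key} for the rejection distribution and substitute $\epsilon_n=b_np_n-q_n$. The only addition in the paper's argument is an explicit verification that the resulting $\mathcal{P}_n$ is a valid probability distribution (nonnegative and summing to one); your simplification already implicitly gives nonnegativity of the numerator, and the sum-to-one check is a one-line computation you could append.
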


\begin{proof}[Proof of Corollary~\ref{cor1}]
We reutilize \eqref{eqn:key} here and call it \eqref{eqn:key1}.
\begin{equation}\label{eqn:key1}
\begin{aligned}
&q_n(x_n|x_{1:n-1})=\min\left\{p_{n}({x}_{n}|x_{1:n-1}), {q_{n}({x}_n|x_{1:n-1})
+\epsilon_{n}({x}_{n},x_{1:n-1})} \right\}\\+&\PP^\mathcal{A}_n(x_n|\tilde{x}_n\;\text{reject},x_{1:n-1})\cdot\sum_x (p_{n}({x}|x_{1:n-1})-{q_{n}({x}|x_{1:n-1})-\epsilon_{n}({x},x_{1:n-1})})_+. 
\end{aligned}
\end{equation}
By two cases discussion as in the proof of Theorem~\ref{thm:lower}, we have $\forall \;\mathcal{A}\in\mathcal{F}$, it follows $\forall\;n\in[T],x\in\mathcal{V}$ and $x_{1:n-1}$, $b_n(x,x_{1:n-1})\leq \min\left\{1, \frac{q_{n}({x} | x_{1:n-1})}{p_{n}({x} | x_{1:n-1})} \right\}$. Then we can directly solve \eqref{eqn:key1} to obtain 
\[
\PP^\mathcal{A}_n(x|\tilde{x}_n\;\text{reject},x_{1:n-1})=\frac{q_n(x|x_{1:n-1})-\min\left\{p_{n}({x}|x_{1:n-1}), {q_n({x}|x_{1:n-1})
+\epsilon_{n}({x},x_{1:n-1})} \right\}}{\sum_x (p_n(x|x_{1:n-1})-q_n(x|x_{1:n-1})-\epsilon_n(x|x_{1:n-1}))_+}.
\]
Lastly, we verify such a $\PP^\mathcal{A}_n(x_n|\tilde{x}_n\;\text{reject},x_{1:n-1})$ is a valid distribution. First of all, since $\epsilon_n=b_n\cdot p_n-q_n$, then $b_n(x,x_{1:n-1})\leq \min\left\{1, \frac{q_{n}({x} | x_{1:n-1})}{p_{n}({x} | x_{1:n-1})} \right\}$ implies $\epsilon_n(x,x_{1:n-1})\leq 0$, and this further implies
\begin{align*}
&q_n(x|x_{1:n-1})-\min\left\{p_{n}({x}|x_{1:n-1}), {q_n({x}|x_{1:n-1})
+\epsilon_{n}({x},x_{1:n-1})} \right\}\\
\geq& q_n(x|x_{1:n-1})-\min\left\{p_{n}({x}|x_{1:n-1}), {q_n({x}|x_{1:n-1})} \right\}\geq 0
\end{align*}
which implies $\PP^\mathcal{A}_n(x_n|\tilde{x}_n\;\text{reject},x_{1:n-1})\geq 0$. Second,
\begin{align*}
&\sum_x \PP^\mathcal{A}_n(x|\tilde{x}_n\;\text{reject},x_{1:n-1})=\sum_x \frac{q_n(x|x_{1:n-1})-\min\left\{p_{n}({x}|x_{1:n-1}), {q_n({x}|x_{1:n-1})
+\epsilon_{n}({x},x_{1:n-1})} \right\}}{\sum_x (p_n(x|x_{1:n-1})-q_n(x|x_{1:n-1})-\epsilon_n(x|x_{1:n-1}))_+}\\
=&\frac{1-\sum_x\min\left\{p_{n}({x}|x_{1:n-1}), {q_n({x}|x_{1:n-1})
+\epsilon_{n}({x},x_{1:n-1})} \right\}}{\sum_x (p_n(x|x_{1:n-1})-q_n(x|x_{1:n-1})-\epsilon_n(x|x_{1:n-1}))_+}\\
=&\frac{1+\sum_x\max\left\{-p_{n}({x}|x_{1:n-1}), -{q_n({x}|x_{1:n-1})
-\epsilon_{n}({x},x_{1:n-1})} \right\}}{\sum_x (p_n(x|x_{1:n-1})-q_n(x|x_{1:n-1})-\epsilon_n(x|x_{1:n-1}))_+}\\
=&\frac{\sum_x p_{n}({x}|x_{1:n-1})+\sum_x\max\left\{-p_{n}({x}|x_{1:n-1}), -{q_n({x}|x_{1:n-1})
-\epsilon_{n}({x},x_{1:n-1})} \right\}}{\sum_x (p_n(x|x_{1:n-1})-q_n(x|x_{1:n-1})-\epsilon_n(x|x_{1:n-1}))_+}\\
=&\frac{\sum_x\max\left\{0, p_{n}({x}|x_{1:n-1})-{q_n({x}|x_{1:n-1})
-\epsilon_{n}({x},x_{1:n-1})} \right\}}{\sum_x (p_n(x|x_{1:n-1})-q_n(x|x_{1:n-1})-\epsilon_n(x|x_{1:n-1}))_+}=1.
\end{align*}
This concludes the proof.
\end{proof}

\begin{lemma}\label{lem:def_epsilon}
For any $0\leq b_n(\tilde{x}_n,x_{1:n-1})\leq 1$, there exists $\epsilon_{n}(\tilde{x}_{n},x_{1:n-1})\in\mathbb{R}$ such that
\eqref{eqn:b_epi} holds true.
\end{lemma}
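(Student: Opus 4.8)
\textbf{Proof plan for Lemma~\ref{lem:def_epsilon}.}

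The claim is essentially that the map $\epsilon \mapsto \min\{1, (q_n(\tilde{x}_n\mid x_{1:n-1}) + \epsilon)/p_n(\tilde{x}_n\mid x_{1:n-1})\}$ has the whole interval $[0,1]$ in its range as $\epsilon$ ranges over $\RR$, so that for any prescribed value $b_n(\tilde{x}_n, x_{1:n-1}) \in [0,1]$ we can solve for $\epsilon_n(\tilde{x}_n, x_{1:n-1})$. The plan is simply to verify this by exhibiting the solution explicitly and checking it works. Fix a sequence $x_{1:n-1}$ and a token $\tilde{x}_n$, and abbreviate $p := p_n(\tilde{x}_n\mid x_{1:n-1})$, $q := q_n(\tilde{x}_n\mid x_{1:n-1})$, $b := b_n(\tilde{x}_n, x_{1:n-1})$.

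First I would dispose of the degenerate case $p = 0$: here $p_n(\cdot\mid x_{1:n-1})$ assigns zero mass to $\tilde{x}_n$, so this token is never drafted and the value of $b_n$ there is immaterial; one may set $\epsilon_n(\tilde{x}_n, x_{1:n-1})$ to be anything (say $0$), and \eqref{eqn:b_epi} holds vacuously, or one adopts the convention that $\min\{1, \cdot/0\} = 1$ and picks $\epsilon_n$ accordingly. Assuming now $p > 0$, I claim the choice $\epsilon_n(\tilde{x}_n, x_{1:n-1}) := b\,p - q$ works, which is exactly the formula the paper anticipates. Substituting, the right-hand side of \eqref{eqn:b_epi} becomes $\min\{1, (q + (bp - q))/p\} = \min\{1, bp/p\} = \min\{1, b\}$, and since $b \in [0,1]$ by hypothesis we have $\min\{1,b\} = b$, giving exactly $b_n(\tilde{x}_n, x_{1:n-1})$ as required. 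That is the entire argument.

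There is no real obstacle here — the lemma is a bookkeeping statement ensuring that the reparametrization $b_n \leftrightarrow \epsilon_n$ used throughout the lower-bound proof is well-defined, and the only thing to be slightly careful about is the $p = 0$ boundary case (and relatedly that $\epsilon_n$ is allowed to be any real number, in particular negative, which is why the range covers all of $[0,1]$ and not just $[\min\{1,q/p\},1]$). I would state the lemma's proof in two lines: define $\epsilon_n := b_n p_n - q_n$, plug in, and observe the $\min$ collapses to $b_n$ because $b_n \le 1$.
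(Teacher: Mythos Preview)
Your proposal is correct and follows essentially the same approach as the paper: set $\epsilon_n := b_n p_n - q_n$, substitute, and use $b_n \in [0,1]$ to collapse the $\min$. The paper handles the $p_n = 0$ case in one clause by noting that $\tilde{x}_n$ is sampled from $p_n(\cdot\mid x_{1:n-1})$ so $p_n(\tilde{x}_n\mid x_{1:n-1}) > 0$, which is exactly the observation behind your ``never drafted'' remark.
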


\begin{proof}[Proof of Lemma~\ref{lem:def_epsilon}]
Indeed, set $\epsilon_n=b_n\cdot p_n-q_n$, then 

\[
\min\left\{1, \frac{q_{n}(\tilde{x}_{n}|x_{1:n-1})+\epsilon_{n}(\tilde{x}_{n},x_{1:n-1})}{p_{n}(\tilde{x}_{n}|x_{1:n-1})} \right\}=\min\{1, b_n(\tilde{x}_n,x_{1:n-1})\}=b_n(\tilde{x}_n,x_{1:n-1})
\]

where the first equal sign uses that $\tilde{x}_n$ is sampled from $p_n(\cdot|x_{1:n-1})$ so $p_n(\tilde{x}_n|x_{1:n-1})>0$, and the second equal sign uses $0\leq b_n\leq 1$.
\end{proof}

\begin{lemma}\label{lem:positive_rej_prob}
For any instance $\mathcal{P}=(p,q)$, let $\mathcal{F}:=\{\mathcal{A}: \mathcal{A} \;\text{is a realization of Algorithm~\ref{alg:general_sampling} s.t.} \; \PP^\mathcal{A}_t=q_t\;\forall t \; (\text{i.e., unbiased})\}$. Suppose there is an $\mathcal{A}\in\mathcal{F}$ such that $\mathbb{E}^{\mathcal{A}}_\mathcal{P}\left[\nrej\right]<\mathfrak{C}(\mathcal{P})$, then there exists a $b_n$ in Line 8 of Template~\ref{alg:general_sampling} such that $\exists x,x_{1:n-1}$
\[
b_n(x | x_{1:n-1}) > \min\left\{1, \frac{q_{n}(x | x_{1:n-1})}{p_{n}(x | x_{1:n-1})} \right\}.
\]
\end{lemma}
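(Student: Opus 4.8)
The plan is to prove the contrapositive-free version directly: assume $\mathcal{A} \in \mathcal{F}$ has $\mathbb{E}^{\mathcal{A}}_\mathcal{P}[\nrej] < \mathfrak{C}(\mathcal{P})$, and exhibit some step $n$ and history $x_{1:n-1}$ (and token $x$) where the acceptance probability strictly exceeds the speculative-decoding threshold $\min\{1, q_n(x|x_{1:n-1})/p_n(x|x_{1:n-1})\}$. The natural route is to compute, for a fixed history $x_{1:n-1}$, the per-step conditional rejection probability $\PP^{\mathcal{A}}_n(\tilde{x}_n\text{ reject}\mid x_{1:n-1})$ in terms of $b_n$, and show that whenever $b_n(\cdot, x_{1:n-1}) \le \min\{1, q_n/p_n\}$ pointwise, this conditional rejection probability is at least $\dtv(p_n, q_n)(\cdot|x_{1:n-1})$. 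Summing over $n$ against the law of $x_{1:n-1}$ (which is $q$ on the validated sequence, by unbiasedness, Theorem~\ref{thm:exp_rej}(2) applied to $\mathcal{A}$) then gives $\mathbb{E}^{\mathcal{A}}_\mathcal{P}[\nrej] \ge \mathfrak{C}(\mathcal{P})$, a contradiction; hence strict violation of the threshold must occur somewhere.

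First I would record the identity, valid for any realization of Algorithm~\ref{alg:general_sampling} and any history, that
\[
\PP^{\mathcal{A}}_n(\tilde{x}_n\text{ reject}\mid x_{1:n-1}) = 1 - \sum_{x} b_n(x, x_{1:n-1})\, p_n(x\mid x_{1:n-1}),
\]
which follows exactly as in the displayed computation in the proof of Theorem~\ref{thm:lower} (draft is sampled from $p_n$, then accepted with probability $b_n$). Then, under the standing assumption $b_n(x, x_{1:n-1}) \le \min\{1, q_n(x|x_{1:n-1})/p_n(x|x_{1:n-1})\}$ for every $x$, we have $b_n(x,x_{1:n-1}) p_n(x|x_{1:n-1}) \le \min\{p_n(x|x_{1:n-1}), q_n(x|x_{1:n-1})\}$, so
\[
\PP^{\mathcal{A}}_n(\tilde{x}_n\text{ reject}\mid x_{1:n-1}) \ge 1 - \sum_x \min\{p_n, q_n\}(x|x_{1:n-1}) = \dtv(p_n, q_n)(\cdot|x_{1:n-1}),
\]
using the standard identity $\dtv = 1 - \sum_x \min$. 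Next I would take expectations: $\mathbb{E}^{\mathcal{A}}_\mathcal{P}[\nrej] = \sum_{n=1}^T \mathbb{E}[\, \PP^{\mathcal{A}}_n(\text{reject at }n \mid x_{1:n-1})\,]$ by linearity and the tower property, and since $\mathcal{A}$ is unbiased the validated history $x_{1:n-1}$ is distributed according to $q$; combining with the pointwise bound above yields $\mathbb{E}^{\mathcal{A}}_\mathcal{P}[\nrej] \ge \sum_{n=1}^T \mathbb{E}_{x_{1:n-1}\sim q}[\dtv(p_n,q_n)(\cdot|x_{1:n-1})] = \mathfrak{C}(\mathcal{P})$. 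This contradicts the hypothesis $\mathbb{E}^{\mathcal{A}}_\mathcal{P}[\nrej] < \mathfrak{C}(\mathcal{P})$, so the pointwise bound on $b_n$ must fail for some $n$, $x$, $x_{1:n-1}$, which is the conclusion.

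The main obstacle is making the probabilistic bookkeeping precise: one must be careful that Algorithm~\ref{alg:general_sampling} can reject at step $n$ only if all earlier tokens in the current round were accepted, so the event $\{$reject at position $n\}$ is not simply "the $n$-th draft is rejected conditioned on $x_{1:n-1}$" in a naive way — rather, I want $\mathbb{E}[R_n \mid x_{1:n-1}] = \PP(\tilde{x}_n\text{ is the rejected one} \mid x_{1:n-1})$, and I must argue this conditional probability equals $1 - \sum_x b_n(x,x_{1:n-1}) p_n(x|x_{1:n-1})$ regardless of where the current round started. This is exactly the computation done in Theorem~\ref{thm:lower}'s proof for the validated-history conditioning, so I would reuse it verbatim; the only genuinely new point is the pointwise inequality step and the invocation of unbiasedness to fix the law of $x_{1:n-1}$ as $q$. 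A secondary subtlety is ensuring $T$ is finite so that the sum and the linearity-of-expectation argument are unconditionally valid, which holds by the fixed-horizon convention in Section~\ref{sec:back}.
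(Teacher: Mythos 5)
Your proposal is correct and is essentially the paper's own proof of this lemma: assume the pointwise bound $b_n \le \min\{1, q_n/p_n\}$ holds everywhere, compute $\mathbb{E}[R_n \mid x_{1:n-1}] = \sum_{\tilde{x}}(1-b_n)p_n$, lower-bound it by $\dtv(p_n, q_n)$, and take expectations with $x_{1:n-1} \sim q$ (by unbiasedness) to reach $\mathbb{E}^{\mathcal{A}}[\nrej] \ge \mathfrak{C}(\mathcal{P})$, contradicting the hypothesis. The subtlety you flag about the current round's starting point is real but resolves exactly as you say, since the draft at position $n$ always has conditional law $p_n(\cdot\mid x_{1:n-1})$ given the accepted history.
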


\begin{proof}[Proof of Lemma~\ref{lem:positive_rej_prob}]
Suppose for all $n,x,x_{1:n-1}$, $b_n(x | x_{1:n-1})\leq \min\left\{1, \frac{q_{n}(x | x_{1:n-1})}{p_{n}(x | x_{1:n-1})} \right\}$. We define random variables $R_n \in \{0, 1\}$ indicating whether the $n$-th token is rejected ($1$ is rejected). Therefore, the expected number of rejection is
\begin{align*}
\EE^{\mathcal{A}}\left[\sum_{n=1}^T R_n \right] = \sum_{n=1}^T \EE^{\mathcal{A}}[R_n] = \sum_{n=1}^T \EE_{x_{1:{n-1}}\sim\PP^\mathcal{A}}\left[\EE^{\mathcal{A}}[R_n | x_{1:{n-1}}]\right]=\sum_{n=1}^T \EE_{x_{1:{n-1}}\sim q}\left[\EE^{\mathcal{A}}[R_n | x_{1:{n-1}}]\right].
\end{align*}
Here the second to last equality comes from the tower property and the last equal signs is due to $\mathcal{A}$ is unbiased. Next, denote $\tilde{x}_n\sim p_n(\cdot|x_{1:n-1})$ to be the candidate token, we have
\begin{align*}
&\EE^{\mathcal{A}}[R_n | x_{1:n-1}] = \PP^{\mathcal{A}}[R_n=1 | x_{1:n-1}] =\sum_{\tilde{x}_n}\PP^{\mathcal{A}}[R_n=1,\tilde{x}_n | x_{1:n-1}]\\
=&\sum_{\tilde{x}_n}\PP^{\mathcal{A}}[R_n=1 |\tilde{x}_n, x_{1:n-1}]\PP^{\mathcal{A}}[\tilde{x}_n | x_{1:n-1}]
=\sum_{\tilde{x}_n}\PP^{\mathcal{A}}[R_n=1 |\tilde{x}_n, x_{1:n-1}]p_n[\tilde{x}_n | x_{1:n-1}]\\
=&\sum_{\tilde{x}_n}(1-b_n)\cdot p_n[\tilde{x}_n | x_{1:n-1}]\geq \sum_{\tilde{x}_n}(1-\min\left\{1, \frac{q_{n}(\tilde{x}_{n} | x_{1:n-1})}{p_{n}(\tilde{x}_{n} | x_{1:n-1})} \right\})\cdot p_n[\tilde{x}_n | x_{1:n-1}]\\
=&\sum_{\tilde{x}_n}[p_{n}(\tilde{x}_{n} | x_{1:n-1})- q_{n}(\tilde{x}_{n} | x_{1:n-1})]_+=\dtv{[(p_{n}(\cdot | x_{1:n-1})- q_{n}(\cdot | x_{1:n-1})]}
\end{align*}
and this implies 
\[
\mathbb{E}^{\mathcal{A}}_\mathcal{P}\left[\nrej\right]=\EE^{\mathcal{A}}\left[\sum_{n=1}^T r_n \right]\geq \sum_{n=1}^T \EE_{x_{1:{n-1}}\sim q}\left[\dtv{[(p_{n}(\cdot | x_{1:n-1})- q_{n}(\cdot | x_{1:n-1})]}\right]=\mathfrak{C}(\mathcal{P})
\]
contradicts $\mathbb{E}^{\mathcal{A}}_\mathcal{P}\left[\nrej\right]<\mathfrak{C}(\mathcal{P})$! This concludes the proof.

\end{proof}

\newpage 

\begin{algorithm}[H]
\caption{Batch Speculative Sampling }
\label{alg:batched_decoding}
\begin{algorithmic}[1]
\STATE{\bf Init}: Horizon $T$, Distributions $q_t$ and $p_t$, with $q=\PP^{\text{LLM}}$, $p=\PP^{\text{Draft}}$. Lookahead $K = \infty$.

\WHILE{$n < T$}
\STATE Reset $q_t=\PP_t^{\text{LLM}}\;\forall t$. $n_0=n$.
\FOR{$m =1:M$ \mycolor{$\diamond$Sample $M$ draft responses  in parallel$\diamond$}}  
\FOR{$t = n:T$}
\STATE{Sample $\tilde{x}^m_t \sim p_{t}(\cdot | x_{1:n-1}, \tilde{x}^m_{n:t-1})$.}
\ENDFOR
\ENDFOR
\STATE{Obtain logits $
q_{n}(\cdot | x_{1:n-1}), \;\dots,\;  q_{T}(\cdot | x_{1:n-1}, \tilde{x}^m_{n:T}),\; \forall m\in[M]
$ {in parallel} for $\tilde{x}^m_{n:T}$.
}
\STATE \mycolor{--------- $\diamond$ Verification Begins ---------}
\STATE Set {\sf Sample}$=${\sf False}.
\FOR{$m=1,\ldots,M$} 
\FOR{$t = n:T$}\label{line:reject_begin}
\STATE{Sample $r \sim {\sf Uniform}[0, 1]$.}
\IF{$r \leq \min\left\{1, \frac{q_{t}(\tilde{x}^m_{t} | x_{1:n-1}, \tilde{x}^m_{n:t-1})}{p_{t}(\tilde{x}^m_{t} | x_{1:n-1}, \tilde{x}^m_{n:t-1})} \right\}$}
\STATE{Accept with $x_{n} = \tilde{x}^m_t$. $n\leftarrow n+1$. {\sf Sample}$=${\sf True}.}
\ELSE
\IF{$t =n_0$}
\STATE Update $q_{n}\leftarrow \left[q_{n}(\cdot | x_{1:n-1}) - p_{n}(\cdot | x_{1:n-1})\right]_{+}$. Break. //Here $n_0$ equals $n$. 
\ELSE
\STATE{\mycolor{$\diamond$ Rejection} Sample $x_{n} \sim \left[q_{n}(\cdot | x_{1:n-1}) - p_{n}(\cdot | x_{1:n-1})\right]_{+}$. $n\leftarrow n+1$.  Break.}
\ENDIF
\ENDIF
\ENDFOR
\IF{{\sf Sample}$=$ {\sf TRUE}}
\STATE Break.
\ELSE
\STATE \mycolor{$\diamond$ Rejection} Sample $x_{n} \sim \left[q_{n}(\cdot | x_{1:n-1}) - p_{n}(\cdot | x_{1:n-1})\right]_{+}$. $n\leftarrow n+1$.  Break.
\ENDIF
\ENDFOR
\ENDWHILE
\end{algorithmic}
\end{algorithm}

\newpage
\section{Batch Speculative Decoding}\label{app:batch_SD}

We split the proofs for unbiasedness and rejections in two parts.

\begin{theorem}[Unbiasedness of Theorem~\ref{thm:exp_rej_batch}]\label{thm:batch_quality}
Denote the Algorithm~\ref{alg:batched_decoding} as $\mathcal{A}_{\mathcal{B}}$. For any sequence $x_{1:T}$ (where $x_i\in\mathcal{V}$), we have 
\[
\PP_T^{\mathcal{A}_{\mathcal{B}}}(x_{1},\ldots,x_T)=\PP^{\text{LLM}}_T(x_{1},\ldots,x_T).
\]
\end{theorem}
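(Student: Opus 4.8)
The plan is to prove unbiasedness of Batch Speculative Decoding by induction on the token position $n$, showing that conditioned on any verified prefix $x_{1:n-1}$, the next token $x_n$ produced by Algorithm~\ref{alg:batched_decoding} follows the target conditional $q(\cdot|x_{1:n-1})$; the joint claim then follows by the chain rule exactly as in the proof of Theorem~\ref{prop:spec_decoding_quality_app}. So the crux is the single-step statement: for a fixed context $c := x_{1:n-1}$, the distribution of the token emitted from this "verification round" equals $q(\cdot|c)$. Here I must be careful about what "the next token" means, because a round may accept a draft token (emitting it) and then continue verifying deeper tokens of the same response; the clean way is to condition on the event that the round has reached a node whose context is exactly $c$ (i.e. $n_0 \le n$ in the algorithm's notation), and argue that the token written into slot $x_n$ is a $q(\cdot|c)$-sample, regardless of whether we are at the root of the round ($n=n_0$) or deeper inside it. Both cases reduce to the same mechanism, since in either case the algorithm draws candidates from $p(\cdot|c)$, accepts $\tilde x$ with probability $\min\{1,q(\tilde x|c)/p(\tilde x|c)\}$ where $q$ here is the (possibly residual) target distribution at that node, and on total rejection of all available candidates samples from the residual $[q-p]_+$.

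The key computation is a "residual telescoping" argument. Introduce $q^1 = q(\cdot|c)$ and $q^{m+1} = [q^m - p(\cdot|c)]_+$ as in the theorem statement. I claim: conditioned on reaching context $c$, the probability that slot $x_n$ is filled with a particular value $x$ is
\[
\sum_{m=1}^{M}\Big(\prod_{j=1}^{m-1}\rho_j\Big)\,\PP\big(\text{$m$-th candidate accepted and equals }x \;\big|\; c, q^m\big)\;+\;\Big(\prod_{j=1}^{M}\rho_j\Big)\,[q^M - p]_+(x),
\]
where $\rho_j := \sum_{y}[p(y|c) - q^j(y|c)]_+ = 1 - \sum_y \min\{p(y|c), q^j(y|c)\}$ is the probability that the $j$-th candidate round on this node ends in rejection, and the acceptance-of-value-$x$ term equals $\min\{p(x|c), q^m(x|c)\}$. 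The point is the identity $\min\{p,q^m\} + [p - q^m]_+ \cdot [q^m - p]_+(\cdot)/\rho_m$ reconstructs... more precisely, one shows by a short induction on $m$ that after $m$ rejections the "remaining mass to be distributed" is exactly $q^{m+1}$ (up to the normalization captured by $\prod_{j\le m}\rho_j$), using $q^{m} = \min\{p,q^m\} + [q^m-p]_+$ and $[q^m - p]_+ = \rho_m \cdot q^{m+1}/\rho_m$... the bookkeeping here is the routine-but-fiddly part. The upshot is a telescoping sum collapsing to $q^1(x) = q(x|c)$: each accepted-candidate term contributes the "$\min$" piece of $q^m$, the leftover $[q^m-p]_+$ gets passed to level $m+1$, and the final $\prod_{j\le M}\rho_j \cdot q^{M+1}(x)$ term closes the telescope. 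I also need the base-case bookkeeping for $n = n_0$: the algorithm, instead of sampling on a terminal rejection at the root, updates $q_n \leftarrow [q_n - p_n]_+$ and loops to the next draft $m$ — this is precisely the "pass residual to level $m+1$" step, so the same telescoping applies; and if all $M$ drafts are exhausted at the root it samples from $[q^M - p]_+$ (the {\sf Sample}$=${\sf False} branch), matching the final term.

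Concretely, the steps I would carry out, in order: (1) State and prove the single-step lemma: for any context $c$ reached by the algorithm, $\PP^{\mathcal{A}_\mathcal{B}}(x_n = x \mid x_{1:n-1} = c) = q(x|c)$. (2) Prove it by establishing the "after $m$ rejections the residual target is $q^{m+1}$" invariant by induction on $m \in \{0,\dots,M\}$, using $q^{m+1} = [q^m - p]_+$ and the identity $q^m(x) = \min\{p(x|c), q^m(x|c)\} + [q^m(x|c) - p(x|c)]_+$; here I must verify that the two algorithmic branches — "deeper node, terminal rejection: sample $x_n \sim [q_n - p_n]_+$ and break" versus "root node $t = n_0$: update $q_n \leftarrow [q_n-p_n]_+$ and break (continue to draft $m{+}1$)" — are both consistent with the invariant (the former emits a token from the current residual; the latter carries the residual forward). (3) Sum the telescoping expression to get $q(x|c)$. (4) Conclude the joint statement: by induction on $T$, $\PP^{\mathcal{A}_\mathcal{B}}_{T}(x_{1:T}) = \PP^{\mathcal{A}_\mathcal{B}}_{T-1}(x_{1:T-1}) \cdot \PP^{\mathcal{A}_\mathcal{B}}(x_T | x_{1:T-1}) = \PP^{\text{LLM}}_{T-1}(x_{1:T-1}) \cdot q(x_T|x_{1:T-1}) = \PP^{\text{LLM}}_T(x_{1:T})$, with the base case $t=1$ handled by the single-step lemma applied to $c = x_0$. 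The main obstacle is step (2): carefully tracking which residual distribution is "active" at each point in the nested loops of Algorithm~\ref{alg:batched_decoding} — in particular the special-casing of the root node $n_0$ where rejection triggers a residual update rather than an emission — and confirming the loop structure genuinely realizes the telescoping I described, rather than double-counting or dropping mass on the {\sf Sample} flag transitions.
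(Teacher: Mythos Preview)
Your proposal is correct and follows essentially the same approach as the paper's proof: reduce to a single-step conditional statement, split into the ``root of the round'' ($t=n_0$, $M$ candidates with residual passing) versus ``deeper in a branch'' ($t>n_0$, one candidate, standard speculative decoding) cases, and close with the chain rule. The only cosmetic difference is that the paper organizes the root-node argument as a \emph{backward} induction on $m$---showing $\PP^{\mathcal{A}_\mathcal{B}}_n(\cdot \mid E_{0:m}, x_{1:n-1}) = q^{m+1}_n$ starting from the base case $m=M$---whereas you unroll the same recursion forward into the explicit telescoping sum $\sum_{m}(\prod_{j<m}\rho_j)\min\{p,q^m\} + (\prod_{j\le M}\rho_j)\,q^{M+1}$ and collapse it via $q^m = \min\{p,q^m\} + \rho_m q^{m+1}$; these are the same computation read in opposite directions.
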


\begin{proof}
\textbf{Step1:} Let $x_{1:n-1}$ be the accepted tokens up to $n-1$. We first show $\PP_{n}^{\mathcal{A}_{\mathcal{B}}}(x_n|x_{1:n-1})=\PP^{\text{LLM}}_{n}(x_{n}|x_{1:n-1})\;\forall x_n\in\mathcal{V}$.

We partition the generation of $x_n$ into two cases: (i). accepted as the first token of the $m$-th responses ($m=1,\ldots,M$), or rejected by all $M$ responses and sampled by Line 28 of Algorithm~\ref{alg:batched_decoding}; (ii). accepted/rejected as the $t$-th token of the $m$-th responses ($m=1,\ldots,M$) for $t\geq 2$. 

\textbf{For the second case.} Similar to the standard speculative decoding, let $\tilde{x}^m_n\sim p_{n}(\cdot|x_{1:n-1})$, then
\begin{align*}
&\PP_{n}^{\mathcal{A}_{\mathcal{B}}}(x_n|x_{1:n-1})=\PP_{n}^{\mathcal{A}_{\mathcal{B}}}(x_n,\tilde{x}^m_n\;\text{acc}|x_{1:n-1})+\PP_{n}^{\mathcal{A}_{\mathcal{B}}}(x_n,\tilde{x}^m_n\;\text{rej}|x_{1:n-1})\\
&=\PP_{n}^{\mathcal{A}_{\mathcal{B}}}(\tilde{x}^m_n=x_n|x_{1:n-1})\PP_{n}^{\mathcal{A}_{\mathcal{B}}}(\tilde{x}^m_n\;\text{acc}|\tilde{x}^m_n=x_n,x_{1:n-1})\\
&+\PP_{n}^{\mathcal{A}_{\mathcal{B}}}(\tilde{x}^m_n\;\text{rej}|x_{1:n-1})\PP_{n}^{\mathcal{A}_{\mathcal{B}}}(x_n|\tilde{x}^m_n\;\text{rej},x_{1:n-1})\\
&=p_{n}(x_n|x_{1:n-1})\min\{1,\frac{q_{n}(x_n|x_{1:n-1})}{p_{n}(x_n|x_{1:n-1})}\}\\
&+\sum_{x'}\max\{0,q_{n}(x'|x_{1:n-1})-p_{n}(x'|x_{1:n-1})\}\frac{\max\{0,q_{n}(x_n|x_{1:n-1})-p_{n}(x_n|x_{1:n-1})\}}{\sum_{x'}\max\{0,q_{n}(x'|x_{1:n-1})-p_{n}(x'|x_{1:n-1})\}}=q_{n}(x_n|x_{1:n-1}).
\end{align*}
By the construction of Algorithm~\ref{alg:batched_decoding} (Line 3), when $x_n$ is accepted/rejected as the $t(\geq2)$-th draft token of certain response, we have $q_{n}(x_n|x_{1:n-1})=\PP^{\text{LLM}}_{n}(x_n|x_{1:n-1})$. This gives $\PP_{n}^{\mathcal{A}_{\mathcal{B}}}(x_n|x_{1:n-1})=\PP^{\text{LLM}}_{n}(x_{n}|x_{1:n-1})$.

\textbf{For the first case.} This part of the proof largely follows Theorem~4.2 of \cite{miao2023specinfer}. In this case, the $n$-th generated token $x_n$ has $M+1$ possibilities: accepted at the $m$-th response or rejected by all $M$ responses and sample at Line 28. Since the algorithm will iterate all $M$ responses if not accepted, we denote $q^m_{n}$ as the $q_{n}$ for the $m$-th response. Then we have the recursion
\[
q^{m+1}_{n}=\frac{\max\{0,q^{m}_{n}-p_{n}\}}{r_m},
\]
where $q^{1}_{n}=\PP^{\text{LLM}}_{n}$ and $r_m$ is the rejection probability satisfies 
\begin{align*}
r_m=&1-\PP_{n}^{\mathcal{A}_{\mathcal{B}}}(\tilde{x}^m_n\;\text{acc}|x_{1:n-1})=1-\sum_{x}\PP_{n}^{\mathcal{A}_{\mathcal{B}}}(\tilde{x}^m_n=x|x_{1:n-1})\PP_{n}^{\mathcal{A}_{\mathcal{B}}}(\tilde{x}^m_n\;\text{acc}|\tilde{x}^m_n=x,x_{1:n-1})\\
=&1-\sum_x p_{n}(x|x_{1:n-1})\min\{1,\frac{q^m_{n}(x|x_{1:n-1})}{p_{n}(x|x_{1:n-1})}\}=\sum_{x}\max\{0,q^m_{n}(x|x_{1:n-1})-p_{n}(x|x_{1:n-1})\}.
\end{align*}
Denote $E_m=\{m\text{-th response rejected}\}$ and $E_{1:m}=\{1:m\text{-th responses all rejected}\}$, then
\begin{align*}
&\PP_{n}^{\mathcal{A}_{\mathcal{B}}}(x_n|x_{1:n-1})=\PP_{n}^{\mathcal{A}_{\mathcal{B}}}(x_n,E^c_1|x_{1:n-1})+\PP_{n}^{\mathcal{A}_{\mathcal{B}}}(x_n,E_1|x_{1:n-1})\\
=&\min\{p_{n}(x_n|x_{1:n-1}),q^1_{n|}(x_n|x_{1:n-1})\}+\PP_{n}^{\mathcal{A}_{\mathcal{B}}}(x_n,E_1|x_{1:n-1})\\
=&\min\{p_{n}(x_n|x_{1:n-1}),q^1_{n}(x_n|x_{1:n-1})\}+r_1\cdot\PP_{n}^{\mathcal{A}_{\mathcal{B}}}(x_n|E_1,x_{1:n-1})
\end{align*}
Denote $\PP_{n}^{\mathcal{A}_{\mathcal{B}}}(x_n|x_{1:n-1}):=\PP_{n}^{\mathcal{A}_{\mathcal{B}}}(x_n|E_0,x_{1:n-1})$, by similar calculation we have in general 
\[
\PP_{n}^{\mathcal{A}_{\mathcal{B}}}(x_n|E_{0:m-1},x_{1:n-1})=\min\{p_{n}(x_n|x_{1:n-1}),q^m_{n|}(x_n|x_{1:n-1})\}+r_m\cdot\PP_{n}^{\mathcal{A}_{\mathcal{B}}}(x_n|E_{0:m},x_{1:n-1}).
\]
Next we prove $\PP_{n}^{\mathcal{A}_{\mathcal{B}}}(\cdot|E_{0:m},x_{1:n-1})=q^{m+1}_{n}(\cdot|x_{1:n-1})\;\forall m \in\{0,1,\ldots,M\}$ by backward induction. First of all, $\PP_{n}^{\mathcal{A}_{\mathcal{B}}}(\cdot|E_{0:M},x_{1:n-1})=\left[q^M_{n+1}(\cdot | x_{1:n-1}) - p_{n+1}(\cdot | x_{1:n-1})\right]_{+}=\frac{\max\{0,q^{M}_{n}-p_{n}\}(\cdot|x_{1:{n-1}})}{r_M}=q^{M+1}_{n}$. Suppose $\PP_{n}^{\mathcal{A}_{\mathcal{B}}}(\cdot|E_{0:m+1},x_{1:n-1})=q^{m+2}_{n}(\cdot|x_{1:n-1})$, then 
\begin{align*}
&\PP_{n}^{\mathcal{A}_{\mathcal{B}}}(\cdot|E_{0:m},x_{1:n-1})=\PP_{n}^{\mathcal{A}_{\mathcal{B}}}(\cdot,E^c_{m+1}|E_{0:m},x_{1:n-1})+\PP_{n}^{\mathcal{A}_{\mathcal{B}}}(\cdot,E_{m+1}|E_{0:m},x_{1:n-1})\\
=&\min\{p_{n}(\cdot|x_{1:n-1}),q^{m+1}_{n}(\cdot|x_{1:n-1})\}+\PP_{n}^{\mathcal{A}_{\mathcal{B}}}(\cdot,E_{m+1}|E_{0:m},x_{1:n-1})\\
=&\min\{p_{n}(\cdot|x_{1:n-1}),q^{m+1}_{n}(\cdot|x_{1:n-1})\}+r_{m+1}\cdot\PP_{n}^{\mathcal{A}_{\mathcal{B}}}(\cdot|E_{0:m+1},x_{1:n-1})\\
=&\min\{p_{n}(\cdot|x_{1:n-1}),q^{m+1}_{n}(\cdot|x_{1:n-1})\}+\max\{0,q^{m+1}_{n}-p_{n}\}\equiv q^{m+1}_{n}.
\end{align*}
In particular, we take $m=0$ to obtain
\[
\PP_{n}^{\mathcal{A}_{\mathcal{B}}}(\cdot|x_{1:n-1})=q^{1}_{n}(\cdot|x_{1:n-1})=\PP^{\text{LLM}}_{n}(\cdot|x_{1:n-1}).
\]
Combining both cases we finish the proof of Step1.

\textbf{Step2:} For any $n$, first of all we have 
$
\PP^{\mathcal{A}_{\mathcal{B}}}(x_0)=\PP^{\text{LLM}}(x_0)
$ since $x_0$ is the prompt. Suppose 
$\PP_{n-1}^{\mathcal{A}_{\mathcal{B}}}(x_{1:n-1})=\PP^{\text{LLM}}_{n-1}(x_{1:n-1}),\;\forall x_{1:n-1}$, then by Step1
\begin{align*}
\PP_{n}^{\mathcal{A}_{\mathcal{B}}}(x_{1:n})=\PP_{n}^{\mathcal{A}_{\mathcal{B}}}(x_n|x_{1:n-1})\PP_{n-1}^{\mathcal{A}_{\mathcal{B}}}(x_{1:n-1})=\PP_{n}^{\mathcal{A}_{\mathcal{B}}}(x_n|x_{1:n-1})\PP_{n-1}^{\text{LLM}}(x_{1:n-1})=\PP^{\text{LLM}}_{n}(x_{1:n})
\end{align*}
where the second equal sign is by induction and the third equal sign is by Step1. This finish the proof.

\end{proof}

\subsection{Expected Rejections for Batch Speculative Decoding (Proof for the second part of Theorem~\ref{thm:exp_rej_batch})}\label{app:bsd_er}

In this section, we derive the number of expected rejections using the batch speculative decoding. However, the analysis is more involved than the Algorithm~\ref{alg:batched_decoding} due to the parallel response structure, since, given the verified token $x_{1:n-1}$, the probability of $n$-th token being rejected does not possess a unified expression and depends on the location of $x_{n-1}$. We detail this below.

We recall the notion in Theorem~\ref{thm:exp_rej} that random variables $R_n \in \{0, 1\}$ indicates whether the $n$-th token is rejected ($1$ is rejected). Therefore, the expected number of rejection is
\begin{equation}\label{eqn:exp_rej_app}
\EE\left[\sum_{n=1}^T R_n \right] = \sum_{n=1}^T \EE[R_n] = \sum_{n=1}^T \EE\left[\EE[R_n | x_{1:{n-1}}]\right],
\end{equation}
where the last equality comes from the tower property of expectation and we assume $x_0$ is a given initial token and $x_{1:0} = \{x_0\}$. Then
\begin{align*}
\EE[R_n|x_{1:n-1}]=\PP^{\mathcal{A}}(n\text{-th token rej}|x_{1:n-1})=1-\PP^{\mathcal{A}}(n\text{-th token acc}|x_{1:n-1}).
\end{align*}
In this scenario, we cannot obtain $\PP^{\mathcal{A}}(n\text{-th token rej}|x_{1:n-1})= \dtv(p_n(\cdot | x_{1:n-1}), \PP^{\text{LLM}}_n(\cdot | x_{1:n-1}))$ since the conditional rejection probability implicitly encodes the location of $(n-1)$-th token: whether $\tilde{x}_{n-1}\sim p(\cdot|x_{1:n-1})$ is rejected (at the root of the tree) or $\tilde{x}_{n-1}\sim p(\cdot|x_{1:n-1})$ is accepted (at the branch of the tree). To formalize this, given validated token $x_{1:n-1}$, we denote $q^m_{n}(\cdot|x_{1:n-1})$ to be the $m$-th rejection distribution, then by the construction of Algorithm~\ref{alg:batched_decoding} (Line 19), 
\[
q^{m+1}_{n}=\frac{\max\{0,q^{m}_{n}-p_{n}\}}{r_m},\quad \forall m\in [M].
\]
Here $r_m=\sum_{x'}\max\{0,q^m_{n}(x'|x_{1:n-1})-p_{n}(x'|x_{1:n-1})\}$ is normalizing factor and $q^1_{n}=\PP^{\text{LLM}}$. Let $\tilde{x}_n\sim p(\cdot|x_{1:n-1})$, then $\PP^{\mathcal{A}}(n\text{-th token acc}|x_{1:n-1})=\PP^{\mathcal{A}}(\tilde{x}_n\text{ acc}|x_{1:n-1})$. Next, we compute the quantity $\PP^{\mathcal{A}}(\tilde{x}_n\text{ acc}|x_{1:n-1})$.

We begin by first considering $\PP^{\mathcal{A}}(\tilde{x}_n\text{ acc},\tilde{x}_n=x_n|x_{1:n-1})$. Let $\tilde{x}_{n-1}\sim p(\cdot|x_{1:n-2})$, then there are two cases: $\tilde{x}_{n-1}$ accepted or rejected. We have
\begin{equation}\label{eqn:ptotal}
\begin{aligned}
&\PP^{\mathcal{A}}(\tilde{x}_n\text{ acc},\tilde{x}_n=x_n|x_{1:n-1})=\PP^{\mathcal{A}}(\tilde{x}_n\text{ acc},\tilde{x}_n=x_n,\tilde{x}_{n-1} \text{acc}|x_{1:n-1})+\PP^{\mathcal{A}}(\tilde{x}_n\text{ acc},\tilde{x}_n=x_n,\tilde{x}_{n-1} \text{rej}|x_{1:n-1})\\
&=\underbrace{\PP^{\mathcal{A}}(\tilde{x}_n\text{ acc},\tilde{x}_n=x_n|\tilde{x}_{n-1} \text{acc},x_{1:n-1})}_{p_a}\PP^{\mathcal{A}}(\tilde{x}_{n-1} \text{acc}|x_{1:n-1})\\
&+\underbrace{\PP^{\mathcal{A}}(\tilde{x}_n\text{ acc},\tilde{x}_n=x_n|\tilde{x}_{n-1} \text{rej},x_{1:n-1})}_{p_b}\PP^{\mathcal{A}}(\tilde{x}_{n-1} \text{rej}|x_{1:n-1})
\end{aligned}
\end{equation}
\textbf{For $p_a$.} Given that $\tilde{x}_{n-1}$ is accepted, $\tilde{x}_n=x_n$ can only be accepted as the $t$-th token within the certain response for $t\geq 2$. This is because $\tilde{x}_{n-1}$ is accepted and has to be at least the first token in the response. In this case, $q_{n}(\cdot|x_{1:n-1})=\PP_{n}^{\text{LLM}}(\cdot|x_{1:n-1})=q^1_{n}(\cdot|x_{1:n-1})$, then
\begin{equation}\label{eqn:pa}
\begin{aligned}
p_a=&\PP^{\mathcal{A}}(\tilde{x}_n\text{ acc}|\tilde{x}_n=x_n,\tilde{x}_{n-1} \text{acc},x_{1:n-1})\PP^{\mathcal{A}}(\tilde{x}_n=x_n|\tilde{x}_{n-1} \text{acc},x_{1:n-1})\\
=&\min\{1,\frac{q_{n}(x_n|x_{1:n-1})}{p_{n}(x_n|x_{1:n-1})}\}\cdot p_{n}(x_n|x_{1:n-1})=\min\{{p_{n}(x_n|x_{1:n-1})},{q_{n}(x_n|x_{1:n-1})}\}\\
=&\min\{{p_{n}(x_n|x_{1:n-1})},{q^1_{n}(x_n|x_{1:n-1})}\}.
\end{aligned}
\end{equation}
\textbf{For $p_b$.} Given that $\tilde{x}_{n-1}$ is rejected, $\tilde{x}_n=x_n$ can only be accepted as the first token within the certain response. This is because $\tilde{x}_{n-1}$ is rejected will restart the parallel tree. Then
\begin{equation}\label{eqn:pb}
\begin{aligned}
p_b=&\PP^{\mathcal{A}}(x_n|\tilde{x}_{n-1} \text{rej},x_{1:n-1})-\PP^{\mathcal{A}}(\tilde{x}_n\text{ rej},x_n|\tilde{x}_{n-1} \text{rej},x_{1:n-1})\\
=&\PP_{n}^{\text{LLM}}(x_n|x_{1:n-1})-\PP^{\mathcal{A}}(\tilde{x}_n\text{ rej},x_n|\tilde{x}_{n-1} \text{rej},x_{1:n-1})\\
=&\PP_{n}^{\text{LLM}}(x_n|x_{1:n-1})-\left(\prod_{m=1}^M r_m\right)q^{M+1}_{n}(x_n|x_{1:n-1}),
\end{aligned}
\end{equation}
where the first equal sign comes from: since $\tilde{x}_{n-1} \text{rej}$ implies $x_n$ represents the first token of the parallel tree, then it is identical to the proof of the first case of Step1 in Theorem~\ref{thm:batch_quality}. The second equal sign is from: $\tilde{x}_n$ is rejected means all $M$ responses since $x_n$ is the first token of the tree. The conditional rejection probability
\begin{align*}
&\PP(\tilde{x}^m_n \text{ rej}|\tilde{x}^{1:m-1}_n \text{ rej},x_{1:n-1})=1-\PP(\tilde{x}^m_n \text{ acc}|\tilde{x}^{1:m-1}_n \text{ rej},x_{1:n-1})\\
=&1-\sum_{x}\PP_{n}^{\mathcal{A}_{\mathcal{B}}}(\tilde{x}^m_n=x|\tilde{x}^{1:m-1}_n \text{ rej},x_{1:n-1})\PP_{n}^{\mathcal{A}_{\mathcal{B}}}(\tilde{x}^m_n\;\text{acc}|\tilde{x}^{1:m-1}_n \text{ rej},\tilde{x}^m_n=x,x_{1:n-1})\\
=&1-\sum_x p_{n}(x|x_{1:n-1})\min\{1,\frac{q^m_{n}(x|x_{1:n-1})}{p_{n}(x|x_{1:n-1})}\}=\sum_{x}\max\{0,q^m_{n}(x|x_{1:n-1})-p_{n}(x|x_{1:n-1})\}=r_m,
\end{align*}
so by chain rule, the total rejection probability is $\prod_{m=1}^M r_m$.

Plug \eqref{eqn:pa}, \eqref{eqn:pb} into \eqref{eqn:ptotal} to obtain

\begin{equation}\label{eqn:pt}
\begin{aligned}
\PP^{\mathcal{A}}(\tilde{x}_n\text{ acc},x_n|x_{1:n-1})=&\min\{{p_{n}(x_n|x_{1:n-1})},{q^1_{n}(x_n|x_{1:n-1})}\}\PP^{\mathcal{A}}(\tilde{x}_{n-1} \text{acc}|x_{1:n-1})\\
+&[q^1_{n}(x_n|x_{1:n-1})-(\prod_{m=1}^M r_m)\cdot q^{M+1}_{n}(x_n|x_{1:n-1})]\PP^{\mathcal{A}}(\tilde{x}_{n-1} \text{rej}|x_{1:n-1})
\end{aligned}
\end{equation}
which is equivalent to 
{\small
\begin{equation}\label{eqn:inter_batch_d}
\begin{aligned}
&q^1_{n}(x_n|x_{1:n-1})-\PP^{\mathcal{A}}(\tilde{x}_n\text{ rej},x_n|x_{1:n-1})=\min\{{p_{n}(x_n|x_{1:n-1})},{q^1_{n}(x_n|x_{1:n-1})}\}[1-\PP^{\mathcal{A}}(\tilde{x}_{n-1} \text{rej}|x_{1:n-1})]\\
+&[q^1_{n}(x_n|x_{1:n-1})-(\prod_{m=1}^M r_m)\cdot q^{M+1}_{n}(x_n|x_{1:n-1})]\PP^{\mathcal{A}}(\tilde{x}_{n-1} \text{rej}|x_{1:n-1})\\
\Leftrightarrow &\;\;\PP^{\mathcal{A}}(\tilde{x}_n\text{ rej},x_n|x_{1:n-1})=\max\{0,q^1_{n}(x_n|x_{1:n-1})-p_{n|}(x_n|x_{1:n-1}))\}\\
-&\left(\max\{0,q^1_{n}(x_n|x_{1:n-1})-p_{n|}(x_n|x_{1:n-1}))\}-(\prod_{m=1}^M r_m)q^{M+1}_{n}(x_n|x_{1:n-1})\right)\PP^{\mathcal{A}}(\tilde{x}_{n-1} \text{rej}|x_{1:n-1})\\
\Leftrightarrow &\;\;\PP^{\mathcal{A}}(\tilde{x}_n\text{ rej},x_n|x_{1:n-1})\PP^{\mathcal{A}}(x_{1:n-1})=\max\{0,q^1_{n}(x_n|x_{1:n-1})-p_{n}(x_n|x_{1:n-1}))\}\PP^{\mathcal{A}}(x_{1:n-1})\\
-&\left(\max\{0,q^1_{n}(x_n|x_{1:n-1})-p_{n}(x_n|x_{1:n-1}))\}-(\prod_{m=1}^M r_m)q^{M+1}_{n}(x_n|x_{1:n-1})\right)\PP^{\mathcal{A}}(\tilde{x}_{n-1} \text{rej},x_{n-1}|x_{1:n-2})\PP^{\mathcal{A}}(x_{1:n-2})\\
\Leftrightarrow &\;\;\PP^{\mathcal{A}}(\tilde{x}_n\text{ rej},x_n|x_{1:n-1})q^1_{n}(x_{1:n-1})=\max\{0,q^1_{n}(x_n|x_{1:n-1})-p_{n}(x_n|x_{1:n-1}))\}q^1_{n}(x_{1:n-1})\\
-&\left(\max\{0,q^1_{n}(x_n|x_{1:n-1})-p_{n}(x_n|x_{1:n-1}))\}-(\prod_{m=1}^M r_m)q^{M+1}_{n}(x_n|x_{1:n-1})\right)\PP^{\mathcal{A}}(\tilde{x}_{n-1} \text{rej},x_{n-1}|x_{1:n-2})q^1_{n}(x_{1:n-2}),
\end{aligned}
\end{equation}}
where the first line uses $\PP^{\mathcal{A}}(x_n|x_{1:n-1})=q^1_{n}(x_n|x_{1:n-1})$, the second equivalence uses Bayes rule, the third equivalence uses $\PP^{\mathcal{A}}(x_{1:n})=q^1(x_{1:n})=q^1_{n}(x_{1:n})$ again. Now denote $f(x_{1:n}):=\PP^{\mathcal{A}}(\tilde{x}_n\text{ rej},x_n|x_{1:n-1})q^1_{n}(x_{1:n-1})$, and sum over $x_{1:n}$ for the above to obtain
\begin{align*}
\Leftrightarrow &\;\;\EE_{x_{1:n-1}\sim q^1}[\PP^{\mathcal{A}}(\tilde{x}_n\text{ rej}|x_{1:n-1})]=\EE_{x_{1:n-1}\sim q^1}[\dtv(q^1(\cdot|x_{1:n-1},p(\cdot|x_{1:n-1})))]\\
-&\bar{\EE}_{x_{1:n-1}\sim f}\left[\dtv(q^1,p)(x_{1:n-1})-[\prod_{m=1}^M \dtv(q^m,p)(x_{1:n-1})]\right],
\end{align*}
where by \eqref{eqn:inter_batch_d} pseudo-measure $f$ satisfies $\forall x_{1:n}$
\begin{equation}\label{eqn:ff}
\begin{aligned}
&\;\;f(x_{1:n})=\max\{0,q^1_{n}(x_n|x_{1:n-1})-p_{n}(x_n|x_{1:n-1}))\}q^1_{n}(x_{1:n-1})\\
-&\left(\max\{0,q^1_{n}(x_n|x_{1:n-1})-p_{n}(x_n|x_{1:n-1}))\}-(\prod_{m=1}^M r_m)q^{M+1}_{n}(x_n|x_{1:n-1})\right)f(x_{1:n-1}).
\end{aligned}
\end{equation}
Here we used $r_m=\sum_{x}\max\{0,q^m_{n}(x|x_{1:n-1})-p_{n}(x|x_{1:n-1})\}=\dtv(q^m,p)(x_{1:n-1})$.

Plug the above back to \eqref{eqn:exp_rej_app}, we finally have
{
\begin{align*}
&\EE^{\mathcal{A}_{\mathcal{B}}}[\sum_{n=1}^T R_n]\\
=&\sum_{n=1}^T\EE_{x_{1:n-1}\sim q^1}[\dtv(q^1(\cdot|x_{1:n-1},p(\cdot|x_{1:n-1})))]\\
-&\sum_{n=1}^T\bar{\EE}_{x_{1:n-1}\sim f}\left[\dtv(q^1,p)(x_{1:n-1})-[\prod_{m=1}^M \dtv(q^m,p)(x_{1:n-1})]\right].
\end{align*}}

The formulation for $f$ is iteratively obtained in \eqref{eqn:ff}.

\subsection{Increasing batch size to inf doesn't help}\label{app:pf_co}

\begin{proposition}\label{prop:doesnt}
 Let $f^M$ be the $f$ in Theorem~\ref{thm:exp_rej_batch} with batch $M$, and let $f^\infty=\lim_{M\rightarrow \infty}f^M$. Then we have:
 \begin{itemize}
     \item $f^M(\cdot)\leq q^1(\cdot)$, $\forall M\in\NN$; $f^\infty(\cdot)\leq q^1(\cdot)$.
     \item $f^\infty(x_{1:n})=h(x_n|x_{<n})[q^1(x_{1:n-1})-f^\infty(x_{1:n-1})]$.
     \item $\lim_{M\rightarrow\infty}\EE[N_{\text{rej}}]
=\sum_{n=1}^T(\EE_{ q^1}[\dtv[q^1,p]]
-\bar{\EE}_{ f^\infty}[\dtv(q^1,p)] )$.
    \item $M\rightarrow\infty$, $\EE[\nrej] \nrightarrow 0$. This indicates increasing batch size to $\infty$ doesn't help.
 \end{itemize}
\end{proposition}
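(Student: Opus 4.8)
The plan is to handle the four claims in order, all resting on one reformulation of~\eqref{eqn:ff}. Write $a_n(x_n\mid x_{<n}):=\max\{0,q^1_n-p_n\}(x_n\mid x_{1:n-1})$, $r_m(x_{<n}):=\dtv(q^m,p)(x_{1:n-1})=\sum_x\max\{0,q^m_n-p_n\}(x\mid x_{1:n-1})$, $P_M(x_{<n}):=\prod_{m=1}^M r_m$, and $b^M_n(x_n\mid x_{<n}):=P_M\,q^{M+1}_n(x_n\mid x_{1:n-1})$; then~\eqref{eqn:ff} reads
\[
f^M(x_{1:n})=a_n\,q^1(x_{1:n-1})-(a_n-b^M_n)\,f^M(x_{1:n-1})=a_n\bigl(q^1(x_{1:n-1})-f^M(x_{1:n-1})\bigr)+b^M_n\,f^M(x_{1:n-1}),
\]
with the convention $f^M(x_0)=q^1(x_0)=1$ (forced by consistency of~\eqref{eqn:ff} at $n=1$). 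The one fact I record first is the identity $b^{M+1}_n=\max\{0,\,b^M_n-P_M\,p_n\}$, which follows from $r_{M+1}q^{M+2}=\max\{0,q^{M+1}-p\}$ after pulling $P_M\ge 0$ inside; with $b^1_n=a_n$ this shows $M\mapsto b^M_n$ is non-increasing and $0\le b^M_n\le a_n\le q^1_n(x_n\mid x_{<n})$.

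For the first bullet I induct on sequence length: if $0\le f^M(x_{1:n-1})\le q^1(x_{1:n-1})$, the two forms of the recursion with $0\le b^M_n\le a_n$ give $f^M(x_{1:n})\ge 0$ and $f^M(x_{1:n})\le a_n\,q^1(x_{1:n-1})\le q^1_n(x_n\mid x_{<n})\,q^1(x_{1:n-1})=q^1(x_{1:n})$; the base case is trivial. Since $b^M_n$ converges (bounded, monotone) and the recursion is affine in $\bigl(b^M_n,f^M(x_{1:n-1})\bigr)$, a second induction on length yields the pointwise limit $f^\infty:=\lim_M f^M$, and $f^\infty\le q^1$ by passing to the limit. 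For the second bullet I compute $b^\infty_n:=\lim_M b^M_n$: on $\{p_n(x_n\mid x_{<n})>0\}$ the iteration $b^{M+1}_n=\max\{0,b^M_n-P_M p_n\}$ either sends $b^M_n$ to $0$ in finitely many steps or, when $P_\infty>0$, decreases it by at least $P_\infty p_n>0$ per step, while if $P_\infty=0$ then $b^M_n\le P_M\to 0$; hence $b^\infty_n=0$ there. (Off $\mathrm{supp}(p_n)$ one has $b^M_n=q^1_n$ for all $M$, since $q^{m+1}_n=q^m_n/r_m$ there, so $b^\infty_n=q^1_n$; thus the clean form requires $p_n(\cdot\mid x_{1:n-1})$ fully supported on $\mathcal{V}$, as for softmax LLM outputs.) Under that condition $b^\infty\equiv 0$, and letting $M\to\infty$ in the recursion gives $f^\infty(x_{1:n})=h(x_n\mid x_{<n})\bigl(q^1(x_{1:n-1})-f^\infty(x_{1:n-1})\bigr)$ with $h:=\max\{0,q^1_n-p_n\}$.

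For the third bullet I start from Theorem~\ref{thm:exp_rej_batch}, note $\prod_{m=1}^M\dtv(q^m,p)(x_{1:n-1})=P_M\to 0$ (same argument) and $f^M\to f^\infty$, and pass to the limit inside the finite sums over $n\in[T]$ and over $\mathcal{V}^{n-1}$, giving $\lim_{M\to\infty}\EE[N_{\text{rej}}]=\sum_{n=1}^T\bigl(\EE_{q^1}[\dtv[q^1,p]]-\bar{\EE}_{f^\infty}[\dtv(q^1,p)]\bigr)$. For the fourth bullet, $f^\infty\le q^1$ makes every summand non-negative, so it suffices to isolate one strictly positive term: at $n=2$, using $f^\infty(x_1)=\lim_M b^M_1(x_1\mid x_0)=q^1(x_1)\,\mathbf{1}\{p_1(x_1\mid x_0)=0\}$, the contribution is $\sum_{x_1\in\mathrm{supp}(p_1)\cap\mathrm{supp}(q^1)}q^1(x_1)\,\dtv(q^1_2,p_2)(\cdot\mid x_1)$, strictly positive for any non-trivial instance; in the complementary regime $P_M\not\to 0$ the batch-improvement term of Theorem~\ref{thm:exp_rej_batch} stays strictly below $\sum_n\EE_{q^1}[\dtv[q^1,p]]$, again forcing $\lim_M\EE[\nrej]>0$. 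Either way, unboundedly many draft sequences cannot drive rejections to zero.

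The one genuine obstacle is the claim $P_M=\prod_m\dtv(q^m,p)\to 0$ on which bullets 2--4 lean: it says the residual iterates $q^{m+1}=\max\{0,q^m-p\}/r_m$ get absorbed into $\mathrm{supp}(p)$, and this fails in the degenerate disjoint-support case $r_m\equiv 1$. The route I would take is to note $\mathrm{supp}(q^{m+1})\subseteq\{q^m_n>p_n\}\subseteq\mathrm{supp}(q^m)$, so with $\mathcal{V}$ finite the supports stabilize to some $S$, on which the fixed-point relation forces $r_\infty=1-p(S)<1$ whenever $p$ is fully supported, whence $P_M\to 0$. Making this rigorous, and either excluding or separately carrying the disjoint-support corner (where $b^\infty$ survives and bullet~3's formula gains an extra term), is where the care concentrates; the rest is the bookkeeping sketched above.
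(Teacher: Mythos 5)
Your proposal is sound, and in several places it is actually more careful than the paper's own proof.

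For the first bullet, the paper uses the immediate probabilistic interpretation $f^M(x_{1:n})=\PP^{\mathcal{A}}(\tilde{x}_n\text{ rej},x_{1:n})\le\PP^{\mathcal{A}}(x_{1:n})=q^1(x_{1:n})$ (unbiasedness of the batch algorithm), a one-liner; your induction from the recursion via $0\le b^M_n\le a_n\le q^1_n$ is a different but equally valid route. The proofs of the remaining bullets are structurally similar, with one important exception that you have correctly identified: the paper's proof of the second bullet simply asserts that $P_M=\prod_{m=1}^M\dtv(q^m,p)\to 0$ as $M\to\infty$, offering only the parenthetical that $r_m=1$ iff $q^m$ and $p$ are disjoint. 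That does not establish $P_M\to 0$, because $r_m<1$ for every $m$ is compatible with $\prod_m r_m$ converging to a positive limit, and more to the point the supports of $q^m$ and $p$ can become disjoint after finitely many steps (e.g.\ $q^1=(0.5,0.3,0.2)$, $p=(0,0.5,0.5)$ gives $q^2=(1,0,0)$ disjoint from $p$, so $P_M=0.5$ for all $M\ge 1$ and $b^\infty_n=a_n\neq 0$). So the paper's second bullet is in fact false without an additional support condition, and you have located the exact culprit: the surviving term $b^\infty_n=\lim_M P_M\,q^{M+1}_n$ vanishes only on $\operatorname{supp}(p_n)$ (equivalently, whenever $\operatorname{supp}(q^1_n)\subseteq\operatorname{supp}(p_n)$, which holds for softmax outputs). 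Your argument that the supports of $q^m$ stabilize to some $S\subseteq\operatorname{supp}(p)$ with $r_m\to 1-p(S)<1$, hence $P_M\to 0$ geometrically, is the correct repair. For the fourth bullet the paper argues by contradiction (if $\lim_M\EE[\nrej]=0$ then $f^\infty\equiv q^1$, which together with the second bullet forces $q^1\equiv 0$), whereas you isolate a strictly positive $n=2$ term directly; both work, and both tacitly exclude the same degenerate corner $p\equiv q$ (where indeed $\EE[\nrej]=0$ for every $M$, so the claim as literally stated needs the implicit nondegeneracy assumption you flag). In short: your route is a legitimate alternative, it is more explicit about the $b^M_n$ iteration, and it surfaces a genuine gap in the paper's own argument rather than creating one.
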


\begin{proof}
\emph{First item:} Recall in the proof of Theorem~\ref{thm:exp_rej_batch}, $f$ is defined as
\begin{align*}
f(x_{1:n}):=&\PP^{\mathcal{A}}(\tilde{x}_n\text{ rej},x_n|x_{1:n-1})q^1_{n}(x_{1:n-1})\\
=&\PP^{\mathcal{A}}(\tilde{x}_n\text{ rej},x_n|x_{1:n-1})\PP^{\mathcal{A}}(x_{1:n-1})\\
=&\PP^{\mathcal{A}}(\tilde{x}_n\text{ rej},x_{1:n})\leq \PP^{\mathcal{A}}(x_{1:n})=q^1(x_{1:n}),
\end{align*}
where the second equality is due to Batch algorithm is unbiased (Theorem~\ref{thm:exp_rej_batch}).

\emph{Second item:} by 
$$
f(x_{1:n})=h(x_n|x_{<n})q^1_{n}(x_{1:n-1})-
[h(x_n|x_{<n})-(\prod_{m=1}^M \dtv(q^m,p)(x_{<n}))q^{M+1}(x_n|x_{<n})]f(x_{1:n-1}),
$$
since $\prod_{m=1}^M \dtv(q^m,p)(x_{<n})\rightarrow 0$ as $M\rightarrow 0$ (note $\dtv(q^m,p)(x_{<n})=1$ iff there is no overlap between $q^m$ and $p$), it implies $f^\infty(x_{1:n})=h(x_n|x_{<n})[q^1(x_{1:n-1})-f^\infty(x_{1:n-1})]$.

\emph{Third item:} Similar to the second item, it holds true via taking $M\rightarrow\infty$. For proving $\EE[N^\infty_{\text{rej}}]>0$, suppose $\EE[N^\infty_{\text{rej}}]=0$. Then it holds $q^1\equiv f^\infty$. By the second item, this further implies $q^1\equiv f^\infty =0$, which is impossible. 

\emph{Fourth item:} We prove by contradiction. Suppose $\EE[\nrej] \nrightarrow 0$, then by the first item and the third item this implies $q^1\equiv f^\infty$. Plug this back to the second item, this further implies $f^\infty\equiv 0$, so $q^1\equiv f^\infty\equiv 0$ is a contradiction ($q^1$ is a probability distribution)!

\end{proof}

\newpage
\section{Proofs of Section~\ref{sec:pf}}\label{app:tradeoff}

\subsection{Proof of Theorem~\ref{thm:sol_var_obj_main}}

For the sampling scheme where the acceptance probability $b_n$ goes beyond $ \min \{1, \frac{q_{n}({x} | x_{1:n-1})}{p_{n}({x} | x_{1:n-1})} \}$, there is a quality degradation for the output sequence as the algorithm is leaning towards the draft model (smaller model). In this case, the objective is to minimize the quality degradation via considering the $\dtv$ distance
\[
\min_{\mathcal{P}_n}\dtv[\PP^{\mathcal{A}}_{n}(\cdot|x_{1:n-1}),q_{n}(\cdot|x_{1:n-1})],\;\; \forall x_{1:n-1}\in\mathcal{V}^{n-1}.
\]
Under the above, via equation \eqref{eqn:key} the objective is equivalent to the following (note that according to Algorithm~\ref{alg:general_sampling} $\PP^\mathcal{A}_n(x_n|\text{draft token rejected},x_{1:n-1})=\mathcal{P}_n(x_n|x_{1:n-1})$ is an algorithmic choice)
\begin{equation}\label{eqn:var_obj}
\begin{aligned}
\min_{\mathcal{P}_n}\;\;&\frac{1}{2}\sum_x\left|q_{n}(x)-\min\{p_{n}(x),q_{n}(x)+\epsilon_n(x)\}-\mathcal{P}_n(x)\sum_{\tilde{x}}[p_{n}(\tilde{x})-q_{n}(\tilde{x})-\epsilon_n(\tilde{x})]_+\right|\\
s.t. \;\;& \;\; \sum_x \mathcal{P}_n(x)=1,\quad \mathcal{P}_n(x)\geq 0,\;\forall x\in\mathcal{V}.
\end{aligned}
\end{equation}
where we removed $x_{1:n-1}$ for notation simplicity, and recall again $\epsilon_n := b_np_n-q_n$. When $\sum_{\tilde{x}}[p_{n}(\tilde{x})-q_{n}(\tilde{x})-\epsilon_n(\tilde{x})]_+=0$, the objective degenerates to the constant (in $\mathcal{P}_n$) $\dtv(q_{n},p_{n})$. Therefore, for the rest of the section, we focus on the case where $\sum_{\tilde{x}}[p_{n}(\tilde{x})-q_{n}(\tilde{x})-\epsilon_n(\tilde{x})]_+>0$. We have the following Theorem that characterizes the solution of \eqref{eqn:var_obj}.

\begin{theorem}[Restatement of Theorem~\ref{thm:sol_var_obj_main}]\label{thm:sol_var_obj}
Suppose $\sum_{{x}}[p_{n}-q_{n}-\epsilon_n]_+({x})>0$. Define 
$$
A_n(x):=\frac{q_{n}(x)-\min\{p_{n}(x),q_{n}(x)+\epsilon_n(x)\}}{\sum_{\tilde{x}}[p_{n}(\tilde{x})-q_{n}(\tilde{x})-\epsilon_n(\tilde{x})]_+},
$$ 
and define the positive token set $A_+=\{x\in\mathcal{V}:A_n(x)\geq 0\}$ and the negative token set $A_-=\{x\in\mathcal{V}:A_n(x)< 0\}$. Then the set of optimal distributions of objective~\eqref{eqn:var_obj} is characterized as 
\[
\{\mathcal{P}^*_n: \mathcal{P}^*_n(x)=0,\forall x\in A_-;0\leq \mathcal{P}^*_n(x)\leq A_n(x),\forall x\in A_+;\sum_x \mathcal{P}^*_n(x)=1\},
\]
and the optimal value is 
\[
\text{Loss}_\dtv^*(b)=\frac{1}{2}\sum_x \left|q_{n}(x)-\min\{p_{n}(x),q_{n}(x)+\epsilon_n(x)\}\right|-\frac{1}{2}\sum_{{x}}[p_{n}({x})-q_{n}({x})-\epsilon_n({x})]_+.
\]
\end{theorem}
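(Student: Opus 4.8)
The plan is to follow the two-step route sketched in Appendix~\ref{app:sketch}. \textbf{Reformulation.} First I would record that, by the structure of Algorithm~\ref{alg:general_sampling} (the single-token version of equation~\eqref{eqn:key}), the algorithm's output law conditioned on $x_{1:n-1}$ is $\PP^{\mathcal{A}}(x)=b(x)p(x)+\mathcal{P}(x)\sum_{\tilde x}(1-b(\tilde x))p(\tilde x)$, so that $\dtv[\PP^{\mathcal{A}},q]=\frac12\sum_x\big|q(x)-b(x)p(x)-\mathcal{P}(x)\sum_{\tilde x}(1-b(\tilde x))p(\tilde x)\big|$; this is exactly the displayed equivalent objective. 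Setting $C:=\sum_{\tilde x}(1-b(\tilde x))p(\tilde x)>0$ (the case $C=0$ being the degenerate constant already noted before the theorem) and $A(x):=(q(x)-b(x)p(x))/C$, the crucial observation is the mass identity $\sum_xA(x)=\frac{1-\sum_xb(x)p(x)}{C}=\frac{C}{C}=1$, so $A$ sums to one but may have negative coordinates. Hence the objective equals $\frac{C}{2}\min_{\mathcal{P}}\sum_x|A(x)-\mathcal{P}(x)|$ over probability vectors $\mathcal{P}$, which is objective~\eqref{eqn:var_obj_equiv_main}.

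\textbf{Step 1 (support of every optimizer).} I would prove that any optimal $\mathcal{P}^*$ satisfies $\mathcal{P}^*(x)=0$ for all $x\in A_-$, by an exchange argument. Suppose instead $\mathcal{P}^*(x_0)>0$ for some $x_0\in A_-$. Because $\sum_{x\in A_+}A(x)=1+\sum_{x\in A_-}|A(x)|\ge 1> 1-\mathcal{P}^*(x_0)\ge\sum_{x\in A_+}\mathcal{P}^*(x)$, there must exist $x_1\in A_+$ with $\mathcal{P}^*(x_1)<A(x_1)$. Transferring a sufficiently small mass $\delta>0$ from $x_0$ to $x_1$ decreases the $x_0$-term by $\delta$ (since $A(x_0)<0\le\mathcal{P}^*(x_0)$) and the $x_1$-term by $\delta$ (since $\mathcal{P}^*(x_1)<A(x_1)$), strictly lowering $\sum_x|A(x)-\mathcal{P}(x)|$ — contradiction.

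\textbf{Step 2 (value and full characterization).} Using $\mathcal{P}^*|_{A_-}=0$, I would bound $\sum_x|A(x)-\mathcal{P}^*(x)|=\sum_{x\in A_-}|A(x)|+\sum_{x\in A_+}|A(x)-\mathcal{P}^*(x)|\ge\sum_{x\in A_-}|A(x)|+\big|\sum_{x\in A_+}A(x)-\sum_{x\in A_+}\mathcal{P}^*(x)\big|=2\sum_{x\in A_-}|A(x)|=\norm{A}_1-1$, where the last two equalities use $\sum_{x\in A_+}\mathcal{P}^*(x)=1$ and $\sum_{x\in A_+}A(x)=1+\sum_{x\in A_-}|A(x)|$. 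Equality forces every $A(x)-\mathcal{P}^*(x)$ with $x\in A_+$ to be nonnegative (their sum is $\ge0$), i.e. $0\le\mathcal{P}^*|_{A_+}\le A$; conversely any probability vector with $\mathcal{P}^*|_{A_-}=0$ and $0\le\mathcal{P}^*|_{A_+}\le A$ attains the bound, and such vectors exist since $\sum_{x\in A_+}A(x)\ge1$. This yields the claimed optimal set. Multiplying back by $C/2$ and using $C\norm{A}_1=\sum_x|q(x)-b(x)p(x)|$ together with $C=\sum_x(1-b(x))p(x)$ gives $\text{Loss}_\dtv^*(b)=\frac12\sum_x|q(x)-b(x)p(x)|-\frac12\sum_x(1-b(x))p(x)$, and $\norm{A}_1\ge|\sum_xA(x)|=1$ gives the stated non-negativity.

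\textbf{Main obstacle.} The delicate point is making Step 1 \emph{strict}: one cannot move mass to an arbitrary element of $A_+$, only to one with slack $\mathcal{P}^*(x_1)<A(x_1)$, and the existence of such an element is exactly where the mass identity $\sum_{x\in A_+}A(x)\ge1>\sum_{x\in A_+}\mathcal{P}^*(x)$ is invoked; everything else is routine $\ell_1$ bookkeeping via the triangle inequality. I would also separately reconcile the $b$-notation of Theorem~\ref{thm:sol_var_obj_main} with the $\epsilon_n=b_np_n-q_n$ notation of the restated Theorem~\ref{thm:sol_var_obj} — they agree since $q-bp=q-\min\{p,q+\epsilon\}$ and $\sum(1-b)p=\sum(p-q-\epsilon)_+$ — but this is a cosmetic step.
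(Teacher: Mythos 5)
Your proposal is correct and follows the same two-step structure as the paper's proof (Appendix~\ref{app:tradeoff}): Step~1 uses a mass-counting argument to produce a slack element $\check{x}\in A_+$ with $\mathcal{P}^*(\check{x})<A(\check{x})$ and then an exchange argument, and Step~2 is the $\ell_1$ lower bound $\|A\|_1-1$ with equality analysis. The only cosmetic difference is that you perturb by a small $\delta$ (each term drops by exactly $\delta$) whereas the paper moves the full mass $\mathcal{P}^*(\bar{x})$ and invokes the strict triangle inequality for opposite-sign terms; both rest on the same key existence step.
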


\begin{remark}
In the main context (Theorem~\ref{thm:sol_var_obj_main}) we define $A_n(x):=\frac{q_n(x)-b_n(x)p_n(x)}{\sum_{\tilde{x}}[1-b_n(\tilde{x})]p_n(\tilde{x})}$ and $\text{Loss}_\dtv^*(b)=\frac{1}{2}\sum_x |q_n(x)-b_n(x)p_n(x)|-\frac{1}{2}\sum_x(1-b_n(x))p_n(x)$. This is equivalent to the above since $\epsilon_n:=b_np_n-q_n$. 
\end{remark}

\begin{proof}[Proof of Theorem~\ref{thm:sol_var_obj}.]
In this case, note $\min\{p_{n}(x),q_{n}(x)+\epsilon_n(x)\}+[p_{n}({x})-q_{n}({x})-\epsilon_n({x})]_+\equiv p_{n}(x) $, we have
\begin{align}
\sum_x q_{n}(x)-\min\{p_{n}(x),q_{n}(x)+\epsilon_n(x)\}=\sum_{\tilde{x}}[p_{n}(\tilde{x})-q_{n}(\tilde{x})-\epsilon_n(\tilde{x})]_+.
\end{align}
By definition of $A_n$ then $\sum_x A_n(x)=1$, and the original objective \eqref{eqn:var_obj} can be equivalently written as 
\begin{equation}\label{eqn:var_obj_equiv}
\min_{\mathcal{P}_n} \;\; \frac{1}{2}\sum_{x} \left| A_n(x)-\mathcal{P}_n(x) \right|,\quad s.t.\; \sum_{x}\mathcal{P}_n(x)=1,\quad \mathcal{P}_n(x)\geq 0,\;\forall x\in\mathcal{P}_n. 
\end{equation}
We now find the solution of objective \eqref{eqn:var_obj_equiv} in two steps.

\textbf{Step1:} Let the positive token set $A_+=\{x\in\mathcal{V}:A_n(x)\geq 0\}$ and the negative token set $A_-=\{x\in\mathcal{V}:A_n(x)< 0\}$, then any optimal $\mathcal{P}^*_n$ must satisfy $\mathcal{P}^*_n(x)=0$ for all $x\in A_-$.

First, since $\sum_x A_n(x)=1$, it implies $A_+\neq \emptyset$. Suppose for some optimal $\mathcal{P}^*_n$, there exists $\bar{x}\in A_-$ such that $\mathcal{P}^*_n(\bar{x})>0$, then we show there exists $\check{x}\in A_+$ such that $A_n(\check{x})> \mathcal{P}^*_n(\check{x})$. Suppose this is not the case, i.e. $A_n({x})\leq \mathcal{P}^*_n({x})\;\forall x\in A_+$, then 
\begin{align*}
1=&\sum_x A_n(x)=\sum_{x\in A_-} A_n(x)+\sum_{x\in A_+} A_n(x)\leq A_n(\bar{x})+\sum_{x\in A_+} A_n(x)\\
< & \sum_{x\in A_+} A_n(x)\leq \sum_{x\in A_+} \mathcal{P}^*_n({x})\leq \sum_{x\in \mathcal{V}} \mathcal{P}^*_n({x})=1.
\end{align*}
Contradiction! Hence, there exists $\check{x}\in A_+$ such that $A_n(\check{x})> \mathcal{P}^*_n(\check{x})$.

Second, by $A_n(\check{x})> \mathcal{P}^*_n(\check{x})$, $-\mathcal{P}^*_n(\bar{x})<0$, and triangular inequality, we have 
$$
|-\mathcal{P}^*_n(\bar{x})|+|A_n(\check{x})-\mathcal{P}^*_n(\check{x})|>|A_n(\check{x})-\mathcal{P}^*_n(\check{x})-\mathcal{P}^*_n(\bar{x})|.
$$
Note $A_n(\bar{x})<0$, the above is equivalent to
\begin{equation}\label{eqn:inter_obj}
|A_n(\bar{x})-\mathcal{P}^*_n(\bar{x})|+|A_n(\check{x})-\mathcal{P}^*_n(\check{x})|> |A_n(\bar{x})|+|A_n(\check{x})-\mathcal{P}^*_n(\check{x})-\mathcal{P}^*_n(\bar{x})|.
\end{equation}
Now set
\[
\mathcal{P}'_n(x)=\begin{cases}
\mathcal{P}^*_n(x),\;\;x\notin \{\bar{x},\check{x}\},\\
0,\;\;\;x=\bar{x},\\
\mathcal{P}^*_n(\check{x})+\mathcal{P}^*_n(\bar{x}),\;\;x=\check{x},
\end{cases}
\]
then apply \eqref{eqn:inter_obj} we have 
\begin{align*}
&\sum_x |A_n({x})-\mathcal{P}^*_n({x})| =  \sum_{x\notin \{\bar{x},\check{x}\}} |A_n({x})-\mathcal{P}^*_n({x})|+|A_n(\bar{x})-\mathcal{P}^*_n(\bar{x})|+|A_n(\check{x})-\mathcal{P}^*_n(\check{x})|\\
= & \sum_{x\notin \{\bar{x},\check{x}\}} |A_n({x})-\mathcal{P}^{\prime}_n({x})|+|A_n(\bar{x})-\mathcal{P}^*_n(\bar{x})|+|A_n(\check{x})-\mathcal{P}^*_n(\check{x})|\\
> & \sum_{x\notin \{\bar{x},\check{x}\}} |A_n({x})-\mathcal{P}^{\prime}_n({x})|+ |A_n(\bar{x})|+|A_n(\check{x})-\mathcal{P}^*_n(\check{x})-\mathcal{P}^*_n(\bar{x})|\\
= & \sum_{x\notin \{\bar{x},\check{x}\}} |A_n({x})-\mathcal{P}^{\prime}_n({x})|+|A_n(\bar{x})-\mathcal{P}^{\prime}_n(\bar{x})|+|A_n(\check{x})-\mathcal{P}^{\prime}_n(\check{x})|=\sum_x |A_n({x})-\mathcal{P}^{\prime}_n({x})|.
\end{align*}
This contradicts $\mathcal{P}^*_n$ is the optimal solution! Therefore, for any optimal $\mathcal{P}^*_n$, it holds $\mathcal{P}^*_n(x)=0$ $\forall x\in A_-$.

\textbf{Step2:} We characterize the optimal solutions of the objective. Indeed, by Step1, any optimal solution $\mathcal{P}^*_n$ satisfies
\begin{align*}
&\sum_x |A_n({x})-\mathcal{P}^*_n({x})|=\sum_{x\in A_-} |A_n({x})-\mathcal{P}^*_n({x})|+\sum_{x\in A_+} |A_n({x})-\mathcal{P}^*_n({x})|\\
=&\sum_{x\in A_-} |A_n({x})|+\sum_{x\in A_+} |A_n({x})-\mathcal{P}^*_n({x})|\geq \sum_{x\in A_-} |A_n({x})|+ |\sum_{x\in A_+}A_n({x})-\sum_{x\in A_+}\mathcal{P}^*_n({x})|\\
=& \sum_{x\in A_-} |A_n({x})|+ |\sum_{x\in A_+}A_n({x})-1|=\sum_{x\in A_-} |A_n({x})|+ \sum_{x\in A_+}A_n({x})-1=\norm{A_n}_1-1.
\end{align*}
The inequality becomes equality if and only if $A_n({x})-\mathcal{P}^*_n({x})\geq 0$. Finally, recall the definition of $A_n$ we receive the optimal value for the original objective is 
\[
\frac{1}{2}\sum_x \left|q_{n}(x)-\min\{p_{n}(x),q_{n}(x)+\epsilon_n(x)\}\right|-\frac{1}{2}\sum_{{x}}[p_{n}({x})-q_{n}({x})-\epsilon_n({x})]_+.
\]

Replace $\epsilon_n$ by $b_n$ gives the desired result.

\end{proof}

\begin{remark}
The general optimization should follow 
\begin{equation}\label{eqn:objjjjjj}
\text{Loss}_\dtv^*(b_{1:T}):=\min_{\mathcal{P}}\dtv[\PP^{\mathcal{A}}(x_{1:T}),q(x_{1:T})],\quad where \; \mathcal{A}:=(b_{1:T},\mathcal{P}_{1:T}).
\end{equation}
Meanwhile, our current analysis only considers the single token setting. We mention that solving the \eqref{eqn:objjjjjj} is challenging as it corresponds to a high-dimensional discrete optimization with dimension $T$ and it might not have closed-form solutions in the general cases.
\end{remark}

\subsection{Proof of Theorem~\ref{prop:pareto}}

\begin{proof}
For an algorithm $\mathcal{A}$ with the rejection probability $b(\cdot)$. Let $\tilde{x}\sim p(\cdot)$, then the rejection probability is computed as 
\[\PP(\text{reject})=\sum_{\tilde{x}}\PP(\text{reject},\tilde{x})=\sum_{\tilde{x}}\PP(\text{reject}|\tilde{x})\PP(\tilde{x})=\sum_{\tilde{x}}(1-b(\tilde{x}))p(\tilde{x}).\]
Also, from Theorem~\ref{thm:sol_var_obj_main}
\[
\text{Loss}_\dtv^*(b)=\sum_x |q(x)-b(x)p(x)|-\sum_x(1-b(x))p(x).
\]
Next, we show
\[
\sum_x |q(x)-b(x)p(x)|+\sum_x(1-b(x))p(x)=\sum_x|p(x)-q(x)|.
\]
Indeed, since $\min\{1,\frac{q(x)}{p(x)}\}\leq b(x)\leq 1,\;\forall x\in\mathcal{V}$, then $b(x)p(x)\geq \min\{p(x),q(x)\}$. Then we prove the following stronger claim
\[
|q(x)-b(x)p(x)|+(1-b(x))p(x)=|p(x)-q(x)|.
\]
\begin{itemize}
    \item If $q(x)\geq p(x)$, then $1=\min\{1,\frac{q(x)}{p(x)}\}\leq b(x)\leq 1$ implies $b(x)=1$, so the above is equivalent to $|q(x)-p(x)|=|p(x)-q(x)|$ is always true;
    \item If $q(x) < p(x)$, then $b(x)p(x)\geq \min\{p(x),q(x)\}=q(x)$. In this case
    \[
    |q(x)-b(x)p(x)|+(1-b(x))p(x)=b(x)p(x)-q(x)+(1-b(x))p(x)=p(x)-q(x)=|p(x)-q(x)|.
    \]
\end{itemize} 
This concludes the proof.

\end{proof}

\newpage

\section{Numerical Simulation Details}\label{app:exp}

To validate the correctness of our theory, we provide the numeric simulations that are displayed in Figure~\ref{fig:tradeoff},\ref{fig:empirical_batch}. We model the distribution $p_{1:T}$ and $q_{1:T}$ to be two non-stationary Markov Chains with $7$ states/tokens. Instead of being $p(x_n|x_{1:n-1})$, for Markov Chain, the one step transition is Markovian with $p(x_n|x_{1:n-1})=p(x_n|x_{n-1})$. We set the random seed to be $10$. The prompt distribution $p_0$ is set to be {\sf Uniform} distribution. For Figure~\ref{fig:empirical_batch}, the true value is computed via Theorem~\ref{thm:exp_rej},\ref{thm:exp_rej_batch} respectively, and solid line is computed by 
\[
N_{rej}:=\frac{1}{N}\sum_{i=1}^N N^i_{rej}
\]
and the shaded regions are error bars. 

Below presents the simulation for horizon $T=100$. Left panel of Figure~\ref{fig:app} is the standard speculative decoding, the middle panel is batch speculative decoding with batch size $M=2$, and  Right panel shows the expected rejections with varying batch sizes $M$ computed from Theorem~\ref{thm:exp_rej}.

\begin{figure}[H]
    \begin{subfigure}{0.5\textwidth}
   \centering
\includegraphics[width=1\linewidth]{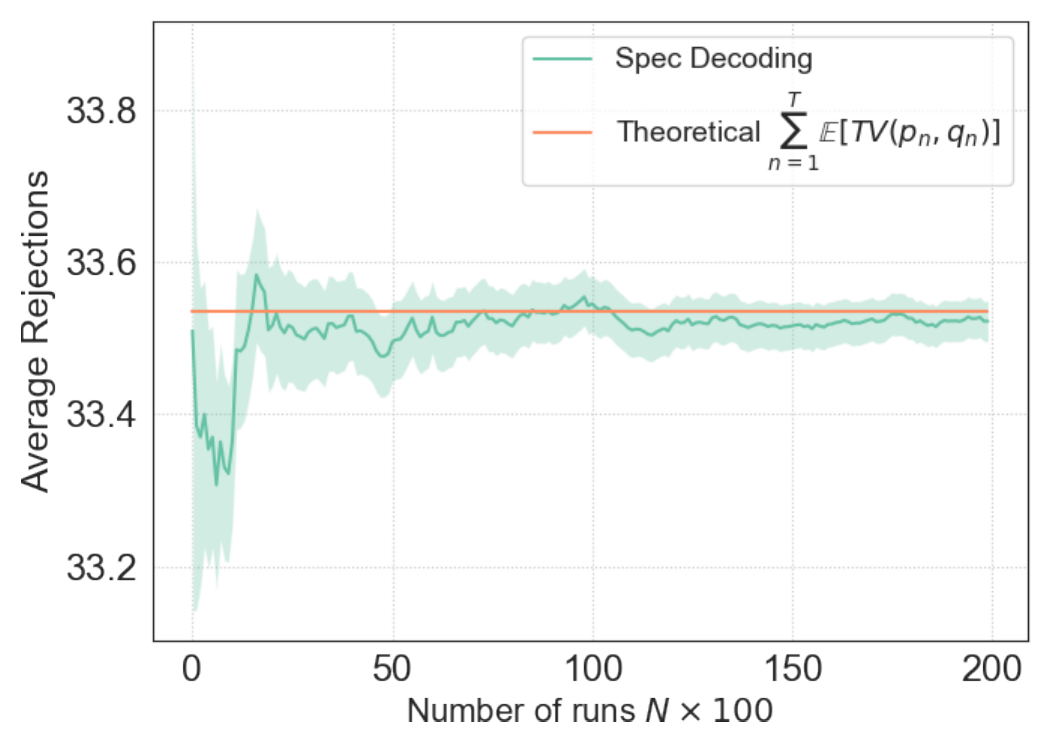}
    \end{subfigure}
      \begin{subfigure}{0.5\textwidth}
   \centering
\includegraphics[width=1\linewidth]{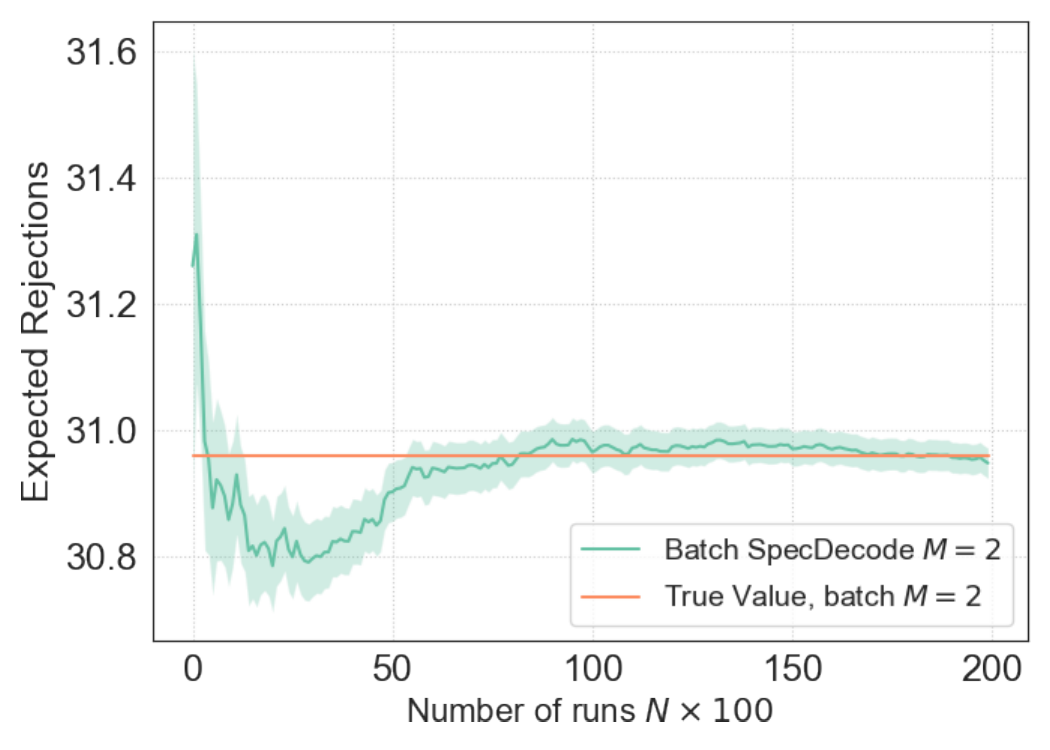}
\end{subfigure}
       \begin{subfigure}{0.5\textwidth}
   \centering
\includegraphics[width=1\linewidth]{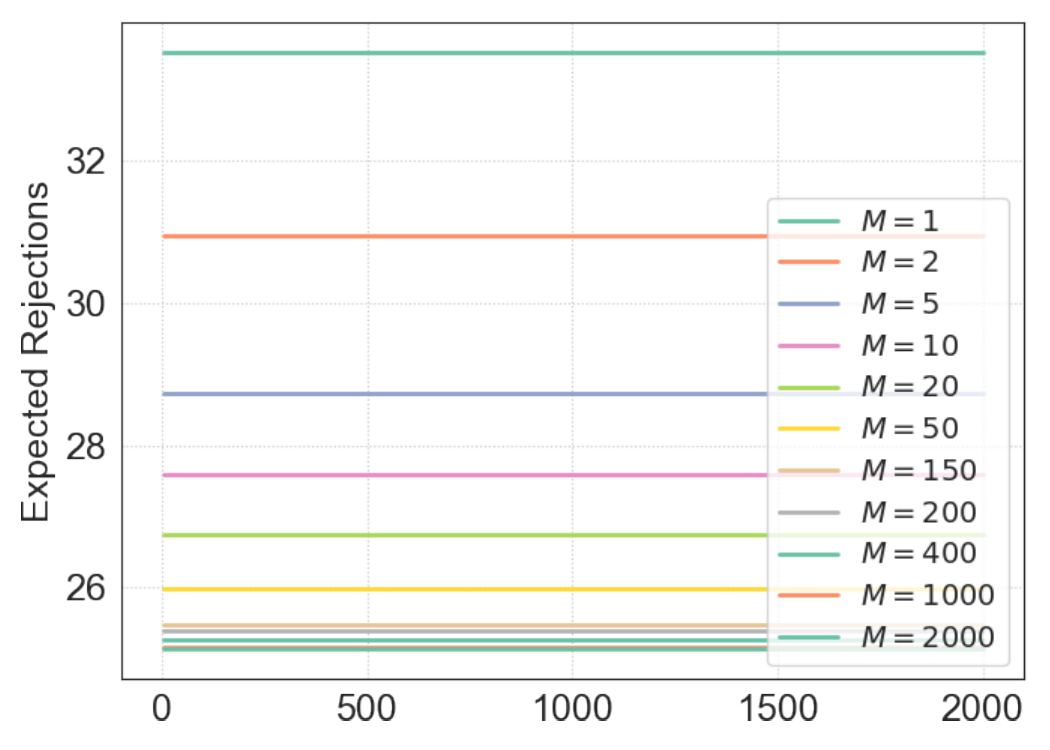}
\end{subfigure}
        \caption{A simulation of (Batch) Speculative Decoding with horizon $T=100$.}
  \label{fig:app}
\end{figure}

\newpage
\section{Details for Experiment Section~\ref{sec:exp}}\label{app:exp_add}

Consider objective \eqref{eqn:objj}
\begin{equation}\label{eqn:objj_new}
\text{Loss}_\dtv^*(b):=\min_{\mathcal{P}}\dtv[\PP^{\mathcal{A}},q],\quad where \; \mathcal{A}:=(b,\mathcal{P}).
\end{equation}

For any biased algorithm with the given acceptance probability $b> \min\{1,q/p\}$, we can rewrite:
\[
b = \min\{1,\frac{q+\epsilon}{p}\}, \quad \text{for some } \epsilon>0.
\]
We consider the following two decoding options for $\mathcal{P}$: 
\begin{itemize}
\item Baseline: select $\mathcal{P}$ to be suboptimal to \eqref{eqn:objj_new}, \emph{i.e.} simply set $\mathcal{P}:=q$, which is the target distribution;\footnote{To be rigorous, we mention we didn't choose $\mathcal{P}:=[q-p]_+$ as the baseline since $\mathcal{P}:=[q-p]_+$ might fall in the optimal solution sets of $\mathcal{P}^*$ defined in Theorem~\ref{thm:sol_var_obj_main}.} We call this method \texttt{Decoding-UNO}.

\item Set $\mathcal{P}:=\mathcal{P}^*$ to be the optimal solution of \eqref{eqn:objj_new}, whose solution is presented in Theorem~\ref{thm:sol_var_obj_main}. We call this method \texttt{Decoding-OPT}.
\end{itemize}

\textbf{Measuring performance.} Instead of TV distance, we measure the quality via WinRate in our experiment. Concretely, for each prompt, we let \texttt{Decoding-UNO} and \texttt{Decoding-OPT} to generate responses independently, and use score models to compare whose generation has higher quality. We specify draft model $p$ as \texttt{pythia-70m} and target model $q$ as \texttt{pythia-2.8b} from EleutherAI \cite{biderman2023pythia}. We apply the score model to be 
\texttt{RM-Mistral-7B} or \texttt{GPT-4}. We test 200 prompts from \texttt{Alpaca-Farm-Eval} Dataset \cite{dubois2023alpacafarm} with 500 responses/comparisons per prompt. For a given prompt, \texttt{Decoding-OPT} wins if for more than 250 comparisons, score model prefer its response\footnote{We mention for \texttt{RM-Mistral-7B} it output values, then the response with higer value wins. For \texttt{GPT-4}, it outputs preference.} over \texttt{Decoding-UNO}. Then the WinRate for each method is computed as $\# wins/200$.

\textbf{Sanity Check for the experiment.} To validate having smaller distance w.r.t. the large model $q$ (\texttt{pythia-2.8b}) indicates higher performance, we preform the WinRate test for decoding via \texttt{pythia-70m} only against decoding via \texttt{pythia-2.8b} only. Table~\ref{tab.validate.result} shows that there is a significant performance gap between large model and small model, therefore validate the legitimacy of the experiment in Table~\ref{tab.experiment.result}.

\begin{table}[!h]
    \centering
    \resizebox{0.55\columnwidth}{!}{%
    \begin{tabular}{lccc}
    \toprule
     & 
        {Method} & {\texttt{RM-Mistral-7B}} & {\texttt{GPT-4}}\\
    \midrule
 & \texttt{pythia-2.8b} & 64.5\% & 69\%   \\
     & \texttt{pythia-70m} & 35.5\% & 31\% \\
   
   \bottomrule
   \vspace{1em}
    \end{tabular}%
    }
    \caption{WinRate for \texttt{pythia-2.8b} vs \texttt{pythia-2.8b}.}
    \label{tab.validate.result}
\end{table}

\textbf{Implementation detail for Table~\ref{tab.experiment.result}.} Concretely, we leverage \texttt{EleutherAI/pythia-2.8b} and \texttt{EleutherAI/pythia-70m} from HuggingFace. To perform speculative decoding, we specify \texttt{assistant\_model=EleutherAI/pythia-70m} in the generation function for target model \texttt{EleutherAI/pythia-2.8b}. 

Notice that for the biased speculative decoding with $b(x) = \min\{1,\frac{q(x)+\epsilon}{p(x)}\}$, $A(x)$ in Theorem~\ref{thm:sol_var_obj_main} can equivalently expressed as $
A(x):=\frac{\max\{q(x)-p(x),-\epsilon\}}{\sum_{\tilde{x}}\max\{q(\tilde{x})-p(\tilde{x}),-\epsilon\}}
$, and we can select $\mathcal{P}^*:=[A]_+$ (recall $[\cdot]_+$ in Section~\ref{sec:back}), which satisfies $\mathcal{P}^*|_{A_-}(\cdot)=0;0\leq \mathcal{P}^*|_{A_+}(\cdot)\leq A(\cdot)$.

To implement \texttt{Decoding-UNO}, we modify the \texttt{\_speculative\_sampling} function in HuggingFace \texttt{transformers/generation/utils.py} file as follows\footnote{Note the HuggingFace code uses $p$ as target model and $q$ as the draft model, which is different from us.} (where variable \texttt{eps\_} is $\epsilon$ in Table~\ref{tab.experiment.result}). This is conducted in a single A100 GPU.

\newpage
\begin{Verbatim}[frame=single, rulecolor=\color{blue}]
def _speculative_sampling(
    candidate_input_ids,
    ......,
):
    
    ......
    
    #####-----
    ## The following modification happens at Line 4727
    ## of the original file 
    mode_ = 1 // mode_=1 denotes Decoding-OPT, else denotes Decoding-UNO
    eps_ = 0.1 // This is the epsilon in Table 1.
    
    _eps_ = eps_ * torch.ones(p_i.shape,dtype=torch.float32,\
    device=torch.device('cuda:0'))
    probability_ratio = (p_i + _eps_) / q_i
    

    #####-----

    ......
    
    if last_assistant_token_is_eos and n_matches == candidate_length:
        n_matches -= 1
        valid_tokens = new_candidate_input_ids[:, : n_matches + 1]
    else:
        n_matches = min(n_matches, max_matches)
        gamma = min(candidate_logits.shape[1], max_matches)
        p_n_plus_1 = p[:, n_matches, :]
        if n_matches < gamma:
            q_n_plus_1 = q[:, n_matches, :]
            ## The following modification happens at Line 4760
            ## of the original file 
            if mode_ == 1:
                ## The following two lines compute A(x)
                p_prime = torch.clamp((p_n_plus_1 - q_n_plus_1), min= -eps_)
                p_prime.div_(p_prime.sum())
                ## The following two lines compute P* = [A]_+
                p_prime = torch.clamp(p_prime, min= 0)
                p_prime.div_(p_prime.sum())

            else:
                ## Baseline Decoding-UNO
                p_prime = q_n_plus_1
\end{Verbatim}

\end{document}